\def\eqref#1{equation~\ref{#1}}
\def\1{\bm{1}}
\DeclareMathAlphabet{\mathsfit}{\encodingdefault}{\sfdefault}{m}{sl}
\SetMathAlphabet{\mathsfit}{bold}{\encodingdefault}{\sfdefault}{bx}{n}
\newcommand{\E}{\mathbb{E}}
\renewcommand{\hl}[1]{#1}
\theoremstyle{plain}
\newtheorem{theorem}{Theorem}[section]
\newtheorem{lemma}[theorem]{Lemma}
\newtheorem{corollary}[theorem]{Corollary}
\newtheorem{assumption}[theorem]{Assumption}
\theoremstyle{plain}
\theoremstyle{definition}
\crefname{theorem}{Theorem}{Theorems}
\Crefname{theorem}{Theorem}{Theorems}
\crefname{lemma}{Lemma}{Lemmas}
\Crefname{lemma}{Lemma}{Lemmas}
\crefname{proposition}{Proposition}{Propositions}
\Crefname{proposition}{Proposition}{Propositions}
\crefname{corollary}{Corollary}{Corollaries}
\Crefname{corollary}{Corollary}{Corollaries}
\crefname{assumption}{Assumption}{Assumptions}
\Crefname{assumption}{Assumption}{Assumptions}
\theoremstyle{definition}
\newtheorem{definition}[theorem]{Definition}
\theoremstyle{remark}
\newtheorem{remark}[theorem]{Remark}
\newcommand{\cH}{\mathcal{H}}
\newcommand{\cS}{\mathcal{S}}
\newcommand{\cT}{\mathcal{T}}
\newcommand{\cR}{\mathcal{R}}
\DeclareMathOperator{\tr}{tr}
\newcommand{\ip}[2]{\left\langle #1, #2 \right\rangle}
\newcommand{\norm}[1]{\left\lVert #1 \right\rVert}
\newcommand{\opnorm}[1]{\left\lVert #1 \right\rVert_{\mathrm{op}}}
\title{The Universal Weight Subspace Hypothesis}
\author{Prakhar Kaushik\thanks{Corresponding author: \texttt{prakhark2@gmail.com}}, Shravan Chaudhari\thanks{equal contribution}, Ankit Vaidya\footnotemark[2], Rama Chellappa, Alan Yuille \\
Department of Computer Science\\
Johns Hopkins University\\
Baltimore, MD, USA \\
\texttt{\{pkaushi1,schaud35,avaidya7,rchella4,ayuille1\}@jhu.edu} \\
\url{https://toshi2k2.github.io/unisub/}
}
\begin{document}

\maketitle

\begin{abstract}
\footnotesize
We show that deep neural networks trained across diverse tasks exhibit remarkably similar low-dimensional parametric subspaces.
We provide the first large-scale empirical evidence that demonstrates that neural networks systematically converge to shared spectral subspaces regardless of initialization, task, or domain. Through mode-wise spectral analysis of over 1100 models - including 500 Mistral-7B LoRAs, 500 Vision Transformers, and 50 LLaMA-8B models - we identify universal subspaces capturing majority variance in just a few principal directions. By applying spectral decomposition techniques to the weight matrices of various architectures trained on a wide range of tasks and datasets, we identify sparse, joint subspaces that are consistently exploited, within shared architectures across diverse tasks and datasets. Our findings offer new insights into the intrinsic organization of information within deep networks and raise important questions about the possibility of discovering these universal subspaces without the need for extensive data and computational resources. Furthermore, this inherent structure has significant implications for model reusability, multi-task learning, model merging, and the development of training and inference-efficient algorithms, potentially reducing the carbon footprint of large-scale neural models.
\end{abstract}

\begin{figure}[h]
  \centering
  \includegraphics[width=.98\textwidth]{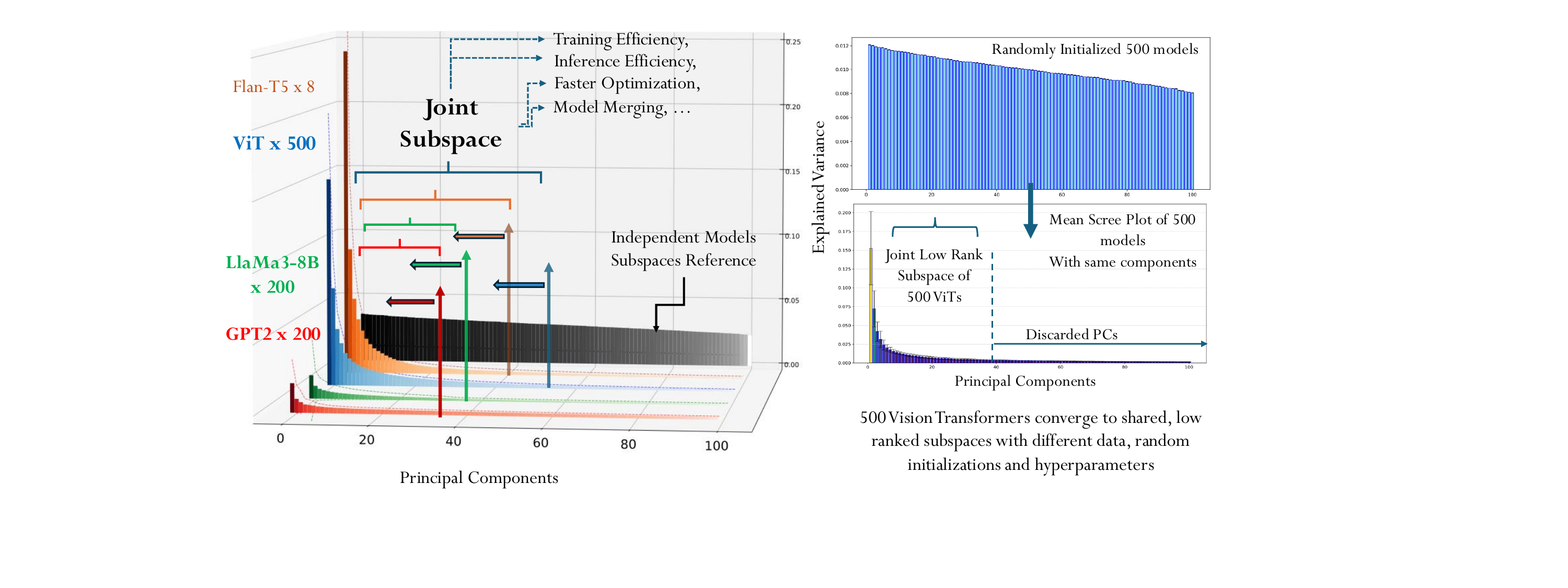}
  \label{fig:teaser}
  \caption{\footnotesize \textbf{Deep Networks Converge to Shared, Low-Rank (Universal) Subspaces.} Across distinct architectures and modalities, neural networks systematically learn to operate within remarkably similar low-dimensional parameter subspaces. \textbf{Left:} Principal component analysis of 200 GPT2, 500 Vision Transformers, 50 LLaMA-8B, and 8 Flan-T5 models reveals consistent sharp spectral decay - strong evidence that a small number of weight directions capture dominant variance despite vast differences in training data, objectives, and initialization. The black baseline (independent subspaces reference) represents the naive expectation that models would learn distinct directions; our empirical findings contradict this. \textbf{Right:} Strikingly, 500 randomly initialized ViT models converge to a common low-rank subspace, demonstrating this is a fundamental neural network property. This emergent structure unlocks powerful applications: parameter-efficient adaptation, efficient model merging, compressed storage, and accelerated training and inference. Further discussion in \cref{fig:teaserappx}.}

\end{figure}
\section{Introduction}
\label{sec:introduction}
\hl{We show that backpropagated neural networks trained on a variety of datasets - which could be disjoint and unrelated - diverse hyper-parameter settings, initializations and regularization methods, often learn an architecture-specific, layer-wise similar, low-rank joint subspaces (we refer to this as the Universal Subspace)}. We provide the first large-scale empirical analysis - across a diverse set of models - that neural networks tend to converge to these joint subspaces, largely independent of their initialization or the specific data used for training. Our study encompasses different model architectures trained on a variety of datasets, sometimes with different loss functions and tasks. Our spectral subspace analysis of the weights of all these models (Figure 1) suggests that although individual tasks appear to induce distinct subspaces, individually, they are all part of an unusually low-ranked joint subspace. Our work extends the scientific community's understanding of what neural networks learn. This universality could explain several puzzling neural properties: why overparameterized models with millions more parameters than training samples still generalize; how different initializations converge to similar representations; and why techniques like weight sharing and parameter-efficient fine-tuning succeed across architectures. If networks indeed learn within shared subspaces, this would provide a supporting explanation for implicit regularization, transferability, and the effectiveness of sparse training methods, while also opening up avenues for applications like efficient merging, new optimization techniques, faster and more efficient learning and inference.

Several lines of prior research have hinted at phenomena consistent with our joint (universal) subspace hypothesis. For example, Neural Tangent Kernel (NTK) theory demonstrates that, in the infinite-width limit, the training dynamics of deep networks are governed by a kernel largely invariant to task specifics~\citep{jacot2018ntk}. Similarly, research into mechanistic interpretability—specifically its own universality hypothesis~\citep{olah2020zoom, chughtai2023toymodeluniversalityreverse}—has uncovered recurring circuits and patterns within specific layers of toy or vision networks, lending indirect support to the concept of universality. Other phenomena, such as the lottery ticket hypothesis~\citep{frankle2019lottery} and mode connectivity~\citep{garipov2018mode}, provide further evidence for the existence of reusable, low-dimensional structures in neural networks. notably, \cite{Krizhevsky2012ImageNet} observed that the first layer of convolutional networks consistently learns Gabor-like filters across diverse vision tasks. More recently, studies by \citet{guth2024universalityneuralencodingscnns,guth24, kaushik2025eigenloraxrecyclingadaptersprincipal} have shown initial evidence of recurring eigenvectors in certain layers of convolutional neural networks trained on natural images. While works such as \cite{martin2025setolsemiempiricaltheorydeep, Mao_2024} have explored neural properties to explain generalization or loss landscape convergence, they do not address the convergence of parametric properties across distinct models trained on disjoint data. Importantly, in contrast to the abstract, weak, or speculative notions of universality presented in earlier works like~\cite{chughtai2023toymodeluniversalityreverse, olah2020zoom} - which focus primarily on representations and are arguably easier to demonstrate due to direct data dependency - ours is the first work to provide concrete evidence and a clear universal hypothesis at the \textit{neural parameter} or weights level.

In our analysis, we present compelling empirical evidence for the existence of universal subspaces within LoRA adapters across different modalities and tasks. We initially focus on LoRA adapters due to their ease of training and the ability to collect a large number of adapters for diverse tasks, models, and datasets, which enables robust evaluation of our hypothesis. E.g., we demonstrate the emergence of a universal subspace across approximately 500 LoRA adapters for the Mistral-7B~\citep{jiang2023mistral7b} model. We further extend our investigation to the full weight space, where we observe similar universality, extracting sparse, low-rank universal subspaces from about 500 Vision Transformer models and 50 LLaMA3-8B models, each trained on different datasets and initializations.

Although the underlying causes and broader implications of this universal property remain an open area of investigation, \hl{even an initial understanding of parameter subspace universality has profound implications for neural network efficiency and interpretability. Shared subspaces could enable: (1) massive model compression by storing only subspace coefficients rather than full weights; (2) rapid adaptation to new tasks within learned subspaces; (3) theoretical insights into generalization bounds and optimization landscapes; and (4) environmental benefits through reduced computational requirements for training and inference.}

To date, our work presents the most rigorous empirical evidence for the existence of ``Universality'' within the parameter space of deep neural networks. This geometric universality offers a novel vantage point for investigating fundamental neural properties, including generalization, grokking, catastrophic forgetting, and dataset efficiency. Crucially, the pervasive nature of this universal property underscores the relative primacy of model architecture over other factors in shaping the learned parameter space. However, our findings also delineate clear frontiers for future inquiry. We leave open the question of cross-architectural comparison: how do the universal subspaces of distinct architectures differ, and can we explicitly design architectures to optimize the geometry of this subspace? Furthermore, a fundamental question emerges from our results regarding the implications of convergence: if neural networks systematically collapse into the same subspace - thereby inheriting shared biases, capabilities, and failure modes - is this lack of diversity a fundamental bottleneck, and should we develop methods specifically designed to break this convergence?

The remainder of this paper is organized as follows. We first define the problem set up formally in Section \ref{sec:notations} followed by listing of essential properties and conditions with corresponding empirical justifications. Section \ref{sexc:newtasks} proposes the method to adapt to new tasks leveraging the shared approximate universal subspace. Section \ref{sec:analysis_method} explains our analysis methodology and Section \ref{sec:results} presents the comprehensive empirical evidence of the Universal subspaces. Section \ref{sec:experiments} briefly discusses the analysis providing useful insights and answers the fundamental questions raised in the introduction. We discuss related work in appendix \ref{sec:related_work} and discuss limitations and scope for future work in Section \ref{sec:limitation}. 
Our primary contributions include
\begin{itemize}
    \item We empirically demonstrate the existence of a lower-dimensional shared universal subspace in backpropagated neural networks, and also provide relevant theoretical analysis.
    \item Illustrate the approach to learning an approximate low-dimensional shared subspace using the available set of tasks. Propose conditions for convergence of this learned subspace to the true universal shared subspace.  
    \item Reuse the learned shared subspace to efficiently adapt to new unseen tasks with significantly fewer of trainable parameters. Our experiments across a wide variety of large pretrained models across various architectures and data modalities extensively verify and validate our hypothesis and theoretical findings. 
    \item We show that we can use the Universal subspace for efficient, faster learning, efficient model scaling, and model compression.
\end{itemize}

\section{Notations, Definitions and Theoretical Analysis}
Our theoretical analysis models predictors as elements of a Hilbert space, for example a reproducing kernel Hilbert space (RKHS), while our experiments are conducted with practical large-scale models such as transformers and LoRA-based variants. Modeling predictors in a Hilbert space (kernel) framework is standard when analyzing aspects such as generalization and inductive bias of modern deep architectures, and has been widely used to approximate or interpret the behavior of large neural networks in practice \citep{ortiz-jimenez2023taskarithmetic, wei2019regularization, chen2021deep, belfer2024spectral, bietti2019kernelperspectiveregularizingdeep}. \hl{We aim to understand whether the shared structure across tasks can be consistently recovered from data as number of tasks increase.}
Specifically, each task has an associated ground-truth predictor $f_t^\star$, and we are interested in the covariance (second-moment) operator $\cS$ that captures the common subspace spanned by these predictors. 
Since in practice we only observe finite samples per task and learn approximate predictors $\hat f_t$, two sources of error arise: (i) variability due to having finitely many tasks, and (ii) estimation noise within each task. 
Our goal is to establish conditions under which the empirical operators built from $\hat f_t$ concentrate around $\cS$, and to show that the learned top-$k$ subspace converges to the true one, with convergence rates that separately reflect the number of tasks and the accuracy of per-task learning.

\label{sec:notations}

\paragraph{Setup.}
Let $(\mathcal H,\ip{\cdot}{\cdot})$ be a separable Hilbert space with norm $\norm{\cdot}=\norm{\cdot}_{\mathcal H}$.
For $a,b\in\mathcal H$, the rank-one operator $a\otimes b:\mathcal H\to\mathcal H$ is
$(a\otimes b)g=\ip{b}{g}\,a$; in particular $\opnorm{a\otimes b}=\norm{a}\,\norm{b}$.
Tasks $t=\{1,2,3...,T\}$ are drawn i.i.d. from distribution $\cT$\, and each task dataset $S_t=\{(x_{t,i},y_{t,i})\}_{i=1}^{n_t}$ with $n_t$ samples is drawn independently from $D_t$.
Let $f_t^\star\in\mathcal H$ denote the (unknown) ground-truth predictor for task $t$ and $\hat f_t\in\mathcal H$ be the learned predictor for the task.

\begin{definition}[Task second-moment operator]\label{def:task_operators}
The \emph{population}, \emph{true empirical}, and \emph{learned empirical} task second-moment operators are respectively,
\[
\cS := \E_{t\sim\tau}[\,f_t^\star\otimes f_t^\star\,],\qquad
\hat \cS := \frac{1}{T}\sum_{t=1}^{T} f_t^\star\otimes f_t^\star,\qquad
\tilde \cS := \frac{1}{T}\sum_{t=1}^{T} \hat f_t\otimes \hat f_t .
\]
where $\cS, \hat \cS, \tilde \cS$ are self-adjoint and positive semi-definite such that tr$(\cS)<\infty$.
Its top-$k$ eigenspace $\mathcal H_k^\star$ is the population
rank-$k$ \emph{shared subspace} of tasks. 
\begin{remark}
    We work with the second-moment operator (rather than centered covariance), so the top eigenspace may include the mean direction of $\{f^\star_t\}_{t\sim\cT}$.
\end{remark}

Let $\lambda_1\ge\lambda_2\ge\cdots$ be the eigenvalues of $\cS$ with orthonormal eigenvectors $\{\phi_i\}_{i\ge1}$.
Write $P_k=\sum_{i=1}^k \phi_i\otimes \phi_i$ for the projector onto the population top-$k$ subspace
$\mathcal H_k^\star=\mathrm{span}\{\phi_1,\dots,\phi_k\}$, and let $\tilde P_k$ be the projector onto the
top-$k$ eigenspace of $\tilde S$ (the learned shared subspace). Define the eigengap
$\gamma_k:=\lambda_k-\lambda_{k+1}>0$.
\end{definition}

\begin{assumption}[Realizability, bounded second moment and effective rank]\label{ass:realizability}
For a constant $B>0$ and for all tasks, $f_t^\star\in\mathcal H$ almost surely, $\norm{f_t^\star}\le B$ a.s., 
$\E_{t\sim\tau}\norm{f_t^\star}^2=\tr(S)<\infty$. In addition, $\cS$ has bounded effective rank, $\frac{tr(\cS)}{\opnorm{\cS}}\leq\kappa$

\end{assumption}
Assumption \ref{ass:realizability} ensures that all ground-truth predictors are bounded and have finite second moment, so the population covariance operator $S$ is well-defined. The bounded effective rank condition further guarantees that the shared structure of the tasks is not arbitrarily infinite-dimensional, making subspace recovery feasible.

\begin{assumption}[Per-task estimation accuracy in $\mathcal H$]\label{ass:pertask}
For any $\delta_t\in(0,1)$ 
with probability at least $1-\delta_t$ over the draw of $S_t$,
\[
\norm{\hat f_t - f_t^\star}\ \le\ \eta_t,\text{\hspace*{2mm}...where } \eta_t = \cR_{n_t,D_t}(\cH) + \sqrt{\frac{\ln(1/\delta_t)}{2n_t}}
\]
Here $\cR_{n_t,D_t}(\cH)$ represents Rademacher complexity of the solutions within Hilbert space $\cH$ over $n_t$ samples drawn i.i.d. from $D_t$
This form is satisfied, for example, by strongly convex regularized ERM in an RKHS (e.g., kernel ridge regression or NTK ridge), under bounded kernel norm and sub-Gaussian response noise \citep{barlette2003rademacher}.
\end{assumption}
Assumption \ref{ass:pertask} requires that each task predictor $\hat f_t$ is learned accurately from its finite dataset. In other words, $\hat f_t$ is close to the true $f_t^\star$ in $\mathcal H$-norm with high probability, at a rate governed by sample size and complexity of the hypothesis space.

\begin{theorem}[Two-level convergence to the shared subspace]\label{thm:twolevel}
Assume \ref{ass:realizability}--\ref{ass:pertask}. Let $c_1, c_2$ be any absolute constants.
For any $\delta\in(0,1)$, choose $\delta_t=\delta/(2T)$ and set $\delta_T=\delta/2$.
With probability at least $1-\delta$ (over tasks and all per-task samples),
\begin{equation}\label{eq:op-main}
\opnorm{\tilde \cS - \cS}
\ \le\ 
c_1 B^2 \sqrt{\frac{\log(c_2/\delta)}{T}}
\ +\ (2B\bar \eta+\overline{\eta^2})
\end{equation}
If moreover $\gamma_k>0$, then
\begin{equation}\label{eq:subspace-main}
\opnorm{\tilde P_k - P_k}
\ \le\ \frac{2}{\gamma_k}\!
\left(
c_1 B^2 \sqrt{\frac{\log(c_2/\delta)}{T}}
+ (2B\bar \eta+\overline{\eta^2})\right).
\end{equation}
where $\bar \eta = \frac{1}{T}\sum^{T}_{t=1}\eta_t$, $\overline{\eta^2_t} = \frac{1}{T}\sum^{T}_{t=1}\eta^2_t$ and $\eta_t$ is defined same as in assumption \ref{ass:pertask}
\end{theorem}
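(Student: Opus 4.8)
The plan is to decompose the perturbation $\tilde\cS-\cS$ into a term reflecting having only finitely many tasks and a term reflecting per-task estimation error, control each on its own high-probability event, union-bound, and finally pass from the operator-norm estimate to the projector estimate via a Davis--Kahan inequality. Concretely, by \cref{def:task_operators} and the triangle inequality,
\[
\opnorm{\tilde\cS-\cS}\ \le\ \opnorm{\hat\cS-\cS}\ +\ \opnorm{\tilde\cS-\hat\cS},
\]
where $\hat\cS-\cS=\frac1T\sum_{t=1}^T\bigl(f_t^\star\otimes f_t^\star-\E_{t\sim\tau}[f_t^\star\otimes f_t^\star]\bigr)$ is a centered empirical average of i.i.d.\ operators, and $\tilde\cS-\hat\cS=\frac1T\sum_{t=1}^T\bigl(\hat f_t\otimes\hat f_t-f_t^\star\otimes f_t^\star\bigr)$ becomes a deterministic object on the event where the per-task guarantees of \cref{ass:pertask} hold.

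For the first term, each summand $f_t^\star\otimes f_t^\star$ is self-adjoint, positive semidefinite, and bounded in operator norm by $\norm{f_t^\star}^2\le B^2$ by \cref{ass:realizability}. I would apply a dimension-free operator Bernstein (or Hoeffding) inequality in the separable Hilbert space $\cH$: the bounded effective rank $\tr(\cS)/\opnorm{\cS}\le\kappa$ supplies precisely the intrinsic-dimension parameter that replaces the (meaningless here) ambient dimension in the logarithmic factor of the classical matrix bounds. This yields, with probability at least $1-\delta_T=1-\delta/2$, an estimate of the form $\opnorm{\hat\cS-\cS}\le c_1B^2\sqrt{\log(c_2/\delta)/T}$, the $\kappa$-dependence being absorbed into $c_2$ (and the usual lower-order $1/T$ Bernstein term absorbed into the constants, since the bound is trivially at most $2B^2$ for small $T$).

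For the second term I would use the rank-one perturbation identity
\[
\hat f_t\otimes\hat f_t-f_t^\star\otimes f_t^\star=(\hat f_t-f_t^\star)\otimes\hat f_t+f_t^\star\otimes(\hat f_t-f_t^\star),
\]
so that $\opnorm{\hat f_t\otimes\hat f_t-f_t^\star\otimes f_t^\star}\le\norm{\hat f_t-f_t^\star}\,\norm{\hat f_t}+\norm{f_t^\star}\,\norm{\hat f_t-f_t^\star}$. Taking $\delta_t=\delta/(2T)$, \cref{ass:pertask} and a union bound give, with probability at least $1-\delta/2$, that $\norm{\hat f_t-f_t^\star}\le\eta_t$ for every $t$ simultaneously; on this event $\norm{\hat f_t}\le B+\eta_t$, so each summand is at most $\eta_t(B+\eta_t)+B\eta_t=2B\eta_t+\eta_t^2$, and averaging plus the triangle inequality over the sum yields $\opnorm{\tilde\cS-\hat\cS}\le 2B\bar\eta+\overline{\eta^2}$. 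Intersecting the two events (failure probability at most $\delta/2+\delta/2=\delta$) gives \eqref{eq:op-main}. Finally, whenever $\gamma_k=\lambda_k-\lambda_{k+1}>0$, the Davis--Kahan $\sin\Theta$ theorem in its projector form (the statistician-friendly variant of Yu, Wang and Samworth) gives $\opnorm{\tilde P_k-P_k}\le\frac{2}{\gamma_k}\opnorm{\tilde\cS-\cS}$, and substituting \eqref{eq:op-main} produces \eqref{eq:subspace-main}.

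\noindent\textbf{Main obstacle.}
The only genuinely technical step is the operator concentration for $\opnorm{\hat\cS-\cS}$: ordinary matrix Bernstein/Hoeffding inequalities carry a prefactor that is logarithmic in the ambient dimension, which is vacuous in an infinite-dimensional $\cH$, so one must invoke an intrinsic-dimension (effective-rank) version — which is exactly why \cref{ass:realizability} includes the bounded-effective-rank condition — and verify the boundedness/moment hypotheses for the rank-one summands. A minor secondary point is quoting Davis--Kahan in the correct projector form with the factor $2$ and confirming that this form is genuinely gap-only, requiring no smallness of $\opnorm{\tilde\cS-\cS}$ relative to $\gamma_k$; everything in the per-task term is elementary once the rank-one identity is written down.
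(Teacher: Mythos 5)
Your proposal follows the same route as the paper's proof essentially step for step: the identical triangle split $\opnorm{\tilde\cS-\cS}\le\opnorm{\tilde\cS-\hat\cS}+\opnorm{\hat\cS-\cS}$, the same rank-one perturbation bound $\opnorm{\hat f_t\otimes\hat f_t-f_t^\star\otimes f_t^\star}\le 2B\eta_t+\eta_t^2$ on the per-task event, the same intrinsic-dimension (effective-rank) operator Bernstein inequality for the across-task term, and the same union bound followed by the projector form of Davis--Kahan with factor $2/\gamma_k$. The only cosmetic difference is that you write out the rank-one identity explicitly and remark on absorbing the lower-order $1/T$ Bernstein term, both of which the paper leaves implicit; the argument is otherwise the same.
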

Proof of \cref{thm:twolevel} can be found in appendix \cref{sec:theoretical_analysis}. The \cref{thm:twolevel} shows that the empirical second-moment operator built from the learned predictors converges to the true operator $\cS$, and the learned top-$k$ subspace $\hat P_k$ converges to the true subspace $P_k$. The rates capture two sources of error: averaging across tasks (scaling with $1/\sqrt{T}$) and per-task estimation errors (through $\bar \eta$ and $\overline{\eta^2}$). A larger eigengap $\gamma_k$ makes the subspace recovery more stable. \hl{In practice, we obtain the eigenvectors of} $\tilde{\cS}$ \hl{using HOSVD (Higher-Order Singular Value Decomposition) of the concatenated weight matrix} $\mathcal{X}$ \hl{highlighted in} \cref{sec:analysis}. \hl{Motivated by our theoretical analysis, we try to approximate} $\hat \cS$ \hl{for a set of tasks by extracting principal directions from as many trained models as possible.}

\section{Analysis}\label{sec:analysis}
This analysis constitutes the core contribution of our work. We demonstrate that architecture-specific, layer-wise \textbf{universal subspaces} consistently emerge across a diverse array of neural models. In our experiments, adherence to the universal subspace hypothesis appears robust, showing no significant deviation regardless of whether models are trained from scratch, fully finetuned, or adapted via low-rank methods. Furthermore, this phenomenon remains invariant across differing initialization strategies, modalities, data formats, and dataset contents. Notably, however, the fidelity of the extracted subspace correlates with the quantity and quality of the available models. This observation leads us to postulate the existence of an "ideal" universal subspace intrinsic to each architecture, towards which individual model instances converge. We hypothesize that superior algorithms, cleaner data, and more effective optimization strategies enable models to approximate this ideal more closely. While we do not formally verify the "ideal universal subspace" hypothesis in the present work, leaving it for future investigation, we suggest that this subspace represents the most stable configuration for contemporary neural networks trained via backpropagation. Consequently, exceptions to this rule may offer fertile ground for further research. We present the analysis methodology in \cref{sec:analysis_method}, and experimental evidence of the universal subspace using the methodology in the following subsections.

\subsection{Analysis methodology}\label{sec:analysis_method}
\newcommand{\modeprod}{\times}
\newcommand{\nmode}[1]{\mathbin{\modeprod_{#1}}}

\begin{algorithm}
\caption{\hl{Truncated Zero-Centered Higher-Order SVD (HOSVD)}}
\label{alg:trunc-hosvd}
\begin{algorithmic}[1]

\Require 
A high-order tensor $\mathcal{X}\in\mathbb{R}^{I_1\times\cdots\times I_N}$ 
constructed by stacking $N$ rank-$r_n$ task matrices along mode $n$, 
where $1\le r_n\le I_n$ and $n\in[1,N]$.

\Ensure 
Mean tensor $\boldsymbol{\mu}$; factor matrices 
$U^{(n)}\in\mathbb{R}^{I_n\times \hat r_n}$ 
(orthonormal columns), where $\hat r_n$ is chosen as the smallest number of 
left singular vectors whose cumulative explained variance is at least $\tau$; 
and the truncated core tensor 
$\mathcal{S}\in\mathbb{R}^{\hat r_1\times\cdots\times \hat r_N}$.
Reconstruction is given by 
$\widehat{\mathcal{X}}=\boldsymbol{\mu}
  +\mathcal{S}\nmode{1}U^{(1)}\cdots\nmode{N}U^{(N)}$,
where $\nmode{n}$ denotes mode-$n$ tensor--matrix multiplication.

\State \textbf{Zero-centering:} 
       $\boldsymbol{\mu}\gets \mathrm{mean}(\mathcal{X})$ 
       \Comment{elementwise mean over all entries}

\State $\mathcal{X}_c \gets \mathcal{X}-\boldsymbol{\mu}$ 
       \Comment{broadcast $\boldsymbol{\mu}$ to the shape of $\mathcal{X}$}

\For{$n=1$ \textbf{to} $N$}
    \State $X_{(n)} \gets \mathrm{unfold}(\mathcal{X}_c,\, n)$
           \Comment{mode-$n$ matricization; $X_{(n)}\in\mathbb{R}^{I_n\times\prod_{m\ne n} I_m}$}

    \State Compute thin SVD: 
           $X_{(n)} = \tilde U^{(n)} \Sigma^{(n)} \tilde V^{(n)\top}$

    \State $U^{(n)} \gets \tilde U^{(n)}(:,\, 1{:}\hat r_n)$
           \Comment{keep first $\hat r_n$ left singular vectors (variance $\ge\tau$)}
\EndFor

\State \textbf{Truncated core:} 
       $\mathcal{S} \gets 
          \mathcal{X}_c 
          \nmode{1} U^{(1)\top}
          \nmode{2} U^{(2)\top}
          \cdots
          \nmode{N} U^{(N)\top}$

\State \Return 
       $\boldsymbol{\mu},\, \{U^{(n)}\}_{n=1}^N,\, \mathcal{S}$
       \Comment{Optionally compute 
       $\widehat{\mathcal{X}}
       =\boldsymbol{\mu}
        +\mathcal{S}\nmode{1}U^{(1)}\cdots\nmode{N}U^{(N)}$}

\end{algorithmic}
\end{algorithm}

\hl{Since there is no current method that enables us to compare subspaces of models with different architectures, we focus on large number of models trained on the same architecture.} To this end, we perform analysis using Low rank adapters~\citep{hu2021lora} (LoRA)  as well as classical weights of transformer and CNN (Convolutional Neural Network) architectures. For all our experiments, unless stated otherwise, we perform Order 1-2 HOSVD only, to ensure that our methodology works even in the simplest case. \cref{alg:trunc-hosvd} provides the algorithm we implement. Refer to \cref{sec:analysis_apx} for discussion regarding secondary subspace and how to choose the number of top components.

\subsection{Results From Joint Subspaces' Analysis}
\label{sec:results}
\hl{We present empirical results using method shown in} Section~\ref{sec:analysis_method},\hl{ extracting our layer wise universal subspace approximations using thousands of publicly available models for most of our experiments. This choice allows us to have \textit{no training costs} whatsoever, for extracting the universal subspace.} Spectral analysis relies on efficient spectral decomposition libraries, and can even be run on CPUs. We run all our analysis and experiments on one Nvidia A5000 GPU. The presented large scale empirical results forms the crux of our work and provide strong evidence for the presence of such low ranked joint subspaces across a wide range of task, architecture and modalities. In summary, \hl{we present a total of \textbf{eight} set of analysis and applications}, including tasks \hl{like image classification, natural language understanding, text to image generation, model merging, etc for different model architectures and modalities.}

\subsubsection{Lower-rank joint subspaces in CNNs, LoRA and Finetuned models}
\label{ssec:lora}

In smaller and conventional architectures such as CNNs, evidence for universal structure has been more limited but suggestive. Early work observed that the first convolutional layer often learns Gabor-like filters across diverse vision tasks \citep{Krizhevsky2012ImageNet}. More recently, works report recurring eigenvectors in certain CNN layers trained on natural images \citep{guth24,guth2024universalityneuralencodingscnns}.

We extend these observations and examine whether a shared low-rank joint subspace emerges across tasks. Specifically, we train ResNet-50 models from random initialization for image classification on five disjoint datasets (CIFAR-10, CIFAR-100, ImageNet, Oxford-IIIT Pets, and EuroSAT), ensuring no overlap in samples. While our theoretical analysis indicates that a small number of models may lead to an under-approximation of the joint universal subspace, training CNNs from scratch at scale constrains the number of models we can include in this study.

\begin{figure}
  \centering

  \begin{subfigure}{0.8\linewidth}
    \centering
    \caption{Comparison of model performance across datasets.}
    \label{fig:resnet-table-wrap}
    \resizebox{\linewidth}{!}{%
      \begin{tabular}{lcccccc}
      \toprule
       \textbf{Method} & \textbf{ImageNet} & \textbf{EuroSat} & \textbf{CIFAR-10} & \textbf{CIFAR-100} & \textbf{Oxford Pets} & \textbf{Avg} \\
      \midrule
      ResNet50      & 80.86 & 98.96 & 97.35 & 83.82 & 93.48 & 90.89 \\
      Universal R50  & 77.89 & 98.83 & 95.89 & 81.49 & 83.81 & 87.58 \\
      \bottomrule
      \end{tabular}
    }
  \end{subfigure}

  \vspace{1em} 

  \begin{subfigure}{\linewidth}
    \centering
    \includegraphics[width=0.9\linewidth]{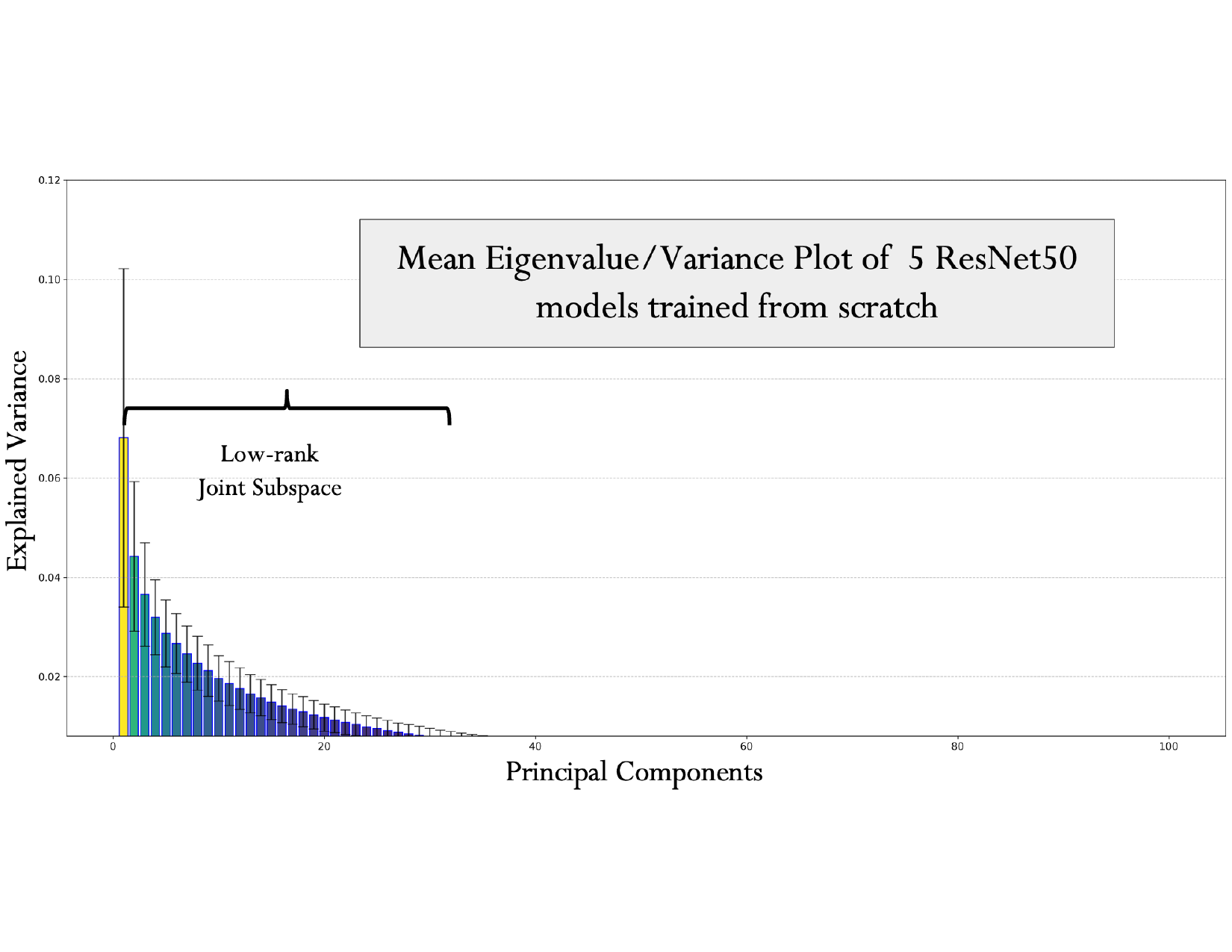}
    \caption{Summarized (averaged for all layers) eigenvalue plot of all model weights corresponding to all 31 layers of 5 ResNet50 models. Mean refers to the fact that it has been averaged for all layers for conciseness. The vertical axis is Explained Variance (for \textit{all} models) and X axis indicated Principal Components. We will follow this setup throughout the paper. We also refer to the low-ranked shared subspace as 'Universal' subspace and may refer to a specific model consisting of extracted basis as the 'Universal variant'.}
    \label{fig:resnet-wrap}
  \end{subfigure}
  
  \caption{\textbf{Proving existence of universal subspaces in CNNs.} Decomposing 5 ResNet50 models trained on different tasks shows the emergence of a low rank, universal subspace where the majority of the information is present in only 16 (or fewer) distinct subspace directions for all layers of the network.}
  \label{fig:short-wrap}
  \end{figure}

Despite these limitations, Figure~\ref{fig:resnet-wrap} \hl{reports the average explained variance across all layers of ResNet-50 and reveals a distinct, shared low-rank structure spanning these disjoint tasks. Moreover, even when the estimated universal subspace is relatively coarse, projecting to this subspace to obtain a low-rank ResNet-50 (thereby reducing parameters) preserves competitive performance relative to full fine-tuning, further supporting the presence and utility of a joint subspace} (\ref{fig:resnet-table-wrap}).

In order to conduct a more real-world experiment, we choose to run the subspace analysis for LoRA~\cite{hu2021lora} models simply because they are available in abundance in public domain. Given LoRA models distinctly capture task specific directions as they show weak alignment with the original weights~\cite{hu2021lora}, they form a good main model parameter alternative to run our subspace analysis and verify whether this holds true. We spectrally decompose (Section~\ref{sec:analysis_method}) LoRA's submatrices individually, each concatenated across all the available finetuned LoRAs and choose top $k$ spectral basis. This setup allows us to truly stress test the Universal Subspace. 

\begin{figure}
  \centering
  \begin{subfigure}{0.9\linewidth}
    \includegraphics[width=1.0\textwidth]{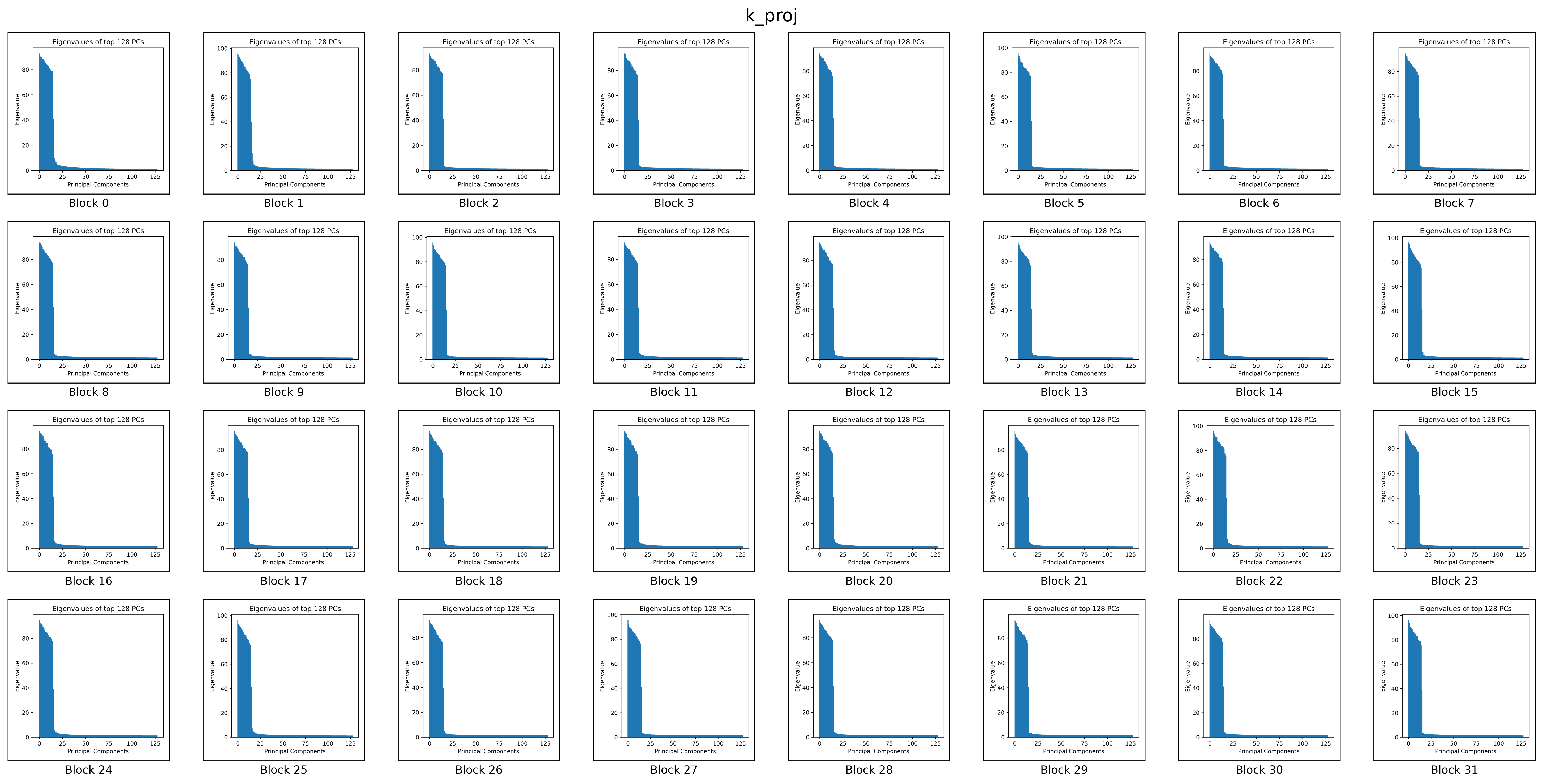}
    \caption{Eigenvalue/Variance plot for Orthogonal Spectral Components for 500 unique LoRAs of different layers of Mistral-7B model}
    \label{fig:short-a}
  \end{subfigure}
  \vfill
  \begin{subfigure}{0.9\linewidth}
    \includegraphics[width=1.0\textwidth]{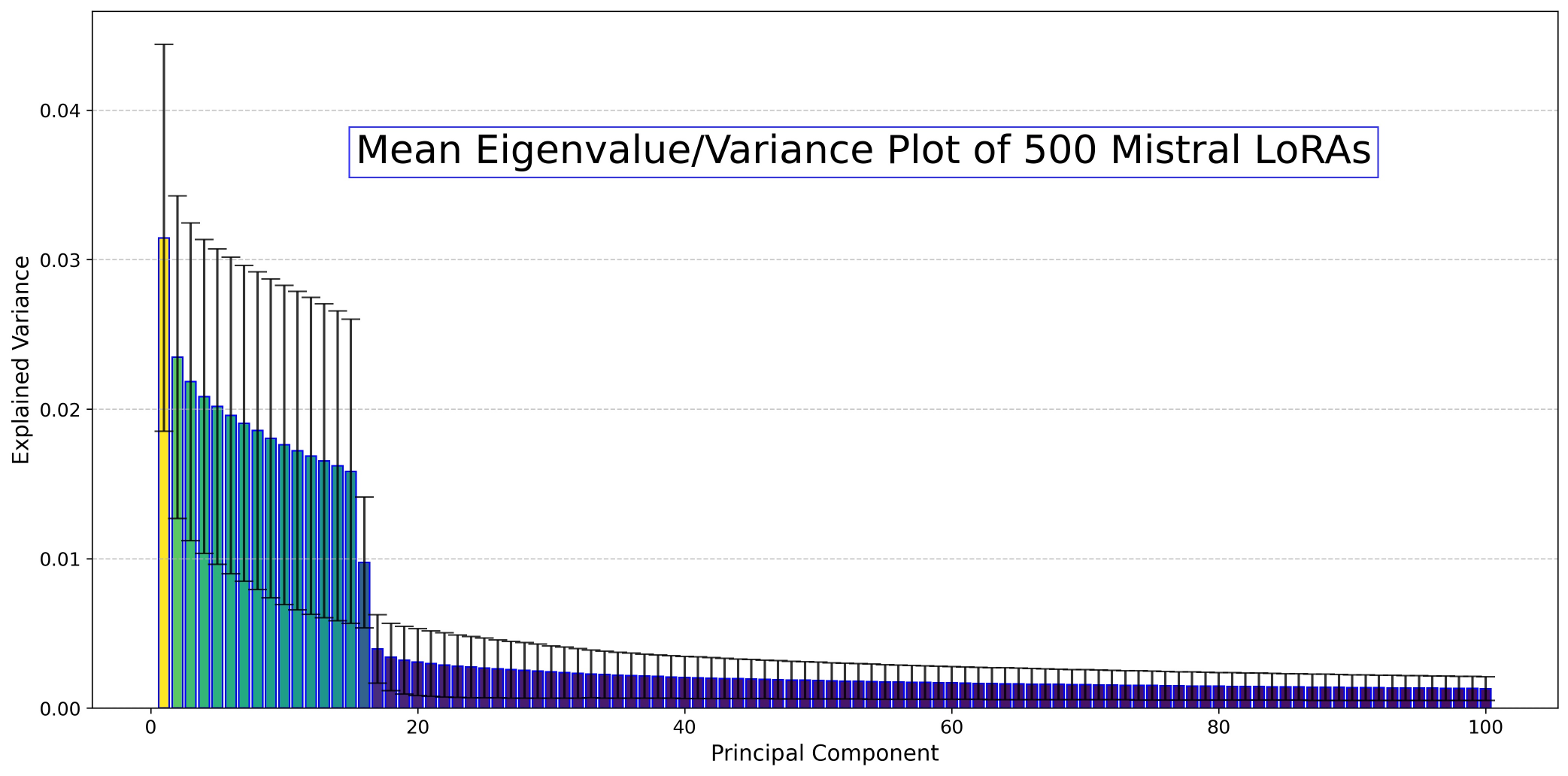}
    \caption{Summarized eigenvalue plot of all LoRAs corresponding to all 31 layers of all 500 Mistral 7B models}
    \label{fig:short-b}
  \end{subfigure}
  \caption{\textbf{Proving existence of universal subspaces in deep networks.} Decomposing 500 sets of LoRAs trained on different tasks using the Mistral-7B model shows the emergence of a low rank, universal subspace where the majority of the information is present in only 16 (or less) distinct subspace directions for all layers of the network. Plots of other layers are present in the \cref{sec:lora_appx}.}
  \label{fig:short}
\end{figure}


In our first experiment, we use LoRA models trained on 500 natural instruction tasks~\citep{wang-etal-2022-super} using Mistral-7B-Instruct-v0.2~\citep{jiang2023mistral7b} as the base~\citep{brüelgabrielsson2024compressserveservingthousands}. Each LoRA, individually, is at least of rank 16. \autoref{fig:short} shows the results of our analysis. \autoref{fig:short-a} shows the subspace analysis of individual layers of the Mistral model. Each bar corresponds to the Eigenvalue or explained variance of every unique component (unique for all models, trained in disjoint/different datasets), showing that the parameters of all 500 models can be well approximated by a finite low-rank subspace for all layers. Note that for visualization purposes, the graphs only show the top 100 components, but there exists a significantly larger number of basis. \autoref{fig:short-b} provides a summary of this subspace analysis, (summarized) for all of the layers of the 500 Mistral models, further validating the idea of a layer-wise universal subspace. Please refer to the appendix for all other and larger versions of the given spectral plots.


To test the expressiveness of this universal subspace, we analytically reconstruct the LoRA parameters of randomly chosen seen (IID) and unseen (OOD) tasks by projecting it onto the universal subspace. \autoref{fig:lola_perf} shows the result of our experiment, and as can be seen, that the Universal subspace model perform robustly for both seen and unseen cases. In order to verify the importance of this chosen subspace, we redo the experiment with the \textit{leftover} components from our spectral decomposition (called \textit{Secondary Subspace}). As can be seen from the results, the performance of this model lags drastically behind our chosen subspace. We note that our universal subspace model is \textbf{$19\times$ more memory efficient} since it is no longer necessary to save all 500 LoRAs. 

\begin{figure}
  \centering
  \includegraphics[width=.9\linewidth]{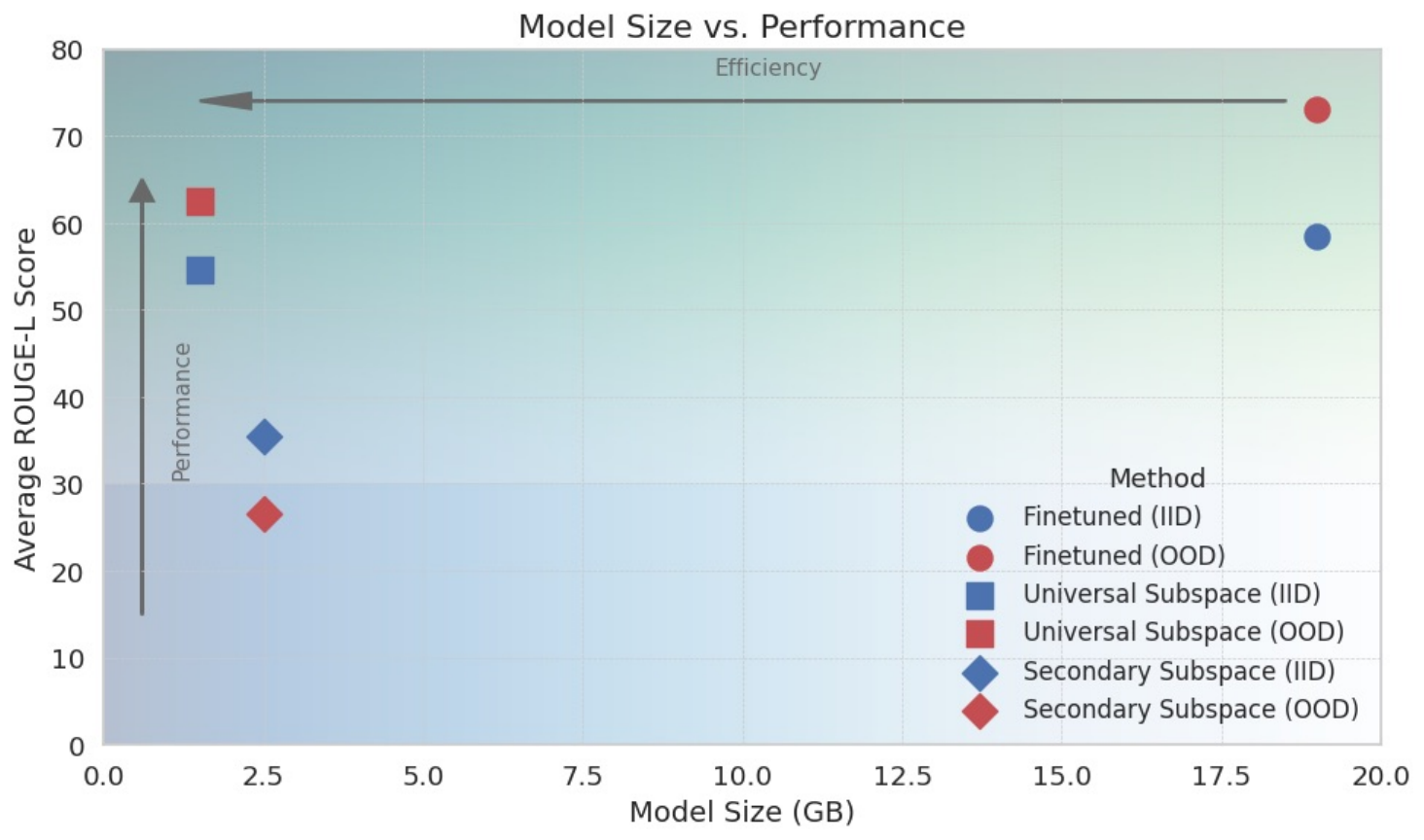}
  \caption{\footnotesize Lots of LoRAs Model Size vs Performance plot.}
  \label{fig:lola_perf}
\end{figure}

We extend our analysis to \textbf{text-to-image generation} using Stable Diffusion-XL~\citep{sdxl}. A universal subspace is extracted from publicly available LoRAs on HuggingFace~\citep{von-platen-etal-2022-diffusers}. When projecting individual LoRAs into this subspace, the resulting generations preserve visual quality and style (\autoref{fig:diffusion}). CLIP-based evaluations (\autoref{tab:clip_sdxl}) show that the universal subspace even outperforms individual LoRAs in some cases, possibly due to denoising effects previously observed in~\citep{sharma_laser_2023}.

\begin{table}[h!]
\centering
\caption{CLIP scores (higher is better) of images generated using SDXL.}
\resizebox{\textwidth}{!}{%
\begin{tabular}{cccccccccccc}
\toprule
Method & Style 1 & Style 2 & Style 3 & Style 4 & Style 5 & Style 6 & Style 7 & Style 8 & Style 9 & Style 10 & Avg \\
\midrule
LoRA   & 21.95   & 15.59   & 22.18   & 18.84   & 16.65   & 17.99   & 24.66   & 17.47   & 22.07   & 19.93   & 19.73   \\
Universal SDXL LoRA   & 21.96   & 16.07   & 22.07   & 18.79   & 16.68   & 17.99   & 24.66   & 17.56   & 22.46   & 20.09   & \textbf{19.83}   \\
\bottomrule
\end{tabular}
}
\label{tab:clip_sdxl}
\end{table}

\begin{figure}[hbt]
  \centering
  \includegraphics[width=1.0\textwidth]{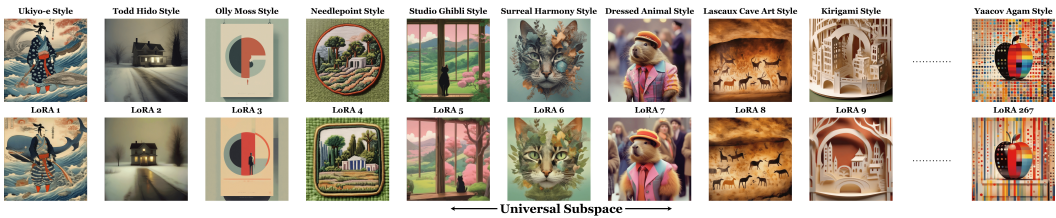}
  \caption{\hl{Text-to-Image Generation Results for Individual models vs. our Universal Subspace model. We notice no visual reduction in style quality despite significant reduction in total model size.}}
  \label{fig:diffusion}
\end{figure}


\hl{In order to test the ability of condensing many models into a single universal subspace, we compare our method with SOTA model merging/combination methods in} \autoref{tab:merge_accuracies}. We compare our universal subspace inspired combination approach against six state-of-the-art, gradient-free baselines: RegMean~\citep{jin2023dataless}, Task Arithmetic (TA)~\citep{ilharco2023editing}, TIES~\citep{yadav2023tiesmerging}, DARE-TIES~\citep{YuDare}, KnOTS-TIES, and KnOTS-DARE-TIES~\citep{knots}. RegMean aligns task-specific updates by solving a layer-wise linear regression problem, requiring transformation matrices for each model. TA merges models by linearly combining parameters, but relies on tuning scaling coefficients on a validation set for optimal performance. TIES extends TA with magnitude-based pruning and sign conflict resolution, introducing additional hyperparameters such as pruning thresholds, while DARE-TIES combines random Bernoulli pruning with TIES' sign resolution, also requiring tuning of pruning probability. KnOTS-TIES and KnOTS-DARE-TIES further apply SVD-based subspace alignment before merging, but still inherit the need for coefficient or pruning hyperparameter selection. In contrast, our universal subspace method, analytically computes the merging coefficients based solely on the geometry of a shared, low-rank universal subspace identified across models, requiring no iterative tuning or validation data-although optional finetuning is possible if data is available. Furthermore, because our subspace is intrinsically low-rank, the merged model contains significantly fewer parameters than any individual models, offering both computational efficiency and theoretical alignment guarantees not present in the baselines. Empirically, our approach achieves higher average accuracy (see Table~\ref{tab:merge_accuracies}), while reducing parameter count, thus enabling scalable and robust model merging without heuristic pruning or validation overhead. We note that we did not optimize our merging process and better results nearing finetuned performance may be achieved.

\begin{table}[h]
    \centering
    \caption{\footnotesize Per-task results for eight ViT-B/32 models, each finetuned with LoRA on a different image classification dataset. "Finetuned" indicates the accuracy of each model on its respective training dataset. For each merging baseline, we report the normalized accuracy on every task, as well as the average across all tasks.}
      \resizebox{\textwidth}{!}{%
    \begin{tabular}{lccccccccc}
        \toprule
        \textbf{Method} & \multicolumn{8}{c}{\textbf{Datasets}} & \textbf{Avg} \\
        \cmidrule(lr){2-9}
         & \textbf{Cars} & \textbf{DTD} & \textbf{EuroSAT} & \textbf{GTSRB} & \textbf{MNIST} & \textbf{RESISC45} & \textbf{SUN397} & \textbf{SVHN} & \\
        \midrule
        \multicolumn{10}{c}{\textbf{Per-Task Absolute Accuracies (\%)}} \\
        \cmidrule(lr){1-10}
        Finetuned & 74.0 & 58.3 & 99.0 & 92.7 & 99.3 & 88.4 & 64.5 & 96.2 & 84.1 \\
        \midrule
        \multicolumn{10}{c}{\textbf{Per-Task Accuracies of Combined Models Normalized Against Finetuned Models (\%)}} \\
        \cmidrule(lr){1-10}
        RegMean & 80.2 & 71.3 & 37.9 & 47.3 & 43.1 & 70.5 & 99.3 & 43.0 & 60.9 \\
        TA & 82.0 & 73.6 & 48.8 & 42.1 & 53.1 & 71.5 & 97.5 & 41.2 & 63.7 \\
        TIES & 82.4 & 72.8 & 50.8 & 39.0 & 50.3 & 70.9 & 99.4 & 40.5 & 63.7 \\
        DARE-TIES & 81.4 & 74.5 & 50.8 & 39.2 & 55.0 & 70.7 & 96.7 & 40.4 & 63.7 \\
        KnOTS-TIES & 82.7 & 73.7 & 49.3 & 48.9 & 70.9 & 95.5 & 53.8 & 68.0 & 68.0 \\
        KnOTS-DARE-TIES & 81.8 & 75.9 & 50.7 & 40.3 & 53.2 & 70.2 & 97.9 & 41.0 & 63.9 \\
        \textbf{Ours} & \textbf{88.1} & \textbf{82.3} & \textbf{65.9} & \textbf{61.3} & \textbf{88.3} & \textbf{98.1} & \textbf{98.5} & \textbf{85.1} & \textbf{83.5} \\
        \bottomrule
    \end{tabular}
    }
    \label{tab:merge_accuracies}
\end{table}

In summary, \hl{these four experiments provide strong empirical support for our universal subspace hypothesis and demonstrate its practical advantages in terms of memory efficiency, model merging, model reusability, and scalable deployment across diverse tasks and modalities.}

\subsection{Low rank shared universal subspaces in classical weights}
\begin{wraptable}{r}{0.5\columnwidth}
    \centering
    \caption{Image Classification Accuracy}
    \label{tab:vit_Res}
    \small
    \begin{tabular}{@{}lcc@{}}
        \toprule
        \textbf{Method} & \textbf{IID} & \textbf{OOD} \\
        \midrule
        Full Training & 94.4 $\pm$ \scriptsize{1.7} & 91.3 $\pm$ \scriptsize{2.1} \\
        Universal ViT & 94.1 $\pm$ \scriptsize{2.0} & 87.8 $\pm$ \scriptsize{1.5} \\
        \bottomrule
    \end{tabular}
\end{wraptable}
While aforementioned experiments on CNNs trained from scratch, and LoRAs, provide strong evidence for the presence of the joint subspace, we further rigorously test on large scale finetuned models (500 pretrained ViT, 50 LLaMA3-8B models, 177 GPT-2 and Flan-T5).

\begin{figure}[ht!]
    \centering 

    \includegraphics[width=0.9\linewidth]{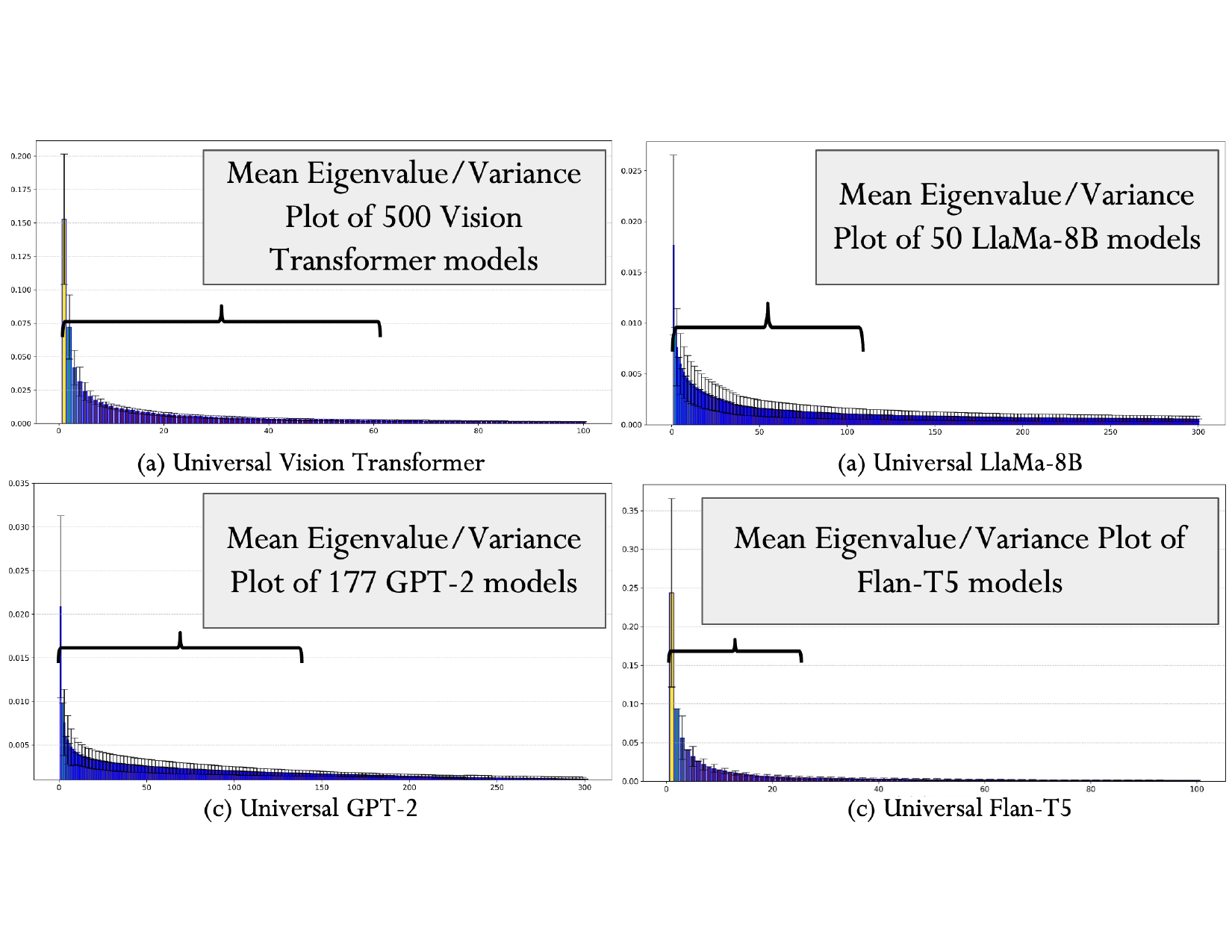}

    \caption{\footnotesize \textbf{Universal Subspaces in Classical Weights.} Spectral decomposition of weight matrices from (a) $\sim$500 Vision Transformers (b) 50 LLaMa-8B models (c) 177 GPT-2 models (d) GLUE Flan-T5 models - each trained independently across diverse tasks, datasets, and configurations - reveals a consistent low-rank structure: most variance is captured by the top few spectral basis. This suggests that, despite significant variation in training conditions, the learned weights consistently align along a shared low-dimensional subspace. For visualization clarity, only a fraction of the basis are shown; extended plots are provided in the \cref{sec:classical_apx}.}
    \label{fig:mainw}
\end{figure}

First, we collect $\sim$500 pretrained Vision Transformer (ViT) models from HuggingFace, spanning diverse domains - medical imaging, satellite data, and synthetic - and trained with varying losses, optimizers, and initializations. Importantly, we did not filter or curate these models in any way and had no access to the original training data, ensuring our analysis reflects real-world diversity. Details about the models and their configurations are provided in \cref{sec:classical_apx}. Following our method (\cref{sec:analysis_method}), we spectrally decompose all layers (excluding first and last) and observe, in \autoref{fig:mainw}, that the majority of variance is captured by the top few spectral components, revealing a highly compressible, shared subspace across layers. Only the top 100 components are visualized for clarity.

To further validate the universality of this subspace, we selected five additional, previously unseen pretrained ViT models for which we had access to evaluation data. These models, considered out-of-domain relative to the initial set, had all their weights reconstructed by projecting onto the identified 16-dimensional universal subspace. We then assessed their classification accuracy and found no significant drop in performance, as reported in \autoref{tab:vit_Res}. This result provides strong empirical support for our universal subspace hypothesis, demonstrating that a shared, low-dimensional structure underlies the weight spaces of diverse, independently trained Vision Transformers, regardless of their training data or configuration.

\hl{A major outcome of this experiment is that we can replace these 500 ViT models with a single Universal Subspace model.} Ignoring the task-variable first and last layer (weight matrices vary due to different number of categories and input size and formats), \hl{we observe a requirement of \textbf{100$\times$ less memory}, and these savings are prone to increase as the number of trained models increases. We note that we are, to the best of our knowledge, the first work, to be able to \textit{merge} 500 (and theoretically more) Vision Transformer into a single universal subspace model. This result implies that hundreds of ViTs can be represented using a single subspace model - excluding task-specific layers - yielding up to \textbf{100$\times$ memory reduction}. To our knowledge, this is the first demonstration of merging over 500 ViTs into a single universal representation.}

We further extend this analysis to 50 finetuned LLaMA3-8B models, 177 GPT-2 models, and Flan-T5 models (trained on GLUE~\cite{glue} datasets) again sourced from HuggingFace without filtering. \hl{As shown in} \autoref{fig:mainw}\hl{, a small number of directions capture dominant structure across models spanning diverse and distinct datasets and tasks}. More details are provided in the \cref{sec:classical_apx}. \hl{This is, to our knowledge, the first instance of compressing such a large and diverse collection of foundation models into a unified subspace, highlighting its potential for large-scale model reuse and environmental efficiency.}

\subsubsection{Finding universal subspaces and applying them to future tasks}
\label{sexc:newtasks}

In this section, the low-rank shared subspaces estimated from a set of available tasks are leveraged to adapt to new, previously unseen tasks. While we do not make theoretical guarantees about reuse on unseen tasks, our experiments show that the approximate shared subspace is empirically reusable across a wide range of practical settings. Concretely, \hl{we reuse the shared principal directions and learn only their task-specific coefficients for the new task. Learning these low-rank coefficients is substantially cheaper than optimizing full-rank weights of size, reducing both computation and memory.} The resulting trainable parameter counts are reported in Table~\ref{tab:vision_models}.
\hl{We find our universal subspace models can have significant impact on the carbon footprint issues of large AI models by making the training, inference and scaling of these models efficient and cheap. As shown in the previous section, we can effectively recycle and replace available pretrained models with a universal subspace model with every individual being represented by a sparse set of coefficients.}
In this section, we show a set of experiments where we utilize the universal subspaces to learn new tasks by freezing the components and simply learning the coefficients using gradient descent. We find that since we are only learning the coefficients, it drastically cuts down the number of parameters required to train the new models. Further, since these coefficients are simply linear scaling values, the optimization is smoother and faster.

\begin{table}[!h]
    \caption{\footnotesize Performance on the GLUE Benchmark. }
    \begin{center}
      \resizebox{0.9\textwidth}{!}{%
        \begin{tabular}{lcccccccc}
        \toprule
        \textbf{Method} & \textbf{Speedup}& \textbf{CoLA} & \textbf{MRPC} & \textbf{RTE} & \textbf{QNLI} & \textbf{SST-2} & \textbf{STS-B} & \textbf{Avg} \\
        \midrule
        LoRA & $1\times$ & 59.56 & 86.76 & 77.61 & 92.53 & 94.72 & 90.81 & 83.67 \\
        Universal order-2 & $2\times$ & 61.82 & 87.25 & 77.62 & 92.71 & 94.15 & 90.48 & 84.01 \\
        Universal order-3 & $1.8\times$ & 62.06 & 86.52 & 75.81 & 92.98 & 94.26 & 90.39 & 83.67 \\
        
        \bottomrule
        \end{tabular}%
      }
    \end{center}
    \label{tab:glue_performance}
\end{table}

We present two experiments - Image Classification using ViT-base and Natural Language Understanding using GLUE benchmark~\cite{glue} with RoBERTa-base model. Both involve creating a universal subspace using publicly available LoRA adapters. Details are provided in the \cref{sec:newtask_apx}. For the GLUE benchmark, we follow the same setup as ~\citep{kopiczko_vera_2023} considering the 6 tasks - CoLA, MRPC, SST-2, QNLI, RTE and STS-B while omitting the time-intensive MNLI and QQP tasks. We initialize our universal subspace using a leave-one-out-setup, where the subspace is calculated using components of all but one LoRA adapter for which the coefficients are learned. For image classification, we utilize publicly available ViT LoRAs to extract our universal subspaces taking care that the data any of these pretrained LoRAs have not seen the data we will be training our coefficients on.
\begin{table}[!h]
    \caption{Image Classification with Vision Transformer.}
    \begin{center}
      \resizebox{0.9\textwidth}{!}{%
    \begin{tabular}{lcccccc}
        \toprule
         & \# \textbf{Training Params} & \textbf{CIFAR100} & \textbf{Food101} & \textbf{Flowers102} & \textbf{CIFAR10} & \textbf{Pets}\\
        \midrule
        Full Training & 86M  & 92.8 & 90.7 & 98.82 & 99.0 & 91.2 \\
        Universal ViT  & 10K  & 90.1 & 89.1 & 90.1 & 96.7 & 89.4 \\
        \bottomrule
    \end{tabular}
    \label{tab:vision_models}
    }
    \end{center}
\end{table}
\autoref{tab:vision_models} and \autoref{tab:glue_performance} \hl{show that our universal subspace enables significantly more efficient and effective learning since only compact coefficients are trained. The storage required to save all these models is also drastically reduced. The ViT models require 150 GB and LLaMA models require 1.6TB of memory in total. Our universal subspace reduces that memory requirement by more than \textbf{100}$\times$.}
\section{Discussion}
\label{sec:experiments}

\hl{This work provides, to the best of our knowledge, the first large-scale, cross-domain analysis showing that neural networks trained across diverse tasks, modalities, initializations, and hyperparameters consistently exhibit an architecture-specific shared low-rank universal subspace at the layer level.} Concretely, by performing layer-wise spectral decompositions and retaining only the leading principal directions, an accurate approximation of these universal subspaces can be extracted. \hl{Empirically, this behavior emerges broadly: in fully finetuned models and LoRA-based adapters, in models trained from scratch, in both generative and discriminative settings, and in multimodal configurations. Moreover, the approximated subspaces generalize to out-of-distribution tasks, where projecting models and learning only a small set of coefficients suffices to recover strong performance. This enables adapting to new tasks without retraining or storing full weights, and supports robust multi-task learning, scalable fine-tuning, and principled model merging within a single unifying framework.}

The practical implications are substantial. By reusing a common set of layer-wise principal directions and learning only lightweight coefficients per task, large models can be extended and served with dramatically reduced computational, memory, and engineering overhead. This directly lowers both the financial and environmental costs of training and deployment, aligning with the broader goals of sustainable and accessible AI. Reducing hardware and energy requirements for adaptation and inference opens participation to under-resourced researchers and institutions while facilitating modular design, data-free or data-minimal model merging, and more maintainable systems. Taken together, these results suggest a path toward scalable, equitable, and interpretable model reuse grounded in a simple geometric principle: most task variation lies in a shared, low-dimensional subspace.

\textbf{Why do these universal subspaces emerge?} While the precise mechanisms driving this phenomenon remain an open area of investigation, several theoretical factors likely contribute to the emergence of these shared structures. First, neural networks are known to exhibit a spectral bias toward low-frequency functions, creating a polynomial decay in eigenvalues that concentrates learning dynamics into a small number of dominant directions~\citep{belfer2024spectral, bietti2019kernelperspectiveregularizingdeep}. Second, modern architectures impose strong inductive biases that constrain the solution space: convolutional structures inherently favor local, Gabor-like patterns~\citep{Krizhevsky2012ImageNet, guth24}, while attention mechanisms prioritize recurring relational circuits~\citep{olah2020zoom, chughtai2023toymodeluniversalityreverse}. Third, the ubiquity of gradient-based optimization -- governed by kernels that are largely invariant to task specifics in the infinite-width limit~\citep{jacot2018ntk} -- inherently prefers smooth solutions, channeling diverse learning trajectories toward shared geometric manifolds~\citep{garipov2018mode}. If these hypotheses hold, the universal subspace likely captures fundamental computational patterns that transcend specific tasks, potentially explaining the efficacy of transfer learning and why diverse problems often benefit from similar architectural modifications.

\nocite{cifar100,food101,flowers102,cars,cimpoi2013describingtextureswild,helber2019eurosatnoveldatasetdeep,Stallkamp-IJCNN-2011,lecun2010mnist,Cheng_2017,Xiao:2010,svhn,parkhi12a, martin21,schuerholt2024sane, kaushik2021understandingcatastrophicforgettingremembering, Kaushik_2024_CVPR,Kaushik2024SourceFreeAI}

\section{Limitations and Future Work}
\label{sec:limitation}
Although we provide conclusive results towards the existence and utility of universal shared subspaces, the current analysis has scope for future research, such as limited interpretability of the shared subspace and the corresponding directions. While it is a critical area of research, it is extremely cumbersome to demonstrate interpretability of the principal directions for each layer of the network. To the best of our knowledge we are not aware of any other literature that performs such an in-depth analysis of the weight space of large models across diverse tasks, data modalities and model architectures. The current approach to approximating a universal subspace relies on pretrained task-specific models (predictors) for tasks, which may not be readily available for new tasks. An interesting direction for future research would be to explore model independent methods for learning a universal shared subspace, potentially derived directly from data. Furthermore, the conditions proposed in \cite{ortiz-jimenez2023taskarithmetic} for enabling task arithmetic rely on localized eigenfunctions which are not conducive to learning a shared universal subspace. As a result, performing task arithmetic within the current framework of a shared universal subspace is non-trivial and warrants further investigation. 
\clearpage
\nocite{kaushik2025eigenloraxrecyclingadaptersprincipal}

\bibliography{main}
\bibliographystyle{iclr2026_conference}
\clearpage
\appendix
\section{Appendix}\label{sec:appendix}
\begin{table}[t]
\centering
\caption{Notation reference.}
\begin{tabular}{ll}
\toprule
\textbf{Notation} & \textbf{Description} \\
\midrule
$\mathcal{H}$ & Separable Hilbert space with inner product 
$\langle\cdot,\cdot\rangle$, norm $\|\cdot\|$. \\
$a \otimes b$ & Rank-one operator $g \mapsto \langle b,g\rangle a$, 
$\|a\otimes b\|_{\mathrm{op}} = \|a\|\,\|b\|$. \\
$T$ & Number of tasks. \\
$\mathcal{T}$ & Distribution over tasks. \\
$D_t$ & Data distribution for task $t$. \\
$S_t=\{(x_{t,i},y_{t,i})\}_{i=1}^{n_t}$ & Dataset of size $n_t$ for task $t$. \\
$f_t^\star \in \mathcal{H}$ & Ground-truth predictor for task $t$. \\
$\hat f_t \in \mathcal{H}$ & Learned predictor for task $t$. \\
$B$ & Uniform bound: $\|f_t^\star\|\le B$ almost surely. \\
$\mathcal{R}_{n_t,D_t}(\mathcal{H})$ & Per-task estimation error rate (e.g.\ $\tilde O(1/\sqrt{n_t})$). \\
$\eta_t$ & Per-task error: $\eta_t := \mathcal{R}_{n_t,D_t}(\mathcal{H}) 
+ \sqrt{\tfrac{\ln(2T/\delta)}{2n_t}}$. \\
$\bar\eta$ & Average error: $\tfrac{1}{T}\sum_{t=1}^T \eta_t$. \\
$\overline{\eta_t^{\,2}}$ & Average squared error: $\tfrac{1}{T}\sum_{t=1}^T \eta_t^2$. \\
$\cS$ & Population operator: $\cS=\mathbb{E}_{t\sim\mathcal{T}}[f_t^\star\otimes f_t^\star]$. \\
$\hat \cS$ & Empirical operator (true predictors): 
$\tfrac{1}{T}\sum_{t=1}^T f_t^\star\otimes f_t^\star$. \\
$\tilde \cS$ & Empirical operator (learned predictors): 
$\tfrac{1}{T}\sum_{t=1}^T \hat f_t\otimes \hat f_t$. \\
$\lambda_1 \ge \lambda_2 \ge \dots$ & Eigenvalues of $\cS$. \\
$\phi_i$ & Orthonormal eigenvectors of $\cS$. \\
$P_k$ & Projector onto top-$k$ eigenspace of $\cS$. \\
$\tilde P_k$ & Projector onto top-$k$ eigenspace of $\tilde \cS$. \\
$\gamma_k$ & Eigengap: $\gamma_k := \lambda_k - \lambda_{k+1} > 0$. \\
$\|A\|_{\mathrm{op}}$ & Operator (spectral) norm. \\
$\|A\|_{HS}$ & Hilbert–Schmidt norm. \\
$r(V)$ & Intrinsic/Effective rank: $\mathrm{tr}(V)/\|V\|_{\mathrm{op}}$. \\
$X_t$ & Centered operator: $X_t := f_t^\star\otimes f_t^\star - \cS$. \\
$V$ & Variance operator: $V := \sum_{t=1}^T \mathbb{E}[X_t^2]$. \\
$\delta, \delta_t, \delta_T$ & Failure probabilities (global, per-task, across-task). \\
\bottomrule
\end{tabular}
\end{table}

\begin{figure}[h]
  \centering
  \includegraphics[width=\textwidth]{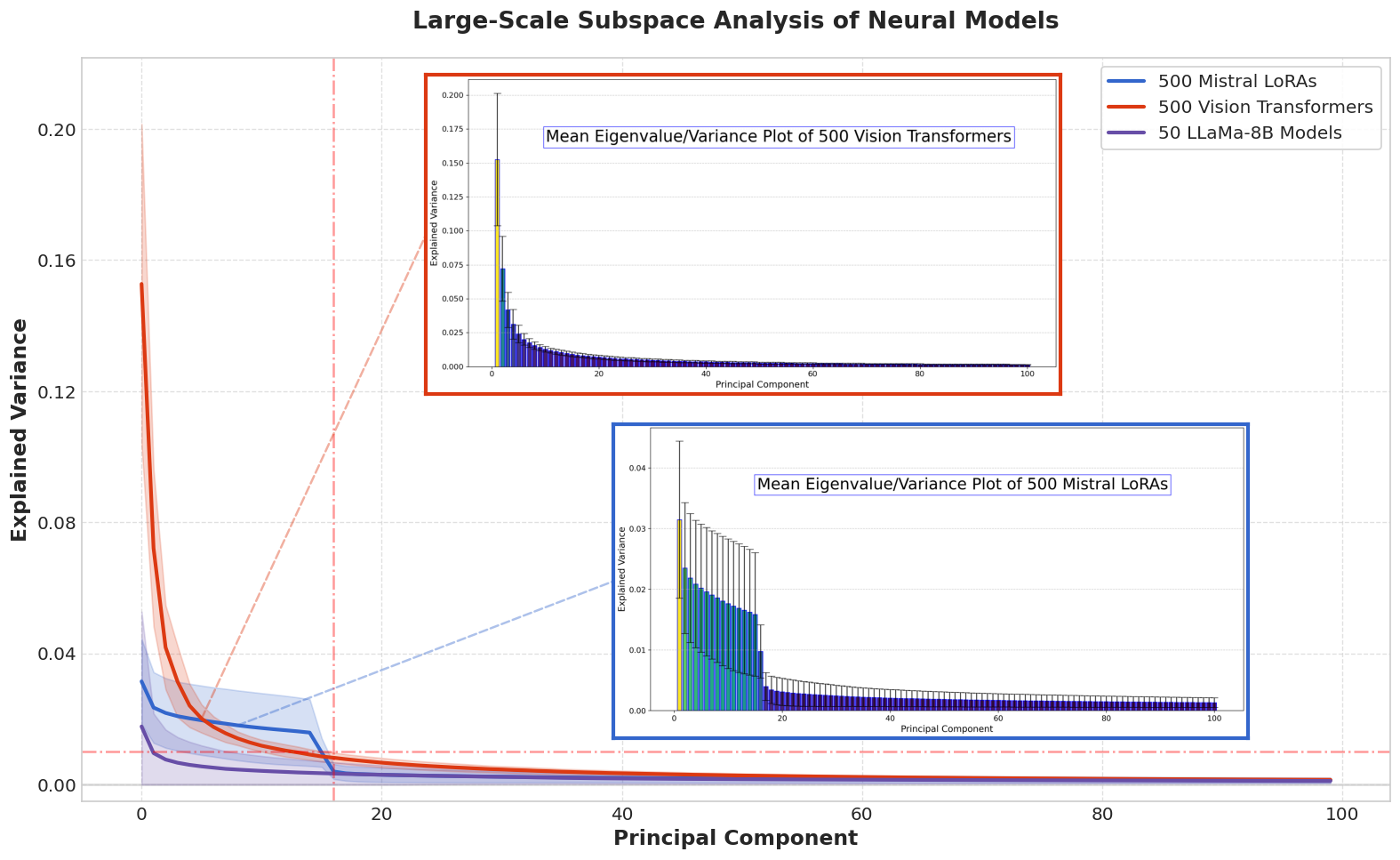}
  \label{fig:teaserappx}
 \caption{ \textbf{Empirical Evidence for (Universal) Joint Weight Subspaces.}
    This figure illustrates the existence of joint low-dimensional subspaces across models trained on diverse tasks. We plot the average explained variance of the top few principal components of weight matrices from 500 Mistral-7B LoRAs, 500 Vision Transformers, and 50 LLaMA-8B models. Despite differences in modality, data, and training objective, all models exhibit rapid spectral decay - indicating that a small number of directions dominate across layers and settings. This consistent structure provides strong evidence for the presence of joint/universal subspaces, supporting our hypothesis that deep networks systematically reuse a common representational basis. Often, this shared subspace can be seen distinctly. The presence of the subspace has significant implications for deep learning. Not only can large number of models be compressed into a single, lighter Universal model with difference represented as lightweight coefficients, training on future tasks simply becomes tuning those coefficients. Since the basis are fixed, training becomes simpler and quicker. However, this convergence to similar subspace raises few important questions - is it possible to recover the "true" Universal Subspace without learning with huge amounts of data? Is this lack of diversity a bottleneck from current family of deep models?}
\end{figure}
\subsection{Related Work}
\label{sec:related_work}
Several lines of prior research support the core intuition behind our universal subspace hypothesis, though they do not provide a unified, scalable framework for identifying and leveraging such subspaces across architectures, tasks, and modalities. The Neural Tangent Kernel framework reinforces this idea, demonstrating that, in the infinite-width regime, training dynamics are governed by a kernel largely invariant to task specifics, implying the presence of common functional subspaces. ~\citep{jacot2018ntk}. This result implies that training is implicitly constrained to a shared function space, suggesting the existence of low-dimensional structures that generalize across tasks. Complementing this, works in mechanistic interpretability has uncovered modular and recurring patterns that consistently re-emerge in independently trained models~\citep{olah2020zoom, chughtai2023toymodeluniversalityreverse}, supporting the notion of structural universality in network representations. The closest related work is our previous work~\cite{kaushik2025eigenloraxrecyclingadaptersprincipal}, which investigates principal subspaces in low rank adapters for highly parameter efficient finetuning. That work considered a derivation of the analysis we present in this work. While works such as \cite{martin2025setolsemiempiricaltheorydeep, Mao_2024} have explored neural properties to explain generalization or loss landscape convergence, they do not address the convergence of parametric properties across distinct models trained on disjoint data. Importantly, in contrast to the abstract, weak, or speculative notions of universality presented in earlier works like~\cite{chughtai2023toymodeluniversalityreverse, olah2020zoom} - which focus primarily on representations and are arguably easier to demonstrate due to direct data dependency - ours is the first work to provide concrete evidence and a clear universal hypothesis at the \textit{neural parameter} level.

Empirical studies further strengthen this perspective. The lottery ticket hypothesis~\citep{frankle2019lottery} demonstrates that overparameterized networks contain sparse subnetworks capable of matching full-model performance, implying that task-relevant information resides in a small, structured subset of weights. Similarly, mode connectivity studies~\citep{garipov2018mode} reveal that seemingly isolated optima in parameter space are often connected by low-loss paths, suggesting that task solutions lie on a shared manifold. In convolutional models, Krizhevsky et al.~\citep{Krizhevsky2012ImageNet} famously observed that early layers consistently learn Gabor-like filters, indicating a universal inductive bias in early representations. More recent works~\citep{guth24, guth2024universalityneuralencodingscnns} extends this observation to deeper layers, showing that certain eigenvectors of trained convolutional layers recur across networks trained on different datasets.

While these studies are suggestive of shared structures in neural representations or parameters, they remain limited in their focus, application and analysis.
Our work fills this critical gap by presenting a principled and empirically validated method for discovering and utilizing universal parametric subspaces that span across architectures, tasks, and modalities. By conducting large-scale spectral analyses of over large number of diverse architectures, models and tasks, we demonstrate that a small number of principal directions consistently capture the majority of task-relevant variation. We then operationalize these findings by developing a practical framework for reusing these subspaces for parameter-efficient finetuning, task adaptation, and model merging, achieving competitive performance while dramatically reducing memory and compute requirements.


\subsection{Theoretical Analysis}\label{sec:theoretical_analysis}
We apply a standard generalization bound over the squared error between the task function and its projection onto the shared subspace:
\[
\ell(f_t, x) = \|f_t(x) - f_{t,k}(x)\|^2
\]

To justify the application of PAC-style bounds, we verify that this loss is bounded. We assume that each task predictor \(f_t\) lies in a Reproducing Kernel Hilbert Space (RKHS) with norm bounded by \(B\), i.e., \(\|f_t\|_{\mathcal{H}} \leq B\), and that the projection \(f_{t,k}\) onto the learned shared subspace \(\hat{\mathcal{H}}_k\) also satisfies \(\|f_{t,k}\|_{\mathcal{H}} \leq B\).

Using the reproducing property and assuming a kernel bound \(\kappa^2 = \sup_{x \in \mathcal{X}} \|\phi(x)\|^2\), we have for any \(x\):
\[
\|f_t(x)\| \leq \kappa B \quad \text{and} \quad \|f_{t,k}(x)\| \leq \kappa B
\]

Thus, the pointwise squared loss is bounded as:
\[
\|f_t(x) - f_{t,k}(x)\|^2 \leq (\|f_t(x)\| + \|f_{t,k}(x)\|)^2 \leq (2\kappa B)^2 = 4\kappa^2 B^2
\]

Therefore, the loss function is bounded in \([0, 4\kappa^2 B^2]\), satisfying the conditions required for PAC-style generalization bounds to hold.

\begin{lemma}[Matrix Bernstein for self-adjoint operators]\label{lem:bernstein}
There exist absolute constants $C>0$ such that, for any $\delta_T\in(0,1)$, we have with probability at least $1-\delta_T$,
\[
\big\|\hat \cS - \cS\big\|_{\mathrm{op}} \leq C\,B^2\!\left[\sqrt{\frac{\ln(c/\delta_T)}{T}}\;+\;\frac{\ln(c/\delta_T)}{T}\right]
\]
\end{lemma}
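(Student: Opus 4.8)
The plan is to write $\hat\cS-\cS=\frac1T\sum_{t=1}^T X_t$ with $X_t:=f_t^\star\otimes f_t^\star-\cS$ and to apply the intrinsic-dimension (effective-rank) form of the self-adjoint operator Bernstein inequality to this sum. Since the tasks are i.i.d.\ from $\cT$ and $f_t^\star$ is a deterministic function of the task, the rank-one operators $f_t^\star\otimes f_t^\star$ are i.i.d.; each is trace-class with $\tr(f_t^\star\otimes f_t^\star)=\norm{f_t^\star}^2\le B^2$, and $\tr(\cS)<\infty$ by \cref{ass:realizability}, so the $X_t$ are i.i.d., self-adjoint, mean-zero, trace-class operators.

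First I would record the two inputs Bernstein requires. Almost-sure bound: $\opnorm{f_t^\star\otimes f_t^\star}=\norm{f_t^\star}^2\le B^2$ and $\opnorm{\cS}\le\E\norm{f_t^\star}^2\le B^2$, hence $\opnorm{X_t}\le 2B^2$ a.s. Variance: using the identity $(f\otimes f)^2=\norm{f}^2\,(f\otimes f)$ together with $\E X_t=0$, one gets $\E[X_t^2]=\E[(f_t^\star\otimes f_t^\star)^2]-\cS^2\preceq\E[\norm{f_t^\star}^2(f_t^\star\otimes f_t^\star)]\preceq B^2\cS$, so $V:=\sum_{t=1}^T\E[X_t^2]$ satisfies $V\preceq M$ with $M:=TB^2\cS$. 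The crucial point is that this PSD dominating operator has $\opnorm{M}=TB^2\opnorm{\cS}\le TB^4$ and intrinsic dimension $\tr(M)/\opnorm{M}=\tr(\cS)/\opnorm{\cS}\le\kappa$, exactly by the bounded-effective-rank clause of \cref{ass:realizability}.

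Next I would invoke the Hilbert-space intrinsic-dimension Bernstein bound (e.g.\ the operator versions of Minsker or the intrinsic-dimension variant of Tropp), applying the one-sided estimate to both $X_t$ and $-X_t$ and union-bounding so as to control $\opnorm{\cdot}$: with probability at least $1-\delta_T$,
\[
\opnorm{\sum_{t=1}^T X_t}\ \le\ C_1\sqrt{\opnorm{M}\,\ln\!\Big(\tfrac{c(1+\kappa)}{\delta_T}\Big)}\ +\ C_2\,B^2\,\ln\!\Big(\tfrac{c(1+\kappa)}{\delta_T}\Big).
\]
Substituting $\opnorm{M}\le TB^4$, folding the absolute factor $1+\kappa$ into the constant $c$, and dividing through by $T$ yields $\opnorm{\hat\cS-\cS}\le C\,B^2\big(\sqrt{\ln(c/\delta_T)/T}+\ln(c/\delta_T)/T\big)$, which is the claim.

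The main obstacle is the infinite-dimensional bookkeeping rather than any hard inequality: the elementary finite-dimensional matrix Bernstein bound carries a $\log d$ factor that is meaningless when $\cH$ is infinite-dimensional, so the step demanding care is verifying the hypotheses of the Hilbert-space intrinsic-dimension variant --- trace-class summands, a legitimate PSD dominating operator $M\succeq V$ with finite $\tr(M)$ --- and checking that the choice $M=TB^2\cS$ converts \cref{ass:realizability}'s effective-rank bound into a genuine absolute-constant replacement for the ambient dimension. Once that is in place, the lemma follows by routine substitution of the two moment bounds above.
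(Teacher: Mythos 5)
Your proposal is correct and follows essentially the same route as the paper: both decompose $\hat\cS-\cS$ into the i.i.d.\ mean-zero self-adjoint summands $X_t=f_t^\star\otimes f_t^\star-\cS$, establish the almost-sure bound $\opnorm{X_t}\le 2B^2$ and the PSD variance domination $\E[X_t^2]\preceq O(B^2)\cS$, and then invoke the intrinsic-dimension (Minsker/Koltchinskii) form of operator Bernstein, using the effective-rank clause of Assumption~\ref{ass:realizability} to reduce the logarithmic factor to $\ln(c(1+\kappa)/\delta_T)$ before absorbing $1+\kappa$ into the constant. The only cosmetic difference is that you obtain the slightly sharper bound $\E[X_t^2]\preceq B^2\cS$ rather than the paper's $2B^2\cS$, which changes nothing after constants are absorbed.
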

\begin{proof}
Operator Bernstein (intrinsic form).\\
Let $X_1,\ldots,X_T$ be independent, mean-zero, self-adjoint, bounded operators on a separable Hilbert space. Suppose
\[
\|X_t\|_{\mathrm{op}} \le L \quad \text{a.s. for all } t.
\]
Then from \citep{minsker2017extensionsbernsteinsinequalityselfadjoint, koltchinskii2014concentrationinequalitiesmomentbounds} there exist absolute constants $C,c>0$ such that for every $\delta\in(0,1)$,
\[
\Bigg\|\frac{1}{T}\sum_{t=1}^T X_t\Bigg\|_{\mathrm{op}}
\;\le\;
C\!\left[
\sqrt{\,\frac{\Big\|\sum_{t=1}^T \mathbb{E}[X_t^2]\Big\|_{\mathrm{op}}}{T^2}
\,\ln\!\Bigg(\frac{c\Big(1+\dfrac{\operatorname{tr}\!\big(\sum_{t=1}^T \mathbb{E}[X_t^2]\big)}
{\Big\|\sum_{t=1}^T \mathbb{E}[X_t^2]\Big\|_{\mathrm{op}}}\Big)}{\delta_T}\Bigg)}
\;+\;
\frac{L}{T}\,\ln\!\Bigg(\frac{c\Big(1+\dfrac{\operatorname{tr}\!\big(\sum_{t=1}^T \mathbb{E}[X_t^2]\big)}
{\Big\|\sum_{t=1}^T \mathbb{E}[X_t^2]\Big\|_{\mathrm{op}}}\Big)}{\delta_T}\Bigg)
\right]
\]
with probability at least $1-\delta_T$.

Application to $X_t = f_t^\star \otimes f_t^\star - \cS$ with $\|f_t^\star\| \le B$ a.s.

We have
\[
\|X_t\|_{\mathrm{op}} 
\le \|f_t^\star\|^2 + \|\cS\|_{\mathrm{op}} 
\le B^2 + \mathbb{E}\|f^\star\|^2 
\le 2B^2 .
\]

so $L \le 2B^2$.
Moreover, for $X_t = f_t^\star \otimes f_t^\star - \cS$ we have
\[
\mathbb E[X_t^2] \;\preceq\; 2B^2 \cS.
\]
Hence
\[
\left\|\sum_{t=1}^T \mathbb E[X_t^2]\right\|_{\mathrm{op}} \;\le\; 2TB^2\|\cS\|_{\mathrm{op}},
\qquad 
\operatorname{tr}\!\left(\sum_{t=1}^T \mathbb E[X_t^2]\right) \;\le\; 2TB^2 \operatorname{tr}(\cS).
\]
By asumption \ref{ass:realizability}, 
\[
\frac{\operatorname{tr}(\sum_{t=1}^T \mathbb E[X_t^2])}{\big\|\sum_{t=1}^T \mathbb E[X_t^2]\big\|_{\mathrm{op}}}
\;\le\; \frac{\operatorname{tr}(\cS)}{\|\cS\|_{\mathrm{op}}}
\;\le\; \kappa.
\]
Therefore the intrinsic logarithmic factor in Bernstein reduces to
\[
\ln\!\left(\frac{c(1+\kappa)}{\delta_T}\right),
\]
and since $\kappa$ is a fixed constant, $1+\kappa$ can be absorbed into $c$.

Plugging into Bernstein gives
\[
\|\hat \cS - \cS\|_{\mathrm{op}}
\;\le\;
C\left[
\sqrt{\frac{2B^2\|\cS\|_{\mathrm{op}}\,\ln(c/\delta_T)}{T}}
\;+\;
\frac{2B^2\,\ln(c/\delta_T)}{T}
\right],
\]
with probability at least $1-\delta_T$.

\end{proof}

\begin{lemma}[Davis--Kahan, sin-$\Theta$]\label{lem:dk}
Let $\gamma_k>0$. Then
\[
\opnorm{\tilde P_k - P_k}\ \le\ \frac{2}{\gamma_k}\,\opnorm{\tilde \cS - \cS}.
\]
using definition of $\gamma_k$ from definition \ref{def:task_operators}.
\end{lemma}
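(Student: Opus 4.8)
The plan is to reduce the claim to the operator-norm Davis--Kahan $\sin\Theta$ inequality, which I would then either cite or derive directly from a Sylvester-equation argument since only the two operators $\cS$ and $\tilde\cS$ are in play. First I would record the elementary fact that for two orthogonal projections of equal rank, $\opnorm{\tilde P_k - P_k} = \opnorm{\tilde P_k^\perp P_k} = \opnorm{\sin\Theta}$, the sine of the largest principal angle between $\mathrm{range}(P_k)$ and $\mathrm{range}(\tilde P_k)$; this reduces the task to bounding $\opnorm{\tilde P_k^\perp P_k}$.

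Next I would set $E := \tilde\cS - \cS$ and introduce $U$ and $\tilde U_\perp$ with orthonormal columns spanning $\mathrm{range}(P_k)$ and $\mathrm{range}(\tilde P_k)^\perp$ respectively, so that $\cS U = U\Lambda_1$ with $\Lambda_1 = \mathrm{diag}(\lambda_1,\dots,\lambda_k)$ and $\tilde\cS\,\tilde U_\perp = \tilde U_\perp\tilde\Lambda_2$ with $\mathrm{spec}(\tilde\Lambda_2)\subseteq[0,\tilde\lambda_{k+1}]$. Computing $\tilde U_\perp^{*}\cS U$ both directly and as $\tilde U_\perp^{*}\tilde\cS U - \tilde U_\perp^{*}EU$, and writing $R := \tilde U_\perp^{*}U$, gives the Sylvester identity $R\Lambda_1 - \tilde\Lambda_2 R = \tilde U_\perp^{*}EU$, hence $\opnorm{R\Lambda_1 - \tilde\Lambda_2 R}\le\opnorm{E}$. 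Since $\mathrm{spec}(\Lambda_1)\subseteq[\lambda_k,\lambda_1]$ and $\mathrm{spec}(\tilde\Lambda_2)\subseteq[0,\tilde\lambda_{k+1}]$ are separated, the standard bound for Sylvester operators with separated spectra yields $\opnorm{R}\le\opnorm{E}/(\lambda_k-\tilde\lambda_{k+1})$, and Weyl's inequality ($\tilde\lambda_{k+1}\le\lambda_{k+1}+\opnorm{E}$) turns the denominator into $\gamma_k-\opnorm{E}$.

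To land the constant $2$ cleanly I would split on the size of $\opnorm{E}$: when $\opnorm{E}\le\gamma_k/2$ the denominator is at least $\gamma_k/2$, giving $\opnorm{\tilde P_k-P_k}\le 2\opnorm{E}/\gamma_k$; when $\opnorm{E}>\gamma_k/2$ the right-hand side already exceeds $1\ge\opnorm{\tilde P_k-P_k}$ (the difference of two orthogonal projections has operator norm at most $1$), so the inequality holds trivially. Either way $\opnorm{\tilde P_k-P_k}\le\frac{2}{\gamma_k}\opnorm{\tilde\cS-\cS}$; feeding in the operator bound \eqref{eq:op-main} then yields \eqref{eq:subspace-main}.

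I expect the only real obstacle to be the infinite-dimensional bookkeeping rather than any inequality: one must justify that $\tilde P_k$ is even well-defined, i.e.\ that $\tilde\cS$ retains a spectral gap after index $k$. This is precisely what the regime $\opnorm{E}<\gamma_k/2$ buys --- by Weyl, $\tilde\lambda_k\ge\lambda_k-\opnorm{E}>\lambda_{k+1}+\opnorm{E}\ge\tilde\lambda_{k+1}$ --- together with trace-classness of $\cS$ from \cref{ass:realizability}, which makes the top-$k$ eigenspace a genuine finite-dimensional object whose spectral projector admits the usual functional-calculus (or contour-integral) representation. In the complementary regime the statement is vacuous, so no such structure is needed, and the remainder is just the textbook Sylvester-separation estimate and Weyl's inequality for compact self-adjoint operators.
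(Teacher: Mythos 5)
The paper states Lemma~\ref{lem:dk} without proof, treating the operator-norm Davis--Kahan $\sin\Theta$ bound with gap $\gamma_k$ and constant $2$ as a standard citation, so there is no in-paper argument to compare against; your job was to supply one, and you did. Your derivation is correct and is essentially the textbook proof of this ``data-free'' variant: reduce $\opnorm{\tilde P_k - P_k}$ to $\opnorm{\sin\Theta}=\opnorm{\tilde P_k^\perp P_k}$ using equal ranks, set up the Sylvester identity $R\Lambda_1-\tilde\Lambda_2 R=\tilde U_\perp^*EU$, invoke the separation bound to get $\opnorm{R}\le\opnorm{E}/(\lambda_k-\tilde\lambda_{k+1})$, apply Weyl to replace $\tilde\lambda_{k+1}$ by $\lambda_{k+1}+\opnorm{E}$, and then split on $\opnorm{E}\lessgtr\gamma_k/2$ (using $\opnorm{\tilde P_k-P_k}\le1$ to dispatch the large-perturbation regime) to land the constant $2$. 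You also correctly flag the one place that needs care in the Hilbert-space setting --- well-definedness of the spectral projector $\tilde P_k$ --- and resolve it by the same case split together with compactness/trace-classness of $\cS$. The only cosmetic nit: the equal-rank identity $\opnorm{\tilde P_k-P_k}=\opnorm{\tilde P_k^\perp P_k}$ deserves a one-line justification (equality of canonical angles for same-rank projections), but that is folklore and does not affect correctness.
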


\begin{theorem}[Restating Two-level convergence to the shared subspace theorem]\label{thm:twolevel_app}
Assume \ref{ass:realizability}--\ref{ass:pertask}. Let $c_1, c_2$ be any absolute constants.
For any $\delta\in(0,1)$, choose $\delta_t=\delta/(2T)$ and set $\delta_T=\delta/2$.
With probability at least $1-\delta$ (over tasks and all per-task samples),
\begin{equation}\label{eq:op-app}
\opnorm{\tilde \cS - \cS}
\ \le\ 
c_1 B^2 \sqrt{\frac{\ln(c_2/\delta)}{T}}
\ +\ (2B\bar \eta+\overline{\eta^2})
\end{equation}
If moreover $\gamma_k>0$, then
\begin{equation}\label{eq:subspace-app}
\opnorm{\tilde P_k - P_k}
\ \le\ \frac{2}{\gamma_k}\!
\left(
c_1 B^2 \sqrt{\frac{\ln(c_2/\delta)}{T}}
+ (2B\bar \eta+\overline{\eta^2})\right).
\end{equation}
where $\bar \eta = \frac{1}{T}\sum^{T}_{t=1}\eta_t$, $\overline{\eta^2_t} = \frac{1}{T}\sum^{T}_{t=1}\eta^2_t$ and $\eta_t$ is defined same as in assumption \ref{ass:pertask}
\end{theorem}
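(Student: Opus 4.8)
The plan is to prove the operator-norm bound \eqref{eq:op-app} by a triangle-inequality decomposition $\opnorm{\tilde\cS-\cS}\le\opnorm{\tilde\cS-\hat\cS}+\opnorm{\hat\cS-\cS}$, controlling the second term by \cref{lem:bernstein} and the first term by a deterministic perturbation argument using \cref{ass:pertask}, and then deducing \eqref{eq:subspace-app} immediately from \cref{lem:dk}. First I would handle the statistical term: invoking \cref{lem:bernstein} with $\delta_T=\delta/2$ gives, with probability at least $1-\delta/2$,
\[
\opnorm{\hat\cS-\cS}\le C B^2\!\left[\sqrt{\tfrac{\ln(c/\delta_T)}{T}}+\tfrac{\ln(c/\delta_T)}{T}\right]\le c_1 B^2\sqrt{\tfrac{\ln(c_2/\delta)}{T}},
\]
where the last step absorbs the $\ln/T$ term into the $\sqrt{\ln/T}$ term (valid up to adjusting the absolute constants, since $\ln(c/\delta_T)/T\le\sqrt{\ln(c/\delta_T)/T}$ whenever $T\ge\ln(c/\delta_T)$, and otherwise the bound is vacuous after inflating $c_1$) and folds $\delta_T=\delta/2$, $c$ into the fresh constants $c_1,c_2$.

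Next I would bound the per-task approximation term. Writing $\hat f_t\otimes\hat f_t-f_t^\star\otimes f_t^\star=(\hat f_t-f_t^\star)\otimes\hat f_t+f_t^\star\otimes(\hat f_t-f_t^\star)$ and using $\opnorm{a\otimes b}=\norm{a}\norm{b}$ together with $\norm{\hat f_t}\le\norm{f_t^\star}+\norm{\hat f_t-f_t^\star}\le B+\eta_t$, each summand is bounded by $\eta_t(B+\eta_t)+B\eta_t=2B\eta_t+\eta_t^2$ on the event $\norm{\hat f_t-f_t^\star}\le\eta_t$. By the triangle inequality over the $T$ terms,
\[
\opnorm{\tilde\cS-\hat\cS}\le\frac1T\sum_{t=1}^T\bigl(2B\eta_t+\eta_t^2\bigr)=2B\bar\eta+\overline{\eta^2}.
\]
A union bound over the $T$ per-task events (each failing with probability at most $\delta_t=\delta/(2T)$) plus the Bernstein event shows all bounds hold simultaneously with probability at least $1-\delta$, which with $\delta_t=\delta/(2T)$ matches the definition $\eta_t=\cR_{n_t,D_t}(\cH)+\sqrt{\ln(2T/\delta)/(2n_t)}$ used in the theorem. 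Adding the two pieces gives \eqref{eq:op-app}, and feeding this into \cref{lem:dk} ($\opnorm{\tilde P_k-P_k}\le\frac{2}{\gamma_k}\opnorm{\tilde\cS-\cS}$) yields \eqref{eq:subspace-app}.

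I expect the only real subtlety to be the bookkeeping: correctly allocating the failure budget ($\delta/2$ to Bernstein, $\delta/(2T)$ to each of the $T$ per-task events) so the union bound closes at $1-\delta$, and absorbing the lower-order $\ln(c/\delta_T)/T$ Bernstein term and the constant $1+\kappa$ into the absolute constants $c_1,c_2$ without circularity. The perturbation identity for $\hat f_t\otimes\hat f_t-f_t^\star\otimes f_t^\star$ and the bound $\E[X_t^2]\preceq 2B^2\cS$ used inside \cref{lem:bernstein} are the technical heart, but both are routine once the decomposition is set up; the main obstacle is simply ensuring the constants and probabilities are tracked consistently across the two levels of randomness.
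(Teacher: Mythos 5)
Your proposal is correct and follows essentially the same route as the paper's proof: a triangle-inequality split $\opnorm{\tilde\cS-\cS}\le\opnorm{\tilde\cS-\hat\cS}+\opnorm{\hat\cS-\cS}$, the matrix Bernstein bound (\cref{lem:bernstein}) for the across-task term, the per-task perturbation bound $\opnorm{\hat f_t\otimes\hat f_t-f_t^\star\otimes f_t^\star}\le 2B\eta_t+\eta_t^2$ averaged over tasks, a union bound splitting the failure budget as $\delta/2$ plus $T\cdot\delta/(2T)$, and Davis--Kahan (\cref{lem:dk}) to pass to the projectors. The only thing you make explicit that the paper leaves implicit is the absorption of the lower-order $\ln(c/\delta_T)/T$ Bernstein term into $c_1 B^2\sqrt{\ln(c_2/\delta)/T}$ (justified since $\opnorm{\hat\cS-\cS}\le 2B^2$ deterministically, so the bound is trivially satisfiable once the square-root term exceeds a constant), which is a welcome bit of extra care rather than a different argument.
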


\begin{proof}[Proof of Theorem~\ref{thm:twolevel}]
\textbf{(i) Triangle split.}
$\opnorm{\tilde \cS - \cS}\le \opnorm{\tilde \cS - \hat \cS} + \opnorm{\hat \cS - \cS}$.

\noindent\textbf{(ii) Within-task term.}
We know that,
\begin{align*}
\opnorm{\hat f_t\otimes \hat f_t - f^\star_t\otimes f^\star_t}
&\le \norm{\hat f_t-f^\star_t}\,(\norm{\hat f_t}+\norm{f^\star_t})\\
&\le \|\hat f_t - f^\star_t\|(\|\hat f_t\|+\|f^\star_t\|)\\
&\le \eta_t\,(2B+\eta_t) \qquad \text{(since $\|\hat f_t\|\le \|f_t^\star\|+\|\hat f_t-f_t^\star\|\le B+\eta_t$)}\\
&= 2B\,\eta_t + \eta_t^2 .
\end{align*}

Averaging and using the triangle inequality for operator norms,
\begin{align*}
\opnorm{\tilde \cS - \hat \cS} &\le 2B\,\bar\eta + \overline{\eta^2}\\
\end{align*}
This holds on the event $\bigcap_{t=1}^T\{\norm{\hat f_t-f_t^\star}\le \eta_t\}$, whose probability is at least $1-\sum_t\delta_t=1-\delta/2$.

\noindent\textbf{(iii) Across-task term.}
Let $X_t:=f_t^\star\otimes f_t^\star - \E[f^\star\otimes f^\star]$.
Then $X_t$ are independent, mean-zero, self-adjoint, and
$\opnorm{X_t}\le \norm{f_t^\star}^2 + \opnorm{S}\le 2B^2$.
Lemma~\ref{lem:bernstein} (with $R\asymp B^2$) yields
\[
\opnorm{\hat \cS - \cS}\le c_1 B^2 \sqrt{\tfrac{\ln(c_2/\delta)}{T}}
\]
\begin{align*}
\opnorm{\tilde \cS - \cS}
\ &\le\ 
c_1 B^2 \sqrt{\frac{\ln(c_2/\delta)}{T}}
\ +\ 2B\,\left(\sum^T_{t=1}\cR_{n_t,D_t}(\cH)\ + \sqrt{\frac{\ln{(2T/\delta)}}{2n_t}}\right) +\ \left(\sum^T_{t=1}\cR^2_{n_t,D_t}(\cH)\ + \frac{\ln{(2T/\delta)}}{2n_t}\right)\\
&\le\ 
c_1 B^2 \sqrt{\frac{\ln(c_2/\delta)}{T}}
\ +\ O\left(\sum^T_{t=1}\cR^2_{n_t,D_t}(\cH)\ + \frac{\ln{(2T/\delta)}}{2n_t}\right)
\end{align*}
with probability at least $1-\delta_T=1-\delta/2$.

\noindent\textbf{(iv) Union bound and Davis--Kahan.}
Combining (ii)--(iii) with a union bound gives \eqref{eq:op-main}. 
Lemma~\ref{lem:dk} then implies \eqref{eq:subspace-main}.
\end{proof}

\begin{definition}[Population projection risk]
For a $k$-dimensional subspace $\mathcal \cH^\star_k\subset\mathcal H$, define
\[
\mathcal R(\mathcal \cH^\star_k):=\E_{t\sim\tau}\norm{f_t^\star - P_{\mathcal \cH^\star_k}f_t^\star}^2 .
\]
\end{definition}

\begin{corollary}[Excess projection risk of the learned subspace]\label{cor:risk}
Under the event of Theorem~\ref{thm:twolevel},
\[
\mathcal R(\tilde{\mathcal H}_k)
\ \le\ 
\sum_{i>k}\lambda_i
\ +\ \frac{2\,\tr(S)}{\gamma_k}\!
\left(
c_1 B^2 \sqrt{\frac{\ln(c_2/\delta)}{T}}
+ 2B\,\bar\eta + \overline{\eta^2}
\right).
\]
\end{corollary}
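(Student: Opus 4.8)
The plan is to bound $\mathcal R(\tilde{\mathcal H}_k) = \E_{t\sim\tau}\norm{f_t^\star - \tilde P_k f_t^\star}^2$ by comparing the projection onto the \emph{learned} top-$k$ subspace against the projection onto the \emph{population} top-$k$ subspace $\mathcal H_k^\star$. First I would write $\mathcal R(\tilde{\mathcal H}_k) = \E_{t\sim\tau}\norm{(I-\tilde P_k)f_t^\star}^2$ and use the triangle-type decomposition $(I-\tilde P_k) = (I-P_k) + (P_k - \tilde P_k)$, so that
\[
\norm{(I-\tilde P_k)f_t^\star} \le \norm{(I-P_k)f_t^\star} + \norm{(P_k-\tilde P_k)f_t^\star}.
\]
Squaring and taking expectations (using $(a+b)^2 \le a^2 + 2ab + b^2$, or more simply the identity that for any operator $\mathcal R(\mathcal H_k) = \tr((I-P_k)\cS)$ when the subspace is the eigenspace), the first term contributes exactly $\E_{t}\norm{(I-P_k)f_t^\star}^2 = \tr\big((I-P_k)\cS\big) = \sum_{i>k}\lambda_i$, which is the stated leading term. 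The cross term and the quadratic term must be absorbed into the perturbation contribution.

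The cleanest route for the remaining terms is to bound $\E_t\norm{(P_k - \tilde P_k)f_t^\star}^2 \le \opnorm{P_k-\tilde P_k}^2\,\E_t\norm{f_t^\star}^2 = \opnorm{\tilde P_k - P_k}^2\,\tr(\cS)$, and for the cross term use Cauchy--Schwarz: $2\E_t \ip{(I-P_k)f_t^\star}{(P_k-\tilde P_k)f_t^\star} \le 2\opnorm{\tilde P_k - P_k}\,\E_t\norm{f_t^\star}^2 = 2\opnorm{\tilde P_k - P_k}\tr(\cS)$ (bounding the first factor in norm by $1$). Summing, $\mathcal R(\tilde{\mathcal H}_k) \le \sum_{i>k}\lambda_i + \big(2\opnorm{\tilde P_k-P_k} + \opnorm{\tilde P_k-P_k}^2\big)\tr(\cS)$. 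Then I would invoke Theorem~\ref{thm:twolevel}, equation~\eqref{eq:subspace-main}, to replace $\opnorm{\tilde P_k - P_k}$ by $\tfrac{2}{\gamma_k}\big(c_1 B^2\sqrt{\log(c_2/\delta)/T} + 2B\bar\eta + \overline{\eta^2}\big)$, and drop the higher-order $\opnorm{\tilde P_k - P_k}^2$ term (it is dominated by the linear term up to constants, which can be folded into $c_1, c_2$, or one simply assumes the perturbation is $\le 1$ so the square is smaller). This yields exactly the claimed bound with the factor $\tfrac{2\tr(\cS)}{\gamma_k}$ out front.

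The main obstacle is handling the cross term and the quadratic perturbation term cleanly so that the final bound has the \emph{linear} dependence on the perturbation magnitude displayed in the corollary, rather than a spurious quadratic term. The honest statement is that $\mathcal R(\tilde{\mathcal H}_k) - \sum_{i>k}\lambda_i$ is controlled by $\opnorm{\tilde P_k - P_k}$ times $\tr(\cS)$ up to a constant, provided the perturbation is bounded (say $\le 1$, which holds on the event of Theorem~\ref{thm:twolevel} once $T$ is large enough and the $\eta_t$ are small); otherwise one carries the $\opnorm{\tilde P_k-P_k}^2$ term explicitly. A slightly more refined argument uses $\|(I-P_k)f_t^\star\|\le\|f_t^\star\|$ only where needed and keeps $\tr(\cS)$ as the uniform second-moment bound via Assumption~\ref{ass:realizability}. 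I would present the bound on the event of Theorem~\ref{thm:twolevel} so that no new probabilistic content is needed — everything is deterministic given that event — and note that the constants $c_1, c_2$ are inherited unchanged.
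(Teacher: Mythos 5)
Your proposal is sound in spirit but takes a longer, slightly lossier route than the paper. The paper's proof avoids the cross-term and quadratic-perturbation bookkeeping entirely by exploiting idempotence of orthogonal projectors: since $(I-\tilde P_k)$ and $(I-P_k)$ are projections, $\norm{(I-\tilde P_k)f}^2 = \ip{f}{(I-\tilde P_k)f}$ and likewise for $P_k$, so
\[
\mathcal R(\tilde{\mathcal H}_k) - \mathcal R(\mathcal H_k^\star)
= \E\ip{f_t^\star}{(P_k-\tilde P_k)f_t^\star}
\le \opnorm{\tilde P_k - P_k}\,\E\norm{f_t^\star}^2 = \tr(\cS)\,\opnorm{\tilde P_k - P_k},
\]
with no squaring of the perturbation operator anywhere. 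Plugging in \eqref{eq:subspace-main} then gives the factor $\tfrac{2\tr(\cS)}{\gamma_k}$ \emph{exactly}, where the $2$ comes solely from Davis--Kahan. You actually gesture at this identity in passing ("or more simply the identity that $\mathcal R(\mathcal H_k) = \tr((I-P_k)\cS)$\,\dots") but do not carry it through. Your expansion of $\norm{(I-P_k)f + (P_k-\tilde P_k)f}^2$ instead yields $\sum_{i>k}\lambda_i + \bigl(2\opnorm{\tilde P_k-P_k}+\opnorm{\tilde P_k-P_k}^2\bigr)\tr(\cS)$; substituting \eqref{eq:subspace-main} into the \emph{linear} part alone already gives $\tfrac{4\tr(\cS)}{\gamma_k}(\cdots)$, so the claim that this "yields exactly the claimed bound with the factor $\tfrac{2\tr(\cS)}{\gamma_k}$" is off by a factor of $2$ before you even address the quadratic term. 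The final bound you reach is therefore only the stated one up to absorbing an extra constant factor and an $O(\opnorm{\tilde P_k-P_k}^2)$ term, which the corollary as written does not allow for. Switching to the bilinear-form identity removes both problems at once and is the route the paper takes.
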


\begin{proof}
Optimality of $P_k$ gives $\mathcal R(\mathcal H_k^\star)=\sum_{i>k}\mu_i$.
Moreover,
\[
\mathcal R(\tilde{\mathcal H}_k)-\mathcal R(\mathcal H_k^\star)
=\E\ip{f_t^\star}{(P_k-\tilde P_k)f_t^\star}
\le \opnorm{\tilde P_k-P_k}\,\E\norm{f_t^\star}^2
= \tr(S)\,\opnorm{\tilde P_k-P_k}.
\]
Apply \eqref{eq:subspace-main}.
\end{proof}

\begin{remark}[Where Rademacher complexity enters]
Assumption~\ref{ass:pertask} is instantiated by your learning procedure. For strongly-convex ERM (e.g., kernel ridge), a standard
Rademacher-based excess-risk bound together with curvature yields an $\eta_t=\eta_t(n_t,\delta_t)$ that vanishes with $n_t$.
Plugging these $\eta_t$ into $\bar\eta$ and $\overline{\eta^2}$ makes the rate explicit.
\end{remark}

\section{Universal Subspace Analysis}
\label{sec:analysis_apx}
Similar methodology is followed for subspace analysis for both LoRA and classical weight models. In fact, LoRA analysis' results can be theoretically extended to classical weights, as LoRA weights can be construed to be simple translations from a mean weight matrix. However, in order to solidify our universal subspace hypothesis, we conduct extensive experiments for both types of models. LoRA is chosen because of the recent spurt in the availability of LoRA models trained on diverse kinds of datasets and models. \hl{We do this universal subspace analysis on all weight parameters in every neural network layer except the first (or few initial) and last neural network layer. This is because these layers may differ across models due to differences in input shapes and types, loss functions, and the tasks being trained.} We also focus our analysis on linear/fully-connected and matrix weights, as the analysis done on these are straightforward and the results observed can be trivially extended to other types of neural parameters~\citep{ma2017equivalencefullyconnectedlayer}.

\paragraph{Secondary Subspace} refers to the residual subspace that remains after removing the top \( k \) principal directions associated with the low-rank universal subspace. This subspace is orthogonal to the universal subspace and serves as a control for evaluating the uniqueness and effectiveness of the learned shared subspace. To make computation tractable when the residual subspace is high-dimensional, we focus on the top components beyond rank \( k \), as computing a full SVD is often impractical. This approximation is justified, since the lower components typically capture noise, which has been shown to degrade performance~\citep{sharma_laser_2023}.

\paragraph{How to choose top $k$ components?} As shown in all eigenvalue (scree) plots, a trivial way to choose is a simple visual inspection, since we can see a discontinuity in the spectral analysis. Another way is to define a threshold on the explained variance, all components whose explained variance is close to zero <.01 are considered secondary subspace, and can be discarded. A more structured way is to define an optimal singular value threshold for the HOSVD, as found by previous works~\citep{gavish2014optimalhardthresholdsingular}.

\subsection{Lower rank shared universal subspaces within low rank adaptation (LoRA) models}
\label{sec:lora_appx}
Spectral Decomposition is employed to extract the top \texttt{k} principal directions for each of the LoRA matrices \texttt{B} and \texttt{A}, which are concatenated across all available models. Subsequently, the top \texttt{k} principal directions are selected to define the low-rank subspace shared among the LoRA matrices. This process is conducted separately for each layer of the model to derive a low-rank approximated shared subspace for every individual layer. In practice, for every layer, the rank vectors of all available LoRA matrices are extracted and concatenated into a single matrix. This matrix is then normalized by subtracting the feature-wise mean from each vector, after which principal directions are extracted. The mode-1(order-1) variant of our method is mathematically equivalent to Principal Component Analysis (PCA), hence we can use \texttt{torch.pca\_lowrank} or \texttt{sklearn.decomposition.PCA} to extract the principal directions. The data matrix corresponding to a specific layer for 500 LoRA models is structured as $500r\times d$, where $r$ denotes the rank of each LoRA and $d$ specifies the dimension of each rank vector. The same calculation can be applied to the \texttt{BA} matrix instead of individually to \texttt{B} and \texttt{A}, thereby increasing the computational cost of the Spectral Decomposition without affecting the outcome. 

\begin{figure}[htb]
  \centering
  \includegraphics[width=1.0\textwidth]{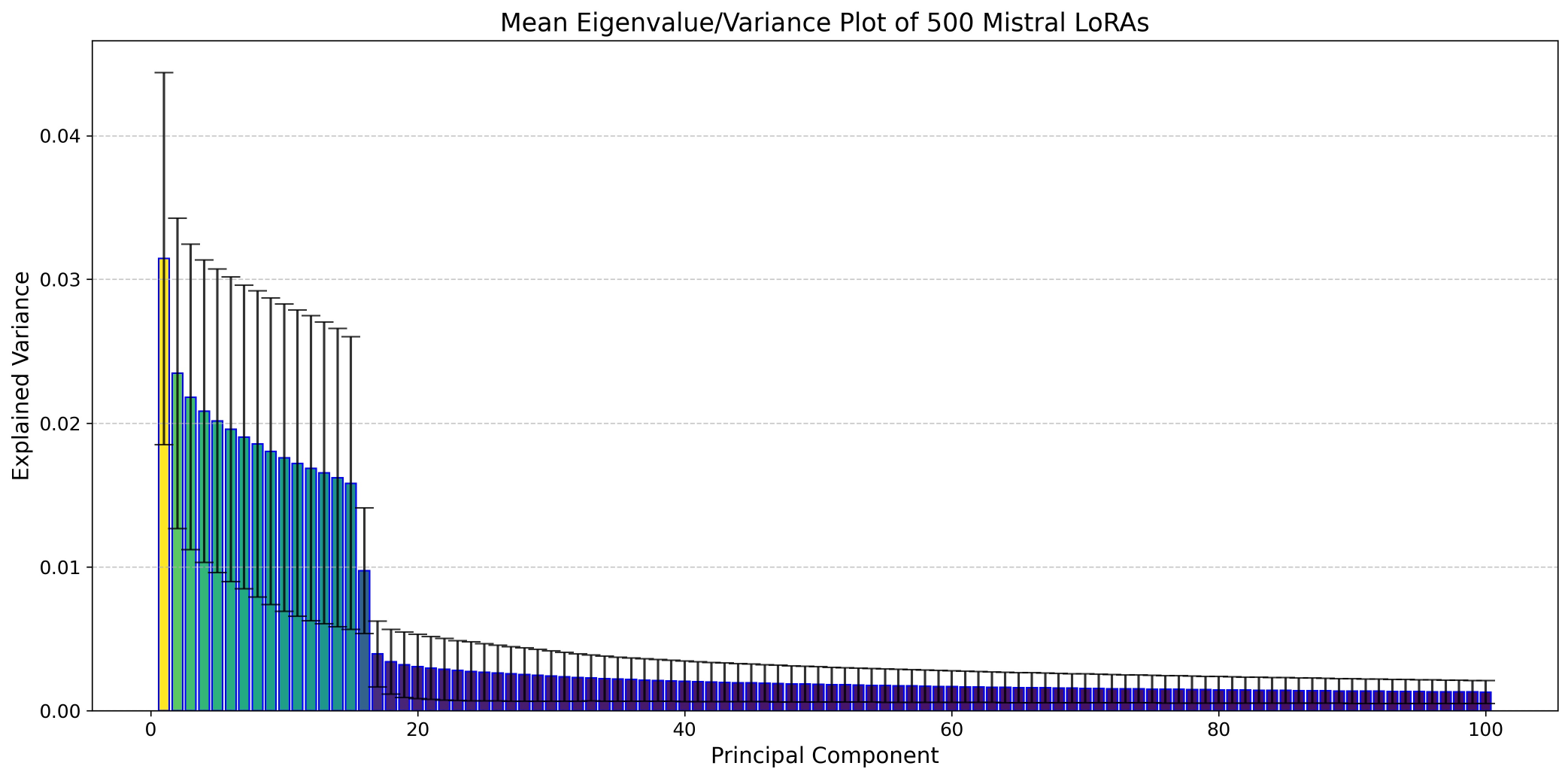}
  \caption{Spectral analysis of the Mistral-7B-Instruct-v0.2 model: Aggregated eigenvalue (scree) plot across 500 LoRA models and all layers. The plot demonstrates that the majority of the variance is consistently captured by the top 16 principal directions, indicating the presence of a shared low-dimensional universal subspace.}

  \label{fig:lora_main}
\end{figure}
\begin{figure}[htb]
  \centering
  \includegraphics[width=1.0\textwidth]{figures/k_proj.png}
  \includegraphics[width=1.0\textwidth]{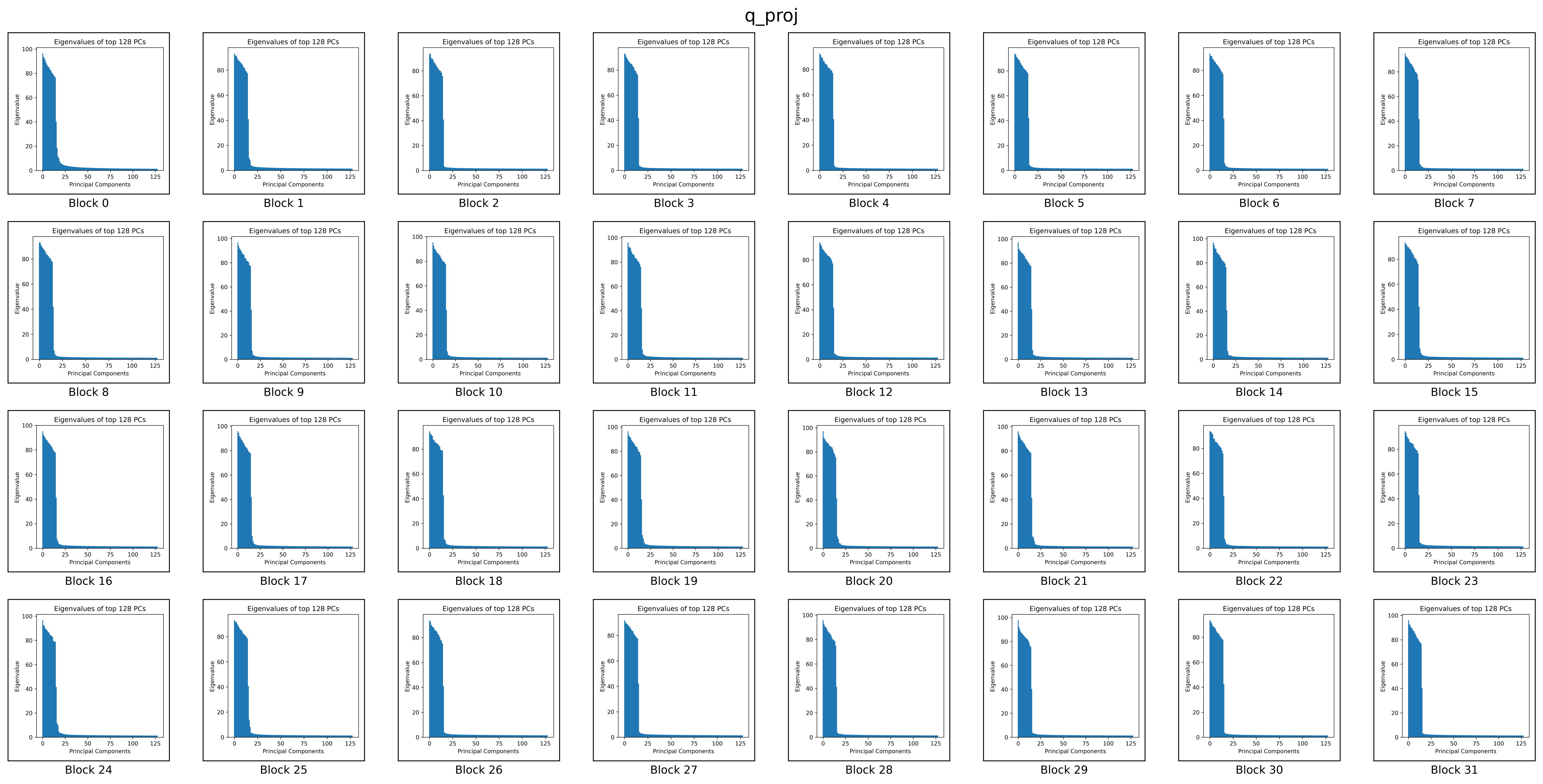}
  \includegraphics[width=1.0\textwidth]{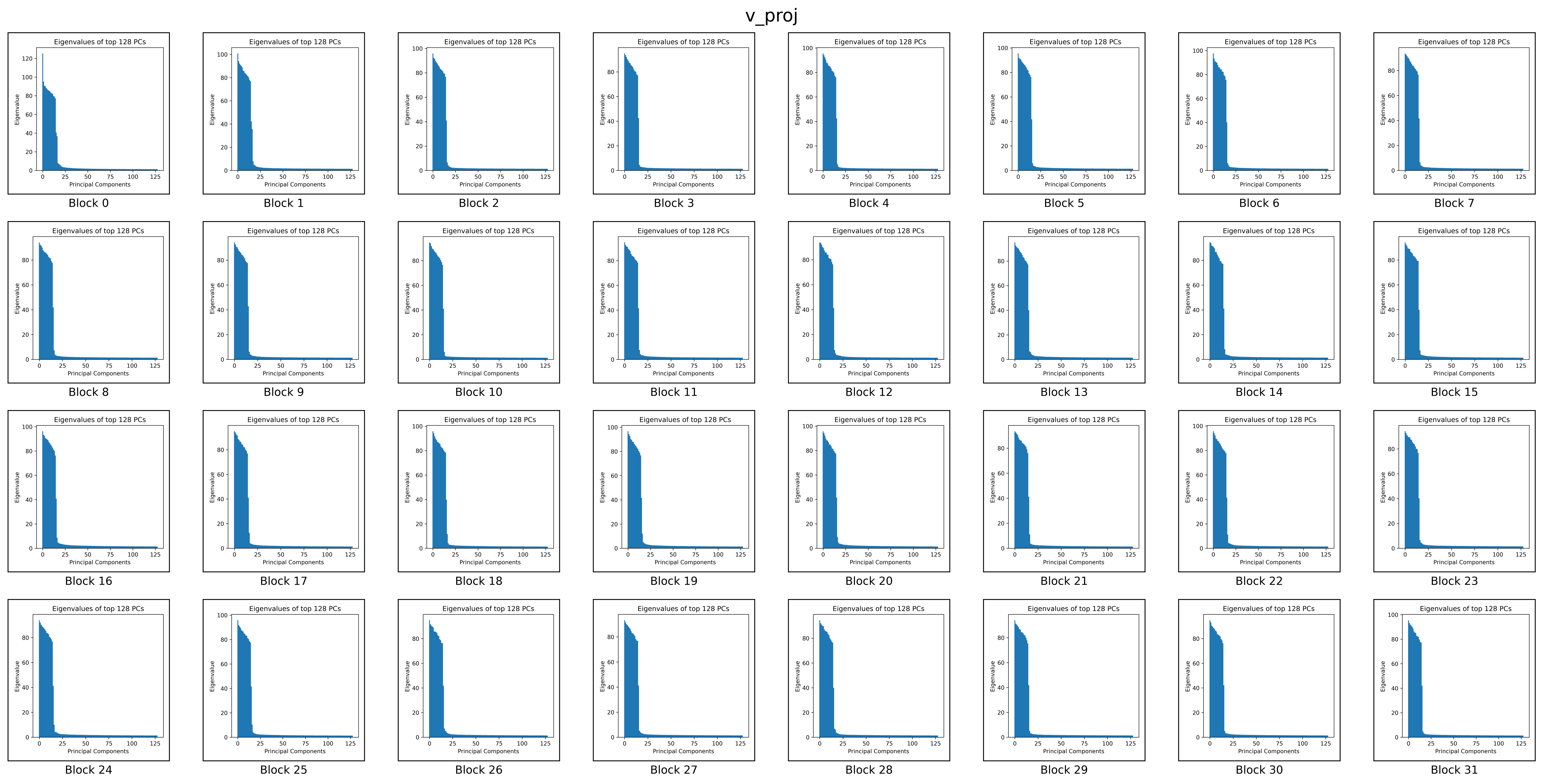}
  \caption{Layerwise Eigenvalue Plots of 500 Mistral-7B-Instruct-v0.2 models. Each layer has 3 sets of parameters - $k\_proj, q\_proj, v\_proj$}
  \label{tab:lora_all}
\end{figure}

\paragraph{Universal Mistral-7B/Lots of LoRAs experiment details} In our first experimental analysis, we use 500 LoRA models trained on distinct Natural Instructions~\citep{wang-etal-2022-super} using Mistral-7B-Instruct-v0.2~\citep{jiang2023mistral7b} as the base~\citep{brüelgabrielsson2024compressserveservingthousands}. Please refer to ~\cite{brüelgabrielsson2024compressserveservingthousands} for more details on how the LoRA models were trained. 
{\tiny
\begin{longtable}{|p{0.45\textwidth}|p{0.45\textwidth}|}
\caption{Models from HuggingFace for the Universal Mistral LoRA. Models in blue indicate the OOD models and the ones in red are the IID models used for evaluation.}
\label{tab:mistral_models} \\
\hline
\textcolor{blue}{Lots-of-LoRAs/Mistral-7B-Instruct-v0.2-4b-r16-task391} & \textcolor{blue}{Lots-of-LoRAs/Mistral-7B-Instruct-v0.2-4b-r16-task290} \\
\hline
\textcolor{blue}{Lots-of-LoRAs/Mistral-7B-Instruct-v0.2-4b-r16-task442} & \textcolor{blue}{Lots-of-LoRAs/Mistral-7B-Instruct-v0.2-4b-r16-task1598} \\
\hline
\textcolor{blue}{Lots-of-LoRAs/Mistral-7B-Instruct-v0.2-4b-r16-task039} &  \\
\hline
\textcolor{red}{Lots-of-LoRAs/Mistral-7B-Instruct-v0.2-4b-r16-task076} & \textcolor{red}{Lots-of-LoRAs/Mistral-7B-Instruct-v0.2-4b-r16-task627} \\
\hline
\textcolor{red}{Lots-of-LoRAs/Mistral-7B-Instruct-v0.2-4b-r16-task664} & \textcolor{red}{Lots-of-LoRAs/Mistral-7B-Instruct-v0.2-4b-r16-task819} \\
\hline
\textcolor{red}{Lots-of-LoRAs/Mistral-7B-Instruct-v0.2-4b-r16-task1631} &  \\
\hline
Lots-of-LoRAs/Mistral-7B-Instruct-v0.2-4b-r16-task190 & Lots-of-LoRAs/Mistral-7B-Instruct-v0.2-4b-r16-task1391 \\
\hline
Lots-of-LoRAs/Mistral-7B-Instruct-v0.2-4b-r16-task1342 & Lots-of-LoRAs/Mistral-7B-Instruct-v0.2-4b-r16-task620 \\
\hline
Lots-of-LoRAs/Mistral-7B-Instruct-v0.2-4b-r16-task769 & Lots-of-LoRAs/Mistral-7B-Instruct-v0.2-4b-r16-task1448 \\
\hline
Lots-of-LoRAs/Mistral-7B-Instruct-v0.2-4b-r16-task247 & Lots-of-LoRAs/Mistral-7B-Instruct-v0.2-4b-r16-task513 \\
\hline
Lots-of-LoRAs/Mistral-7B-Instruct-v0.2-4b-r16-task875 & Lots-of-LoRAs/Mistral-7B-Instruct-v0.2-4b-r16-task515 \\
\hline
Lots-of-LoRAs/Mistral-7B-Instruct-v0.2-4b-r16-task1534 & Lots-of-LoRAs/Mistral-7B-Instruct-v0.2-4b-r16-task1551 \\
\hline
Lots-of-LoRAs/Mistral-7B-Instruct-v0.2-4b-r16-task583 & Lots-of-LoRAs/Mistral-7B-Instruct-v0.2-4b-r16-task1431 \\
\hline
Lots-of-LoRAs/Mistral-7B-Instruct-v0.2-4b-r16-task270 & Lots-of-LoRAs/Mistral-7B-Instruct-v0.2-4b-r16-task1487 \\
\hline
Lots-of-LoRAs/Mistral-7B-Instruct-v0.2-4b-r16-task679 & Lots-of-LoRAs/Mistral-7B-Instruct-v0.2-4b-r16-task456 \\
\hline
Lots-of-LoRAs/Mistral-7B-Instruct-v0.2-4b-r16-task385 & Lots-of-LoRAs/Mistral-7B-Instruct-v0.2-4b-r16-task1607 \\
\hline
Lots-of-LoRAs/Mistral-7B-Instruct-v0.2-4b-r16-task278 & Lots-of-LoRAs/Mistral-7B-Instruct-v0.2-4b-r16-task022 \\
\hline
Lots-of-LoRAs/Mistral-7B-Instruct-v0.2-4b-r16-task210 & Lots-of-LoRAs/Mistral-7B-Instruct-v0.2-4b-r16-task137 \\
\hline
Lots-of-LoRAs/Mistral-7B-Instruct-v0.2-4b-r16-task574 & Lots-of-LoRAs/Mistral-7B-Instruct-v0.2-4b-r16-task629 \\
\hline
Lots-of-LoRAs/Mistral-7B-Instruct-v0.2-4b-r16-task1378 & Lots-of-LoRAs/Mistral-7B-Instruct-v0.2-4b-r16-task1194 \\
\hline
Lots-of-LoRAs/Mistral-7B-Instruct-v0.2-4b-r16-task1529 & Lots-of-LoRAs/Mistral-7B-Instruct-v0.2-4b-r16-task453 \\
\hline
Lots-of-LoRAs/Mistral-7B-Instruct-v0.2-4b-r16-task102 & Lots-of-LoRAs/Mistral-7B-Instruct-v0.2-4b-r16-task460 \\
\hline
Lots-of-LoRAs/Mistral-7B-Instruct-v0.2-4b-r16-task1204 & Lots-of-LoRAs/Mistral-7B-Instruct-v0.2-4b-r16-task1384 \\
\hline
Lots-of-LoRAs/Mistral-7B-Instruct-v0.2-4b-r16-task1572 & Lots-of-LoRAs/Mistral-7B-Instruct-v0.2-4b-r16-task699 \\
\hline
Lots-of-LoRAs/Mistral-7B-Instruct-v0.2-4b-r16-task1722 & Lots-of-LoRAs/Mistral-7B-Instruct-v0.2-4b-r16-task580 \\
\hline
Lots-of-LoRAs/Mistral-7B-Instruct-v0.2-4b-r16-task605 & Lots-of-LoRAs/Mistral-7B-Instruct-v0.2-4b-r16-task1152 \\
\hline
Lots-of-LoRAs/Mistral-7B-Instruct-v0.2-4b-r16-task1283 & Lots-of-LoRAs/Mistral-7B-Instruct-v0.2-4b-r16-task637 \\
\hline
Lots-of-LoRAs/Mistral-7B-Instruct-v0.2-4b-r16-task723 & Lots-of-LoRAs/Mistral-7B-Instruct-v0.2-4b-r16-task084 \\
\hline
Lots-of-LoRAs/Mistral-7B-Instruct-v0.2-4b-r16-task201 & Lots-of-LoRAs/Mistral-7B-Instruct-v0.2-4b-r16-task956 \\
\hline
Lots-of-LoRAs/Mistral-7B-Instruct-v0.2-4b-r16-task167 & Lots-of-LoRAs/Mistral-7B-Instruct-v0.2-4b-r16-task1192 \\
\hline
Lots-of-LoRAs/Mistral-7B-Instruct-v0.2-4b-r16-task300 & Lots-of-LoRAs/Mistral-7B-Instruct-v0.2-4b-r16-task1714 \\
\hline
Lots-of-LoRAs/Mistral-7B-Instruct-v0.2-4b-r16-task388 & Lots-of-LoRAs/Mistral-7B-Instruct-v0.2-4b-r16-task516 \\
\hline
Lots-of-LoRAs/Mistral-7B-Instruct-v0.2-4b-r16-task127 & Lots-of-LoRAs/Mistral-7B-Instruct-v0.2-4b-r16-task362 \\
\hline
Lots-of-LoRAs/Mistral-7B-Instruct-v0.2-4b-r16-task1158 & Lots-of-LoRAs/Mistral-7B-Instruct-v0.2-4b-r16-task322 \\
\hline
Lots-of-LoRAs/Mistral-7B-Instruct-v0.2-4b-r16-task697 & Lots-of-LoRAs/Mistral-7B-Instruct-v0.2-4b-r16-task1566 \\
\hline
Lots-of-LoRAs/Mistral-7B-Instruct-v0.2-4b-r16-task1451 & Lots-of-LoRAs/Mistral-7B-Instruct-v0.2-4b-r16-task1135 \\
\hline
Lots-of-LoRAs/Mistral-7B-Instruct-v0.2-4b-r16-task341 & Lots-of-LoRAs/Mistral-7B-Instruct-v0.2-4b-r16-task267 \\
\hline
Lots-of-LoRAs/Mistral-7B-Instruct-v0.2-4b-r16-task1720 & Lots-of-LoRAs/Mistral-7B-Instruct-v0.2-4b-r16-task1452 \\
\hline
Lots-of-LoRAs/Mistral-7B-Instruct-v0.2-4b-r16-task131 & Lots-of-LoRAs/Mistral-7B-Instruct-v0.2-4b-r16-task685 \\
\hline
Lots-of-LoRAs/Mistral-7B-Instruct-v0.2-4b-r16-task727 & Lots-of-LoRAs/Mistral-7B-Instruct-v0.2-4b-r16-task1590 \\
\hline
Lots-of-LoRAs/Mistral-7B-Instruct-v0.2-4b-r16-task1731 & Lots-of-LoRAs/Mistral-7B-Instruct-v0.2-4b-r16-task047 \\
\hline
Lots-of-LoRAs/Mistral-7B-Instruct-v0.2-4b-r16-task929 & Lots-of-LoRAs/Mistral-7B-Instruct-v0.2-4b-r16-task1592 \\
\hline
Lots-of-LoRAs/Mistral-7B-Instruct-v0.2-4b-r16-task1326 & Lots-of-LoRAs/Mistral-7B-Instruct-v0.2-4b-r16-task615 \\
\hline
Lots-of-LoRAs/Mistral-7B-Instruct-v0.2-4b-r16-task1216 & Lots-of-LoRAs/Mistral-7B-Instruct-v0.2-4b-r16-task689 \\
\hline
Lots-of-LoRAs/Mistral-7B-Instruct-v0.2-4b-r16-task1156 & Lots-of-LoRAs/Mistral-7B-Instruct-v0.2-4b-r16-task1657 \\
\hline
Lots-of-LoRAs/Mistral-7B-Instruct-v0.2-4b-r16-task833 & Lots-of-LoRAs/Mistral-7B-Instruct-v0.2-4b-r16-task1206 \\
\hline
Lots-of-LoRAs/Mistral-7B-Instruct-v0.2-4b-r16-task1151 & Lots-of-LoRAs/Mistral-7B-Instruct-v0.2-4b-r16-task244 \\
\hline
Lots-of-LoRAs/Mistral-7B-Instruct-v0.2-4b-r16-task1562 & Lots-of-LoRAs/Mistral-7B-Instruct-v0.2-4b-r16-task043 \\
\hline
Lots-of-LoRAs/Mistral-7B-Instruct-v0.2-4b-r16-task044 & Lots-of-LoRAs/Mistral-7B-Instruct-v0.2-4b-r16-task722 \\
\hline
Lots-of-LoRAs/Mistral-7B-Instruct-v0.2-4b-r16-task183 & Lots-of-LoRAs/Mistral-7B-Instruct-v0.2-4b-r16-task563 \\
\hline
Lots-of-LoRAs/Mistral-7B-Instruct-v0.2-4b-r16-task155 & Lots-of-LoRAs/Mistral-7B-Instruct-v0.2-4b-r16-task353 \\
\hline
Lots-of-LoRAs/Mistral-7B-Instruct-v0.2-4b-r16-task616 & Lots-of-LoRAs/Mistral-7B-Instruct-v0.2-4b-r16-task1724 \\
\hline
Lots-of-LoRAs/Mistral-7B-Instruct-v0.2-4b-r16-task288 & Lots-of-LoRAs/Mistral-7B-Instruct-v0.2-4b-r16-task092 \\
\hline
Lots-of-LoRAs/Mistral-7B-Instruct-v0.2-4b-r16-task707 & Lots-of-LoRAs/Mistral-7B-Instruct-v0.2-4b-r16-task577 \\
\hline
Lots-of-LoRAs/Mistral-7B-Instruct-v0.2-4b-r16-task742 & Lots-of-LoRAs/Mistral-7B-Instruct-v0.2-4b-r16-task706 \\
\hline
Lots-of-LoRAs/Mistral-7B-Instruct-v0.2-4b-r16-task1401 & Lots-of-LoRAs/Mistral-7B-Instruct-v0.2-4b-r16-task1393 \\
\hline
Lots-of-LoRAs/Mistral-7B-Instruct-v0.2-4b-r16-task1198 & Lots-of-LoRAs/Mistral-7B-Instruct-v0.2-4b-r16-task966 \\
\hline
Lots-of-LoRAs/Mistral-7B-Instruct-v0.2-4b-r16-task219 & Lots-of-LoRAs/Mistral-7B-Instruct-v0.2-4b-r16-task1211 \\
\hline
Lots-of-LoRAs/Mistral-7B-Instruct-v0.2-4b-r16-task050 & Lots-of-LoRAs/Mistral-7B-Instruct-v0.2-4b-r16-task494 \\
\hline
Lots-of-LoRAs/Mistral-7B-Instruct-v0.2-4b-r16-task1379 & Lots-of-LoRAs/Mistral-7B-Instruct-v0.2-4b-r16-task176 \\
\hline
Lots-of-LoRAs/Mistral-7B-Instruct-v0.2-4b-r16-task068 & Lots-of-LoRAs/Mistral-7B-Instruct-v0.2-4b-r16-task566 \\
\hline
Lots-of-LoRAs/Mistral-7B-Instruct-v0.2-4b-r16-task333 & Lots-of-LoRAs/Mistral-7B-Instruct-v0.2-4b-r16-task593 \\
\hline
Lots-of-LoRAs/Mistral-7B-Instruct-v0.2-4b-r16-task667 & Lots-of-LoRAs/Mistral-7B-Instruct-v0.2-4b-r16-task1670 \\
\hline
Lots-of-LoRAs/Mistral-7B-Instruct-v0.2-4b-r16-task733 & Lots-of-LoRAs/Mistral-7B-Instruct-v0.2-4b-r16-task472 \\
\hline
Lots-of-LoRAs/Mistral-7B-Instruct-v0.2-4b-r16-task1168 & Lots-of-LoRAs/Mistral-7B-Instruct-v0.2-4b-r16-task075 \\
\hline
Lots-of-LoRAs/Mistral-7B-Instruct-v0.2-4b-r16-task148 & Lots-of-LoRAs/Mistral-7B-Instruct-v0.2-4b-r16-task683 \\
\hline
Lots-of-LoRAs/Mistral-7B-Instruct-v0.2-4b-r16-task1315 & Lots-of-LoRAs/Mistral-7B-Instruct-v0.2-4b-r16-task121 \\
\hline
Lots-of-LoRAs/Mistral-7B-Instruct-v0.2-4b-r16-task370 & Lots-of-LoRAs/Mistral-7B-Instruct-v0.2-4b-r16-task856 \\
\hline
Lots-of-LoRAs/Mistral-7B-Instruct-v0.2-4b-r16-task891 & Lots-of-LoRAs/Mistral-7B-Instruct-v0.2-4b-r16-task140 \\
\hline
Lots-of-LoRAs/Mistral-7B-Instruct-v0.2-4b-r16-task609 & Lots-of-LoRAs/Mistral-7B-Instruct-v0.2-4b-r16-task344 \\
\hline
Lots-of-LoRAs/Mistral-7B-Instruct-v0.2-4b-r16-task1703 & Lots-of-LoRAs/Mistral-7B-Instruct-v0.2-4b-r16-task070 \\
\hline
Lots-of-LoRAs/Mistral-7B-Instruct-v0.2-4b-r16-task072 & Lots-of-LoRAs/Mistral-7B-Instruct-v0.2-4b-r16-task504 \\
\hline
Lots-of-LoRAs/Mistral-7B-Instruct-v0.2-4b-r16-task695 & Lots-of-LoRAs/Mistral-7B-Instruct-v0.2-4b-r16-task1434 \\
\hline
Lots-of-LoRAs/Mistral-7B-Instruct-v0.2-4b-r16-task095 & Lots-of-LoRAs/Mistral-7B-Instruct-v0.2-4b-r16-task346 \\
\hline
Lots-of-LoRAs/Mistral-7B-Instruct-v0.2-4b-r16-task274 & Lots-of-LoRAs/Mistral-7B-Instruct-v0.2-4b-r16-task1325 \\
\hline
Lots-of-LoRAs/Mistral-7B-Instruct-v0.2-4b-r16-task1190 & Lots-of-LoRAs/Mistral-7B-Instruct-v0.2-4b-r16-task568 \\
\hline
Lots-of-LoRAs/Mistral-7B-Instruct-v0.2-4b-r16-task1482 & Lots-of-LoRAs/Mistral-7B-Instruct-v0.2-4b-r16-task924 \\
\hline
Lots-of-LoRAs/Mistral-7B-Instruct-v0.2-4b-r16-task761 & Lots-of-LoRAs/Mistral-7B-Instruct-v0.2-4b-r16-task596 \\
\hline
Lots-of-LoRAs/Mistral-7B-Instruct-v0.2-4b-r16-task382 & Lots-of-LoRAs/Mistral-7B-Instruct-v0.2-4b-r16-task926 \\
\hline
Lots-of-LoRAs/Mistral-7B-Instruct-v0.2-4b-r16-task065 & Lots-of-LoRAs/Mistral-7B-Instruct-v0.2-4b-r16-task1421 \\
\hline
Lots-of-LoRAs/Mistral-7B-Instruct-v0.2-4b-r16-task323 & Lots-of-LoRAs/Mistral-7B-Instruct-v0.2-4b-r16-task1310 \\
\hline
Lots-of-LoRAs/Mistral-7B-Instruct-v0.2-4b-r16-task110 & Lots-of-LoRAs/Mistral-7B-Instruct-v0.2-4b-r16-task1288 \\
\hline
Lots-of-LoRAs/Mistral-7B-Instruct-v0.2-4b-r16-task1503 & Lots-of-LoRAs/Mistral-7B-Instruct-v0.2-4b-r16-task269 \\
\hline
Lots-of-LoRAs/Mistral-7B-Instruct-v0.2-4b-r16-task821 & Lots-of-LoRAs/Mistral-7B-Instruct-v0.2-4b-r16-task565 \\
\hline
Lots-of-LoRAs/Mistral-7B-Instruct-v0.2-4b-r16-task867 & Lots-of-LoRAs/Mistral-7B-Instruct-v0.2-4b-r16-task755 \\
\hline
Lots-of-LoRAs/Mistral-7B-Instruct-v0.2-4b-r16-task378 & Lots-of-LoRAs/Mistral-7B-Instruct-v0.2-4b-r16-task518 \\
\hline
Lots-of-LoRAs/Mistral-7B-Instruct-v0.2-4b-r16-task195 & Lots-of-LoRAs/Mistral-7B-Instruct-v0.2-4b-r16-task618 \\
\hline
Lots-of-LoRAs/Mistral-7B-Instruct-v0.2-4b-r16-task638 & Lots-of-LoRAs/Mistral-7B-Instruct-v0.2-4b-r16-task1217 \\
\hline
Lots-of-LoRAs/Mistral-7B-Instruct-v0.2-4b-r16-task118 & Lots-of-LoRAs/Mistral-7B-Instruct-v0.2-4b-r16-task1564 \\
\hline
Lots-of-LoRAs/Mistral-7B-Instruct-v0.2-4b-r16-task1429 & Lots-of-LoRAs/Mistral-7B-Instruct-v0.2-4b-r16-task687 \\
\hline
Lots-of-LoRAs/Mistral-7B-Instruct-v0.2-4b-r16-task640 & Lots-of-LoRAs/Mistral-7B-Instruct-v0.2-4b-r16-task1328 \\
\hline
Lots-of-LoRAs/Mistral-7B-Instruct-v0.2-4b-r16-task1311 & Lots-of-LoRAs/Mistral-7B-Instruct-v0.2-4b-r16-task285 \\
\hline
Lots-of-LoRAs/Mistral-7B-Instruct-v0.2-4b-r16-task1341 & Lots-of-LoRAs/Mistral-7B-Instruct-v0.2-4b-r16-task1603 \\
\hline
Lots-of-LoRAs/Mistral-7B-Instruct-v0.2-4b-r16-task162 & Lots-of-LoRAs/Mistral-7B-Instruct-v0.2-4b-r16-task063 \\
\hline
Lots-of-LoRAs/Mistral-7B-Instruct-v0.2-4b-r16-task686 & Lots-of-LoRAs/Mistral-7B-Instruct-v0.2-4b-r16-task1483 \\
\hline
Lots-of-LoRAs/Mistral-7B-Instruct-v0.2-4b-r16-task146 & Lots-of-LoRAs/Mistral-7B-Instruct-v0.2-4b-r16-task1729 \\
\hline
Lots-of-LoRAs/Mistral-7B-Instruct-v0.2-4b-r16-task852 & Lots-of-LoRAs/Mistral-7B-Instruct-v0.2-4b-r16-task1622 \\
\hline
Lots-of-LoRAs/Mistral-7B-Instruct-v0.2-4b-r16-task1704 & Lots-of-LoRAs/Mistral-7B-Instruct-v0.2-4b-r16-task145 \\
\hline
Lots-of-LoRAs/Mistral-7B-Instruct-v0.2-4b-r16-task648 & Lots-of-LoRAs/Mistral-7B-Instruct-v0.2-4b-r16-task151 \\
\hline
Lots-of-LoRAs/Mistral-7B-Instruct-v0.2-4b-r16-task1212 & Lots-of-LoRAs/Mistral-7B-Instruct-v0.2-4b-r16-task045 \\
\hline
Lots-of-LoRAs/Mistral-7B-Instruct-v0.2-4b-r16-task893 & Lots-of-LoRAs/Mistral-7B-Instruct-v0.2-4b-r16-task936 \\
\hline
Lots-of-LoRAs/Mistral-7B-Instruct-v0.2-4b-r16-task1425 & Lots-of-LoRAs/Mistral-7B-Instruct-v0.2-4b-r16-task1533 \\
\hline
Lots-of-LoRAs/Mistral-7B-Instruct-v0.2-4b-r16-task093 & Lots-of-LoRAs/Mistral-7B-Instruct-v0.2-4b-r16-task389 \\
\hline
Lots-of-LoRAs/Mistral-7B-Instruct-v0.2-4b-r16-task670 & Lots-of-LoRAs/Mistral-7B-Instruct-v0.2-4b-r16-task461 \\
\hline
Lots-of-LoRAs/Mistral-7B-Instruct-v0.2-4b-r16-task113 & Lots-of-LoRAs/Mistral-7B-Instruct-v0.2-4b-r16-task066 \\
\hline
Lots-of-LoRAs/Mistral-7B-Instruct-v0.2-4b-r16-task497 & Lots-of-LoRAs/Mistral-7B-Instruct-v0.2-4b-r16-task343 \\
\hline
Lots-of-LoRAs/Mistral-7B-Instruct-v0.2-4b-r16-task908 & Lots-of-LoRAs/Mistral-7B-Instruct-v0.2-4b-r16-task1606 \\
\hline
Lots-of-LoRAs/Mistral-7B-Instruct-v0.2-4b-r16-task381 & Lots-of-LoRAs/Mistral-7B-Instruct-v0.2-4b-r16-task355 \\
\hline
Lots-of-LoRAs/Mistral-7B-Instruct-v0.2-4b-r16-task850 & Lots-of-LoRAs/Mistral-7B-Instruct-v0.2-4b-r16-task1557 \\
\hline
Lots-of-LoRAs/Mistral-7B-Instruct-v0.2-4b-r16-task356 & Lots-of-LoRAs/Mistral-7B-Instruct-v0.2-4b-r16-task1394 \\
\hline
Lots-of-LoRAs/Mistral-7B-Instruct-v0.2-4b-r16-task1428 & Lots-of-LoRAs/Mistral-7B-Instruct-v0.2-4b-r16-task087 \\
\hline
Lots-of-LoRAs/Mistral-7B-Instruct-v0.2-4b-r16-task582 & Lots-of-LoRAs/Mistral-7B-Instruct-v0.2-4b-r16-task770 \\
\hline
Lots-of-LoRAs/Mistral-7B-Instruct-v0.2-4b-r16-task035 & Lots-of-LoRAs/Mistral-7B-Instruct-v0.2-4b-r16-task1317 \\
\hline
Lots-of-LoRAs/Mistral-7B-Instruct-v0.2-4b-r16-task1203 & Lots-of-LoRAs/Mistral-7B-Instruct-v0.2-4b-r16-task1444 \\
\hline
Lots-of-LoRAs/Mistral-7B-Instruct-v0.2-4b-r16-task1585 & Lots-of-LoRAs/Mistral-7B-Instruct-v0.2-4b-r16-task1508 \\
\hline
Lots-of-LoRAs/Mistral-7B-Instruct-v0.2-4b-r16-task740 & Lots-of-LoRAs/Mistral-7B-Instruct-v0.2-4b-r16-task429 \\
\hline
Lots-of-LoRAs/Mistral-7B-Instruct-v0.2-4b-r16-task1427 & Lots-of-LoRAs/Mistral-7B-Instruct-v0.2-4b-r16-task328 \\
\hline
Lots-of-LoRAs/Mistral-7B-Instruct-v0.2-4b-r16-task955 & Lots-of-LoRAs/Mistral-7B-Instruct-v0.2-4b-r16-task130 \\
\hline
Lots-of-LoRAs/Mistral-7B-Instruct-v0.2-4b-r16-task161 & Lots-of-LoRAs/Mistral-7B-Instruct-v0.2-4b-r16-task507 \\
\hline
Lots-of-LoRAs/Mistral-7B-Instruct-v0.2-4b-r16-task1502 & Lots-of-LoRAs/Mistral-7B-Instruct-v0.2-4b-r16-task505 \\
\hline
Lots-of-LoRAs/Mistral-7B-Instruct-v0.2-4b-r16-task633 & Lots-of-LoRAs/Mistral-7B-Instruct-v0.2-4b-r16-task1645 \\
\hline
Lots-of-LoRAs/Mistral-7B-Instruct-v0.2-4b-r16-task1486 & Lots-of-LoRAs/Mistral-7B-Instruct-v0.2-4b-r16-task1146 \\
\hline
Lots-of-LoRAs/Mistral-7B-Instruct-v0.2-4b-r16-task1380 & Lots-of-LoRAs/Mistral-7B-Instruct-v0.2-4b-r16-task1088 \\
\hline
Lots-of-LoRAs/Mistral-7B-Instruct-v0.2-4b-r16-task033 & Lots-of-LoRAs/Mistral-7B-Instruct-v0.2-4b-r16-task085 \\
\hline
Lots-of-LoRAs/Mistral-7B-Instruct-v0.2-4b-r16-task1294 & Lots-of-LoRAs/Mistral-7B-Instruct-v0.2-4b-r16-task080 \\
\hline
Lots-of-LoRAs/Mistral-7B-Instruct-v0.2-4b-r16-task489 & Lots-of-LoRAs/Mistral-7B-Instruct-v0.2-4b-r16-task1721 \\
\hline
Lots-of-LoRAs/Mistral-7B-Instruct-v0.2-4b-r16-task1713 & Lots-of-LoRAs/Mistral-7B-Instruct-v0.2-4b-r16-task721 \\
\hline
Lots-of-LoRAs/Mistral-7B-Instruct-v0.2-4b-r16-task1403 & Lots-of-LoRAs/Mistral-7B-Instruct-v0.2-4b-r16-task746 \\
\hline
Lots-of-LoRAs/Mistral-7B-Instruct-v0.2-4b-r16-task728 & Lots-of-LoRAs/Mistral-7B-Instruct-v0.2-4b-r16-task889 \\
\hline
Lots-of-LoRAs/Mistral-7B-Instruct-v0.2-4b-r16-task1583 & Lots-of-LoRAs/Mistral-7B-Instruct-v0.2-4b-r16-task1665 \\
\hline
Lots-of-LoRAs/Mistral-7B-Instruct-v0.2-4b-r16-task708 & Lots-of-LoRAs/Mistral-7B-Instruct-v0.2-4b-r16-task1419 \\
\hline
Lots-of-LoRAs/Mistral-7B-Instruct-v0.2-4b-r16-task963 & Lots-of-LoRAs/Mistral-7B-Instruct-v0.2-4b-r16-task454 \\
\hline
Lots-of-LoRAs/Mistral-7B-Instruct-v0.2-4b-r16-task308 & Lots-of-LoRAs/Mistral-7B-Instruct-v0.2-4b-r16-task828 \\
\hline
Lots-of-LoRAs/Mistral-7B-Instruct-v0.2-4b-r16-task579 & Lots-of-LoRAs/Mistral-7B-Instruct-v0.2-4b-r16-task753 \\
\hline
Lots-of-LoRAs/Mistral-7B-Instruct-v0.2-4b-r16-task1404 & Lots-of-LoRAs/Mistral-7B-Instruct-v0.2-4b-r16-task1201 \\
\hline
Lots-of-LoRAs/Mistral-7B-Instruct-v0.2-4b-r16-task901 & Lots-of-LoRAs/Mistral-7B-Instruct-v0.2-4b-r16-task1567 \\
\hline
Lots-of-LoRAs/Mistral-7B-Instruct-v0.2-4b-r16-task1319 & Lots-of-LoRAs/Mistral-7B-Instruct-v0.2-4b-r16-task858 \\
\hline
Lots-of-LoRAs/Mistral-7B-Instruct-v0.2-4b-r16-task1200 & Lots-of-LoRAs/Mistral-7B-Instruct-v0.2-4b-r16-task492 \\
\hline
Lots-of-LoRAs/Mistral-7B-Instruct-v0.2-4b-r16-task675 & Lots-of-LoRAs/Mistral-7B-Instruct-v0.2-4b-r16-task094 \\
\hline
Lots-of-LoRAs/Mistral-7B-Instruct-v0.2-4b-r16-task1506 & Lots-of-LoRAs/Mistral-7B-Instruct-v0.2-4b-r16-task694 \\
\hline
Lots-of-LoRAs/Mistral-7B-Instruct-v0.2-4b-r16-task614 & Lots-of-LoRAs/Mistral-7B-Instruct-v0.2-4b-r16-task1390 \\
\hline
Lots-of-LoRAs/Mistral-7B-Instruct-v0.2-4b-r16-task1355 & Lots-of-LoRAs/Mistral-7B-Instruct-v0.2-4b-r16-task714 \\
\hline
Lots-of-LoRAs/Mistral-7B-Instruct-v0.2-4b-r16-task457 & Lots-of-LoRAs/Mistral-7B-Instruct-v0.2-4b-r16-task1565 \\
\hline
Lots-of-LoRAs/Mistral-7B-Instruct-v0.2-4b-r16-task834 & Lots-of-LoRAs/Mistral-7B-Instruct-v0.2-4b-r16-task642 \\
\hline
Lots-of-LoRAs/Mistral-7B-Instruct-v0.2-4b-r16-task732 & Lots-of-LoRAs/Mistral-7B-Instruct-v0.2-4b-r16-task1605 \\
\hline
Lots-of-LoRAs/Mistral-7B-Instruct-v0.2-4b-r16-task326 & Lots-of-LoRAs/Mistral-7B-Instruct-v0.2-4b-r16-task1292 \\
\hline
Lots-of-LoRAs/Mistral-7B-Instruct-v0.2-4b-r16-task716 & Lots-of-LoRAs/Mistral-7B-Instruct-v0.2-4b-r16-task1479 \\
\hline
Lots-of-LoRAs/Mistral-7B-Instruct-v0.2-4b-r16-task1147 & Lots-of-LoRAs/Mistral-7B-Instruct-v0.2-4b-r16-task153 \\
\hline
Lots-of-LoRAs/Mistral-7B-Instruct-v0.2-4b-r16-task1495 & Lots-of-LoRAs/Mistral-7B-Instruct-v0.2-4b-r16-task1196 \\
\hline
Lots-of-LoRAs/Mistral-7B-Instruct-v0.2-4b-r16-task1489 & Lots-of-LoRAs/Mistral-7B-Instruct-v0.2-4b-r16-task294 \\
\hline
Lots-of-LoRAs/Mistral-7B-Instruct-v0.2-4b-r16-task157 & Lots-of-LoRAs/Mistral-7B-Instruct-v0.2-4b-r16-task147 \\
\hline
Lots-of-LoRAs/Mistral-7B-Instruct-v0.2-4b-r16-task1197 & Lots-of-LoRAs/Mistral-7B-Instruct-v0.2-4b-r16-task754 \\
\hline
Lots-of-LoRAs/Mistral-7B-Instruct-v0.2-4b-r16-task1599 & Lots-of-LoRAs/Mistral-7B-Instruct-v0.2-4b-r16-task1420 \\
\hline
Lots-of-LoRAs/Mistral-7B-Instruct-v0.2-4b-r16-task1285 & Lots-of-LoRAs/Mistral-7B-Instruct-v0.2-4b-r16-task587 \\
\hline
Lots-of-LoRAs/Mistral-7B-Instruct-v0.2-4b-r16-task1338 & Lots-of-LoRAs/Mistral-7B-Instruct-v0.2-4b-r16-task116 \\
\hline
Lots-of-LoRAs/Mistral-7B-Instruct-v0.2-4b-r16-task713 & Lots-of-LoRAs/Mistral-7B-Instruct-v0.2-4b-r16-task431 \\
\hline
Lots-of-LoRAs/Mistral-7B-Instruct-v0.2-4b-r16-task067 & Lots-of-LoRAs/Mistral-7B-Instruct-v0.2-4b-r16-task934 \\
\hline
Lots-of-LoRAs/Mistral-7B-Instruct-v0.2-4b-r16-task617 & Lots-of-LoRAs/Mistral-7B-Instruct-v0.2-4b-r16-task696 \\
\hline
Lots-of-LoRAs/Mistral-7B-Instruct-v0.2-4b-r16-task846 & Lots-of-LoRAs/Mistral-7B-Instruct-v0.2-4b-r16-task933 \\
\hline
Lots-of-LoRAs/Mistral-7B-Instruct-v0.2-4b-r16-task865 & Lots-of-LoRAs/Mistral-7B-Instruct-v0.2-4b-r16-task671 \\
\hline
Lots-of-LoRAs/Mistral-7B-Instruct-v0.2-4b-r16-task1398 & Lots-of-LoRAs/Mistral-7B-Instruct-v0.2-4b-r16-task1518 \\
\hline
Lots-of-LoRAs/Mistral-7B-Instruct-v0.2-4b-r16-task628 & Lots-of-LoRAs/Mistral-7B-Instruct-v0.2-4b-r16-task1286 \\
\hline
Lots-of-LoRAs/Mistral-7B-Instruct-v0.2-4b-r16-task1596 & Lots-of-LoRAs/Mistral-7B-Instruct-v0.2-4b-r16-task298 \\
\hline
Lots-of-LoRAs/Mistral-7B-Instruct-v0.2-4b-r16-task645 & Lots-of-LoRAs/Mistral-7B-Instruct-v0.2-4b-r16-task903 \\
\hline
Lots-of-LoRAs/Mistral-7B-Instruct-v0.2-4b-r16-task594 & Lots-of-LoRAs/Mistral-7B-Instruct-v0.2-4b-r16-task413 \\
\hline
Lots-of-LoRAs/Mistral-7B-Instruct-v0.2-4b-r16-task719 & Lots-of-LoRAs/Mistral-7B-Instruct-v0.2-4b-r16-task672 \\
\hline
Lots-of-LoRAs/Mistral-7B-Instruct-v0.2-4b-r16-task1418 & Lots-of-LoRAs/Mistral-7B-Instruct-v0.2-4b-r16-task475 \\
\hline
Lots-of-LoRAs/Mistral-7B-Instruct-v0.2-4b-r16-task357 & Lots-of-LoRAs/Mistral-7B-Instruct-v0.2-4b-r16-task1387 \\
\hline
Lots-of-LoRAs/Mistral-7B-Instruct-v0.2-4b-r16-task1711 & Lots-of-LoRAs/Mistral-7B-Instruct-v0.2-4b-r16-task304 \\
\hline
Lots-of-LoRAs/Mistral-7B-Instruct-v0.2-4b-r16-task750 & Lots-of-LoRAs/Mistral-7B-Instruct-v0.2-4b-r16-task1320 \\
\hline
Lots-of-LoRAs/Mistral-7B-Instruct-v0.2-4b-r16-task034 & Lots-of-LoRAs/Mistral-7B-Instruct-v0.2-4b-r16-task692 \\
\hline
Lots-of-LoRAs/Mistral-7B-Instruct-v0.2-4b-r16-task1406 & Lots-of-LoRAs/Mistral-7B-Instruct-v0.2-4b-r16-task119 \\
\hline
Lots-of-LoRAs/Mistral-7B-Instruct-v0.2-4b-r16-task211 & Lots-of-LoRAs/Mistral-7B-Instruct-v0.2-4b-r16-task363 \\
\hline
Lots-of-LoRAs/Mistral-7B-Instruct-v0.2-4b-r16-task1087 & Lots-of-LoRAs/Mistral-7B-Instruct-v0.2-4b-r16-task083 \\
\hline
Lots-of-LoRAs/Mistral-7B-Instruct-v0.2-4b-r16-task1385 & Lots-of-LoRAs/Mistral-7B-Instruct-v0.2-4b-r16-task1308 \\
\hline
Lots-of-LoRAs/Mistral-7B-Instruct-v0.2-4b-r16-task1656 & Lots-of-LoRAs/Mistral-7B-Instruct-v0.2-4b-r16-task892 \\
\hline
Lots-of-LoRAs/Mistral-7B-Instruct-v0.2-4b-r16-task499 & Lots-of-LoRAs/Mistral-7B-Instruct-v0.2-4b-r16-task1409 \\
\hline
Lots-of-LoRAs/Mistral-7B-Instruct-v0.2-4b-r16-task1207 & Lots-of-LoRAs/Mistral-7B-Instruct-v0.2-4b-r16-task564 \\
\hline
Lots-of-LoRAs/Mistral-7B-Instruct-v0.2-4b-r16-task325 & Lots-of-LoRAs/Mistral-7B-Instruct-v0.2-4b-r16-task206 \\
\hline
Lots-of-LoRAs/Mistral-7B-Instruct-v0.2-4b-r16-task890 & Lots-of-LoRAs/Mistral-7B-Instruct-v0.2-4b-r16-task1520 \\
\hline
Lots-of-LoRAs/Mistral-7B-Instruct-v0.2-4b-r16-task703 & Lots-of-LoRAs/Mistral-7B-Instruct-v0.2-4b-r16-task318 \\
\hline
Lots-of-LoRAs/Mistral-7B-Instruct-v0.2-4b-r16-task366 & Lots-of-LoRAs/Mistral-7B-Instruct-v0.2-4b-r16-task335 \\
\hline
Lots-of-LoRAs/Mistral-7B-Instruct-v0.2-4b-r16-task600 & Lots-of-LoRAs/Mistral-7B-Instruct-v0.2-4b-r16-task477 \\
\hline
Lots-of-LoRAs/Mistral-7B-Instruct-v0.2-4b-r16-task138 & Lots-of-LoRAs/Mistral-7B-Instruct-v0.2-4b-r16-task291 \\
\hline
Lots-of-LoRAs/Mistral-7B-Instruct-v0.2-4b-r16-task074 & Lots-of-LoRAs/Mistral-7B-Instruct-v0.2-4b-r16-task1389 \\
\hline
Lots-of-LoRAs/Mistral-7B-Instruct-v0.2-4b-r16-task192 & Lots-of-LoRAs/Mistral-7B-Instruct-v0.2-4b-r16-task316 \\
\hline
Lots-of-LoRAs/Mistral-7B-Instruct-v0.2-4b-r16-task1609 & Lots-of-LoRAs/Mistral-7B-Instruct-v0.2-4b-r16-task296 \\
\hline
Lots-of-LoRAs/Mistral-7B-Instruct-v0.2-4b-r16-task666 & Lots-of-LoRAs/Mistral-7B-Instruct-v0.2-4b-r16-task1582 \\
\hline
Lots-of-LoRAs/Mistral-7B-Instruct-v0.2-4b-r16-task1728 & Lots-of-LoRAs/Mistral-7B-Instruct-v0.2-4b-r16-task701 \\
\hline
Lots-of-LoRAs/Mistral-7B-Instruct-v0.2-4b-r16-task275 & Lots-of-LoRAs/Mistral-7B-Instruct-v0.2-4b-r16-task107 \\
\hline
Lots-of-LoRAs/Mistral-7B-Instruct-v0.2-4b-r16-task079 & Lots-of-LoRAs/Mistral-7B-Instruct-v0.2-4b-r16-task1157 \\
\hline
Lots-of-LoRAs/Mistral-7B-Instruct-v0.2-4b-r16-task1167 & Lots-of-LoRAs/Mistral-7B-Instruct-v0.2-4b-r16-task403 \\
\hline
Lots-of-LoRAs/Mistral-7B-Instruct-v0.2-4b-r16-task359 & Lots-of-LoRAs/Mistral-7B-Instruct-v0.2-4b-r16-task517 \\
\hline
Lots-of-LoRAs/Mistral-7B-Instruct-v0.2-4b-r16-task351 & Lots-of-LoRAs/Mistral-7B-Instruct-v0.2-4b-r16-task964 \\
\hline
Lots-of-LoRAs/Mistral-7B-Instruct-v0.2-4b-r16-task904 & Lots-of-LoRAs/Mistral-7B-Instruct-v0.2-4b-r16-task1148 \\
\hline
Lots-of-LoRAs/Mistral-7B-Instruct-v0.2-4b-r16-task879 & Lots-of-LoRAs/Mistral-7B-Instruct-v0.2-4b-r16-task636 \\
\hline
Lots-of-LoRAs/Mistral-7B-Instruct-v0.2-4b-r16-task1509 & Lots-of-LoRAs/Mistral-7B-Instruct-v0.2-4b-r16-task207 \\
\hline
Lots-of-LoRAs/Mistral-7B-Instruct-v0.2-4b-r16-task228 & Lots-of-LoRAs/Mistral-7B-Instruct-v0.2-4b-r16-task209 \\
\hline
Lots-of-LoRAs/Mistral-7B-Instruct-v0.2-4b-r16-task128 & Lots-of-LoRAs/Mistral-7B-Instruct-v0.2-4b-r16-task710 \\
\hline
Lots-of-LoRAs/Mistral-7B-Instruct-v0.2-4b-r16-task1322 & Lots-of-LoRAs/Mistral-7B-Instruct-v0.2-4b-r16-task163 \\
\hline
Lots-of-LoRAs/Mistral-7B-Instruct-v0.2-4b-r16-task178 & Lots-of-LoRAs/Mistral-7B-Instruct-v0.2-4b-r16-task089 \\
\hline
Lots-of-LoRAs/Mistral-7B-Instruct-v0.2-4b-r16-task700 & Lots-of-LoRAs/Mistral-7B-Instruct-v0.2-4b-r16-task1581 \\
\hline
Lots-of-LoRAs/Mistral-7B-Instruct-v0.2-4b-r16-task927 & Lots-of-LoRAs/Mistral-7B-Instruct-v0.2-4b-r16-task101 \\
\hline
Lots-of-LoRAs/Mistral-7B-Instruct-v0.2-4b-r16-task123 & Lots-of-LoRAs/Mistral-7B-Instruct-v0.2-4b-r16-task1321 \\
\hline
Lots-of-LoRAs/Mistral-7B-Instruct-v0.2-4b-r16-task550 & Lots-of-LoRAs/Mistral-7B-Instruct-v0.2-4b-r16-task129 \\
\hline
Lots-of-LoRAs/Mistral-7B-Instruct-v0.2-4b-r16-task393 & Lots-of-LoRAs/Mistral-7B-Instruct-v0.2-4b-r16-task1214 \\
\hline
Lots-of-LoRAs/Mistral-7B-Instruct-v0.2-4b-r16-task277 & Lots-of-LoRAs/Mistral-7B-Instruct-v0.2-4b-r16-task1447 \\
\hline
Lots-of-LoRAs/Mistral-7B-Instruct-v0.2-4b-r16-task324 & Lots-of-LoRAs/Mistral-7B-Instruct-v0.2-4b-r16-task455 \\
\hline
Lots-of-LoRAs/Mistral-7B-Instruct-v0.2-4b-r16-task725 & Lots-of-LoRAs/Mistral-7B-Instruct-v0.2-4b-r16-task365 \\
\hline
Lots-of-LoRAs/Mistral-7B-Instruct-v0.2-4b-r16-task1316 & Lots-of-LoRAs/Mistral-7B-Instruct-v0.2-4b-r16-task1199 \\
\hline
Lots-of-LoRAs/Mistral-7B-Instruct-v0.2-4b-r16-task717 & Lots-of-LoRAs/Mistral-7B-Instruct-v0.2-4b-r16-task245 \\
\hline
Lots-of-LoRAs/Mistral-7B-Instruct-v0.2-4b-r16-task874 & Lots-of-LoRAs/Mistral-7B-Instruct-v0.2-4b-r16-task925 \\
\hline
Lots-of-LoRAs/Mistral-7B-Instruct-v0.2-4b-r16-task380 & Lots-of-LoRAs/Mistral-7B-Instruct-v0.2-4b-r16-task1712 \\
\hline
Lots-of-LoRAs/Mistral-7B-Instruct-v0.2-4b-r16-task1504 & Lots-of-LoRAs/Mistral-7B-Instruct-v0.2-4b-r16-task619 \\
\hline
Lots-of-LoRAs/Mistral-7B-Instruct-v0.2-4b-r16-task590 & Lots-of-LoRAs/Mistral-7B-Instruct-v0.2-4b-r16-task1186 \\
\hline
Lots-of-LoRAs/Mistral-7B-Instruct-v0.2-4b-r16-task736 & Lots-of-LoRAs/Mistral-7B-Instruct-v0.2-4b-r16-task069 \\
\hline
Lots-of-LoRAs/Mistral-7B-Instruct-v0.2-4b-r16-task377 & Lots-of-LoRAs/Mistral-7B-Instruct-v0.2-4b-r16-task181 \\
\hline
Lots-of-LoRAs/Mistral-7B-Instruct-v0.2-4b-r16-task859 & Lots-of-LoRAs/Mistral-7B-Instruct-v0.2-4b-r16-task144 \\
\hline
Lots-of-LoRAs/Mistral-7B-Instruct-v0.2-4b-r16-task632 & Lots-of-LoRAs/Mistral-7B-Instruct-v0.2-4b-r16-task641 \\
\hline
Lots-of-LoRAs/Mistral-7B-Instruct-v0.2-4b-r16-task064 & Lots-of-LoRAs/Mistral-7B-Instruct-v0.2-4b-r16-task630 \\
\hline
Lots-of-LoRAs/Mistral-7B-Instruct-v0.2-4b-r16-task1154 & Lots-of-LoRAs/Mistral-7B-Instruct-v0.2-4b-r16-task390 \\
\hline
Lots-of-LoRAs/Mistral-7B-Instruct-v0.2-4b-r16-task1188 & Lots-of-LoRAs/Mistral-7B-Instruct-v0.2-4b-r16-task625 \\
\hline
Lots-of-LoRAs/Mistral-7B-Instruct-v0.2-4b-r16-task607 & Lots-of-LoRAs/Mistral-7B-Instruct-v0.2-4b-r16-task495 \\
\hline
Lots-of-LoRAs/Mistral-7B-Instruct-v0.2-4b-r16-task1189 & Lots-of-LoRAs/Mistral-7B-Instruct-v0.2-4b-r16-task398 \\
\hline
Lots-of-LoRAs/Mistral-7B-Instruct-v0.2-4b-r16-task108 & Lots-of-LoRAs/Mistral-7B-Instruct-v0.2-4b-r16-task1347 \\
\hline
Lots-of-LoRAs/Mistral-7B-Instruct-v0.2-4b-r16-task1541 & Lots-of-LoRAs/Mistral-7B-Instruct-v0.2-4b-r16-task202 \\
\hline
Lots-of-LoRAs/Mistral-7B-Instruct-v0.2-4b-r16-task1723 & Lots-of-LoRAs/Mistral-7B-Instruct-v0.2-4b-r16-task1669 \\
\hline
Lots-of-LoRAs/Mistral-7B-Instruct-v0.2-4b-r16-task1089 & Lots-of-LoRAs/Mistral-7B-Instruct-v0.2-4b-r16-task1584 \\
\hline
Lots-of-LoRAs/Mistral-7B-Instruct-v0.2-4b-r16-task081 & Lots-of-LoRAs/Mistral-7B-Instruct-v0.2-4b-r16-task329 \\
\hline
Lots-of-LoRAs/Mistral-7B-Instruct-v0.2-4b-r16-task691 & Lots-of-LoRAs/Mistral-7B-Instruct-v0.2-4b-r16-task588 \\
\hline
Lots-of-LoRAs/Mistral-7B-Instruct-v0.2-4b-r16-task1593 & Lots-of-LoRAs/Mistral-7B-Instruct-v0.2-4b-r16-task724 \\
\hline
Lots-of-LoRAs/Mistral-7B-Instruct-v0.2-4b-r16-task149 & Lots-of-LoRAs/Mistral-7B-Instruct-v0.2-4b-r16-task1449 \\
\hline
Lots-of-LoRAs/Mistral-7B-Instruct-v0.2-4b-r16-task1313 & Lots-of-LoRAs/Mistral-7B-Instruct-v0.2-4b-r16-task1453 \\
\hline
Lots-of-LoRAs/Mistral-7B-Instruct-v0.2-4b-r16-task905 & Lots-of-LoRAs/Mistral-7B-Instruct-v0.2-4b-r16-task704 \\
\hline
Lots-of-LoRAs/Mistral-7B-Instruct-v0.2-4b-r16-task585 & Lots-of-LoRAs/Mistral-7B-Instruct-v0.2-4b-r16-task1209 \\
\hline
Lots-of-LoRAs/Mistral-7B-Instruct-v0.2-4b-r16-task249 & Lots-of-LoRAs/Mistral-7B-Instruct-v0.2-4b-r16-task1386 \\
\hline
Lots-of-LoRAs/Mistral-7B-Instruct-v0.2-4b-r16-task1400 & Lots-of-LoRAs/Mistral-7B-Instruct-v0.2-4b-r16-task751 \\
\hline
Lots-of-LoRAs/Mistral-7B-Instruct-v0.2-4b-r16-task1332 & Lots-of-LoRAs/Mistral-7B-Instruct-v0.2-4b-r16-task674 \\
\hline
Lots-of-LoRAs/Mistral-7B-Instruct-v0.2-4b-r16-task379 & Lots-of-LoRAs/Mistral-7B-Instruct-v0.2-4b-r16-task243 \\
\hline
Lots-of-LoRAs/Mistral-7B-Instruct-v0.2-4b-r16-task1318 & Lots-of-LoRAs/Mistral-7B-Instruct-v0.2-4b-r16-task428 \\
\hline
Lots-of-LoRAs/Mistral-7B-Instruct-v0.2-4b-r16-task488 & Lots-of-LoRAs/Mistral-7B-Instruct-v0.2-4b-r16-task705 \\
\hline
Lots-of-LoRAs/Mistral-7B-Instruct-v0.2-4b-r16-task698 & Lots-of-LoRAs/Mistral-7B-Instruct-v0.2-4b-r16-task1601 \\
\hline
Lots-of-LoRAs/Mistral-7B-Instruct-v0.2-4b-r16-task861 & Lots-of-LoRAs/Mistral-7B-Instruct-v0.2-4b-r16-task1510 \\
\hline
Lots-of-LoRAs/Mistral-7B-Instruct-v0.2-4b-r16-task077 & Lots-of-LoRAs/Mistral-7B-Instruct-v0.2-4b-r16-task509 \\
\hline
Lots-of-LoRAs/Mistral-7B-Instruct-v0.2-4b-r16-task734 & Lots-of-LoRAs/Mistral-7B-Instruct-v0.2-4b-r16-task720 \\
\hline
Lots-of-LoRAs/Mistral-7B-Instruct-v0.2-4b-r16-task1210 & Lots-of-LoRAs/Mistral-7B-Instruct-v0.2-4b-r16-task284 \\
\hline
Lots-of-LoRAs/Mistral-7B-Instruct-v0.2-4b-r16-task584 & Lots-of-LoRAs/Mistral-7B-Instruct-v0.2-4b-r16-task105 \\
\hline
Lots-of-LoRAs/Mistral-7B-Instruct-v0.2-4b-r16-task330 & Lots-of-LoRAs/Mistral-7B-Instruct-v0.2-4b-r16-task923 \\
\hline
Lots-of-LoRAs/Mistral-7B-Instruct-v0.2-4b-r16-task319 & Lots-of-LoRAs/Mistral-7B-Instruct-v0.2-4b-r16-task400 \\
\hline
Lots-of-LoRAs/Mistral-7B-Instruct-v0.2-4b-r16-task246 & Lots-of-LoRAs/Mistral-7B-Instruct-v0.2-4b-r16-task726 \\
\hline
Lots-of-LoRAs/Mistral-7B-Instruct-v0.2-4b-r16-task1568 & Lots-of-LoRAs/Mistral-7B-Instruct-v0.2-4b-r16-task1442 \\
\hline
Lots-of-LoRAs/Mistral-7B-Instruct-v0.2-4b-r16-task1640 &  Lots-of-LoRAs/Mistral-7B-Instruct-v0.2-4b-r16-task280\\
\hline
\end{longtable}}

~\autoref{tab:lora_all} presents the aggregated results across all layers, with error bars representing the standard deviation. For reference, the eigenvalue (scree) plot from ~\autoref{fig:short-b} is also reproduced in ~\autoref{tab:lora_all}. This plot depicts the proportion of variance explained by each principal component, computed across all weight matrices and layers from 500 independently trained Mistral models. The concentration of variance within the top \(k\) components reveals the presence of a consistent low-dimensional subspace, offering strong empirical support for the universal subspace hypothesis.

The individual plots provide spectral analysis results for the key, query, and value matrices from all 32 layers of all 500 Mistral models. For clarity, only the top 128 principal directions are visualized, representing a subset of the full component basis. This truncation mitigates the visual distortion caused by the long tail of near-zero eigenvalues beyond the universal subspace, which would otherwise dominate the graph without contributing meaningful information.

To test subspace expressiveness, we reconstruct LoRA weights for both 5 seen (IID) and unseen (OOD) tasks by projecting them into the universal subspace. As shown in \autoref{fig:lola_perf}, the reconstructed models retain high performance in both cases. In contrast, projection into the residual \textit{Secondary Subspace} leads to a sharp performance drop, underscoring the importance of the principal subspace. Our method is also 19$\times$ more memory efficient, as it eliminates the need to store all 500 LoRAs.

{\tiny  
\begin{longtable}{|l|l|l|l|}
\caption{Models from HuggingFace used for the Universal Stable Diffusion-XL subspace extraction} \label{tab:diffusion_data} \\
\hline
alphonse-mucha-style & directors-coen-brothers-style & larry-carlson-style & rene-magritte-style \\ \hline
beeple-mike-winkelmann-style & director-sergei-eisenstein-style & lascaux & richard-corben-style \\ \hline
character-design & director-sofia-coppola-style & laurel-burch-style & richard-dadd-style \\ \hline
director-christopher-nolan-style & director-terrence-malick-style & lawrence-alma-tadema-style & richard-hescox-style \\ \hline
director-lars-von-trier-style & director-tim-burton-style & leonid-afremov-style & richard-scarry-style \\ \hline
director-ridley-scott-style & director-wes-anderson-style & leonora-carrington-style & robert-adams-style \\ \hline
director-stanley-kubrick-style & director-wong-kar-wai-style & levitating-cube & robert-crumb-style \\ \hline
director-zhang-yimou-style & director-yorgos-lanthimos-style & liam-wong-style & robert-rauschenberg-style \\ \hline
olafur-eliasson-style & dixit-card-generator & lotte-reiniger-style & rodney-matthews-style \\ \hline
origami & dressed-animals & louis-comfort-tiffany-style & roger-ballen-style \\ \hline
simone-martini-style & dripping-art & lovis-corinth-style & roger-deakins-style \\ \hline
studio-ghibli-style & edward-gorey-style & lucas-cranach-style & romare-bearden-style \\ \hline
ukiyo-e-art & elizabeth-gadd-style & luc-schuiten-style & ryoji-ikeda-style \\ \hline
wu-guanzhong-style & erik-johansson-style & lyonel-feininger-style & sacha-goldberger-style \\ \hline
1987-action-figure-playset-packaging & erik-madigan-heck-style & made-of-iridescent-foil & salomon-van-ruysdael-style \\ \hline
aardman-animations-style & euan-uglow-style & makoto-shinkai-style & sam-spratt-style \\ \hline
akos-major-style & felipe-pantone-style & marc-silvestri-style & sandy-skoglund-style \\ \hline
albumen-print & filip-hodas-style & marianna-rothen-style & santiago-caruso-style \\ \hline
alec-soth-style & folk-art & maria-sibylla-merian-style & shaun-tan-style \\ \hline
alejandro-jodorowsky-style & gabriel-pacheco-style & mark-catesby-style & shepard-fairey-style \\ \hline
alessandro-gottardo-style & gemma-correll-style & mark-ryden-style & sidney-nolan-style \\ \hline
alex-andreev-style & george-condo-style & martin-whatson-style & simon-stalenhag-style \\ \hline
alex-gross-style & gilbert-garcin-style & mary-cassatt-style & skottie-young-style \\ \hline
alfred-augustus-glendening-style & gregory-crewdson-style & maurice-de-vlaminck-style & sofonisba-anguissola-style \\ \hline
alex-pardee-style & gustave-dore-style & maurice-prendergast-style & sophie-gengembre-anderson-style \\ \hline
alternate-realities & hasui-kawase-style & maxfield-parrish-style & stained-glass-portrait \\ \hline
ando-fuchs-style & hiroshi-nagai-style & maxime-maufra-style & stanley-donwood-style \\ \hline
andre-derain-style & infrared-photos & mike-mignola-style & stephan-martiniere-style \\ \hline
andrei-tarkovsky-style & isometric-cutaway & mikhail-vrubel-style & stephen-gammell-style \\ \hline
andrew-wyeth-style & ivan-bilibin-style & moebius-jean-giraud-style & stop-motion-animation \\ \hline
angus-mckie-style & james-c-christensen-style & movie-poster & surreal-collage \\ \hline
anna-maria-garthwaite-style & james-jean-style & moving-meditations & surreal-harmony \\ \hline
atey-ghailan-style & james-r-eads-style & nadav-kander-style & surreal-plate \\ \hline
audrey-kawasaki-style & james-turrell-style & natalia-goncharova-style & syd-mead-style \\ \hline
avant-garde-fashion & jan-brueghel-style & n-c-wyeth-style & synthwave-t-shirt \\ \hline
banksy-style & jan-svankmajer-style & needlepoint & teamlab-style \\ \hline
bas-relief & jan-van-eyck-style & neon-night & terry-gilliam-style \\ \hline
century-botanical-illustration & jan-van-goyen-style & nicolas-poussin-style & thomas-cole-style \\ \hline
christopher-balaskas-style & j-c-leyendecker-style & noah-bradley-style & thomas-kinkade-style \\ \hline
christopher-ryan-mckenney-style & jean-baptiste-camille-corot-style & ohara-koson-style & thomas-moran-style \\ \hline
clay-animation & jean-baptiste-monge-style & okuda-san-miguel-style & thomas-schaller-style \\ \hline
color-palette & jean-baptiste-simeon-chardin-style & olly-moss-style & tim-walker-style \\ \hline
craig-mullins-style & jean-metzinger-style & op-art & tintoretto-style \\ \hline
crocheted & jean-michel-basquiat-style & parralel-dimensions & todd-hido-style \\ \hline
daniel-arsham-style & jessie-willcox-smith-style & pascal-campion-style & tove-jansson-style \\ \hline
dark-fantasy & jim-mahfood-style & paul-gustav-fischer-style & tracie-grimwood-style \\ \hline
dave-mckean-style & john-albert-bauer-style & paul-laffoley-style & vasily-vereshchagin-style \\ \hline
diorama & john-berkey-style & paul-signac-style & vertical-landscapes \\ \hline
director-agnes-varda-style & john-blanche-style & peter-doig-style & victor-brauner-style \\ \hline
death-stranding & john-constable-style & peter-paul-rubens-style & victor-moscoso-style \\ \hline
director-akira-kurosawa-style & john-everett-millais-style & philippe-druillet-style & video-installation \\ \hline
director-andrei-zvyagintsev-style & john-harris-style & photographer-elena-helfrecht-style & vintage-postage-stamps \\ \hline
director-bong-joon-ho-style & john-james-audubon-style & photographer-flora-borsi-style & weegee-style \\ \hline
director-darren-aronofsky-style & john-kenn-mortensen-style & photographer-maren-klemp-style & wendy-froud-style \\ \hline
director-david-fincher-style & john-martin-style & photographer-martin-kimbell-style & will-eisner-style \\ \hline
director-david-lynch-style & john-singer-sargent-style & photographer-reuben-wu-style & willem-haenraets-style \\ \hline
cute-animals & john-singleton-copley-style & pierre-auguste-renoir-style & willem-van-aelst-style \\ \hline
ben-aronson-style & john-william-waterhouse-style & pierre-bonnard-style & william-langson-lathrop-style \\ \hline
director-emir-kusturica-style & joseph-wright-of-derby-style & pieter-claesz-style & william-mctaggart-style \\ \hline
director-gaspar-noe-style & josh-agle-style & punk-collage & william-merritt-chase-style \\ \hline
director-jean-pierre-jeunet-style & josh-kirby-style & quentin-blake-style & winslow-homer-style \\ \hline
director-krzysztof-kieslowski-style & jules-bastien-lepage-style & raimonds-staprans-style & worthington-whittredge-style \\ \hline
director-martin-scorsese-style & kate-greenaway-style & ralph-bakshi-style & yaacov-agam-style \\ \hline
director-nicolas-winding-refn-style & kay-nielsen-style & ralph-steadman-style & yoh-nagao-style \\ \hline
director-park-chan-wook-style & kilian-eng-style & randolph-caldecott-style & yves-klein-style \\ \hline
director-pedro-almodovar-style & kirigami & ray-caesar-style & zanele-muholi-style \\ \hline
director-quentin-tarantino-style & konstantin-korovin-style & remedios-varo-style &  \\ \hline
\end{longtable}
}
\FloatBarrier

\paragraph{Universal SDXL experiment details} Our second experiment involves the complex and multimodal task of Text-to-Image generation using the Stable Diffusion-XL model~\cite{sdxl}. We extract our low rank universal subspace from publicly available LoRA models on HuggingFace repository~\cite{von-platen-etal-2022-diffusers} - \autoref{tab:diffusion_data} lists all the SDXL models that we used to extract the Universal Subspace. As can be seen in \autoref{tab:diffusion_data}, the models range wildly in styles on which they were finetuned. The fact that all these diverse models can be represented by a single low rank universal subspace model strongly verifies our hypothesis. We use top 16 components and 30 denoising steps. For each experiment model shown in \autoref{tab:clip_sdxl} and \autoref{fig:diffusion}, that LoRA model is reconstructed using a universal subspace created using rest of the available LoRA adapters, essentially confirming the generalization capability of this subspace.

We then use this single SDXL universal subspace to generate images with similar styles to evaluate whether this subspace is capable of doing so, by projecting randomly chosen LoRA models into this subspace. \autoref{fig:diffusion} shows that our universal subspace matches the visual quality and style nuances of individual LoRAs, resulting in significant memory savings. \autoref{tab:clip_sdxl} shows quantitative results for our Universal subspace in terms of CLIP scores, where interestingly we can see that our Universal Subspace outperforms the individual LoRA models. This improvement may be attributed to our Universal SDXL removing noise from the subspace - a phenomenon previously observed by~\cite{sharma_laser_2023}.
The styles used in \cref{tab:clip_sdxl}, which are in \cref{tab:diffusion_data} are (from Style 1 to Style 10) Ukiyo-e Style,  Todd Hildo Style , Olly Moss Style , Needlepoint Style , Studio Ghibli Style, Surreal Harmony Style , Dressed Animal Style , Lascaux Cave Art Style , Kirigami Style , Yaacov Agam Style.
\subsection{Low rank shared universal subspaces in classical weights}
\label{sec:classical_apx}
\begin{figure}[htb]
  \centering
  \includegraphics[width=0.9\textwidth]{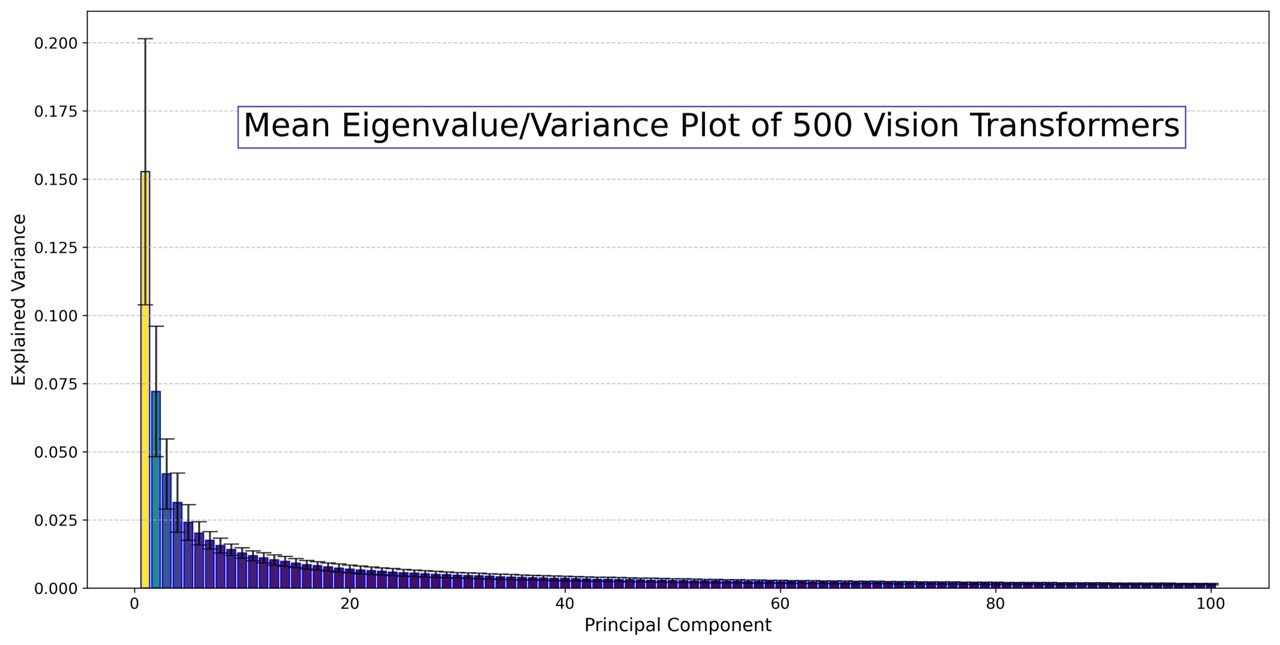}
  \caption{Spectral analysis of the Vision Transformer (ViT-base-patch16-224) model: Aggregated eigenvalue (scree) plot across $~500$ ViT models and all layers. The plot demonstrates that the majority of the variance is consistently captured by the top 16 principal directions, indicating the presence of a shared low-dimensional universal subspace.}
  \label{fig:vit_all}
\end{figure}
In order to further solidify the evidence for our universal subspace hypothesis, we show that this universality does extend beyond adapter models to conventional weights. We do not focus on convolutional weight parameters as they can simply be equated with fully connected layers~\citep{ma2017equivalencefullyconnectedlayer}, and have been shown, in limited scope, to match Gabor-like filters~\citep{Krizhevsky2012ImageNet}. Therefore, our analysis trivially extends to these kinds of parameters as well. However, there are a few practical differences between the low rank adapter and classical weight subspace analysis. The classical weight subspace analysis is more computationally expensive relative to the LoRA one due to high dimensionality of the parameters, but in effect, same. Additionally, the number of sufficiently well trained models is understandably fewer than LoRA models. Further, there is also higher variance in terms of model quality in the classical weights as it is harder to optimize these models as compared to LoRA which often are optimized from a good initialization point (the pretrained base model). An outcome of this is that the universal subspace approximation that we obtain from the publicly available pretrained models are noisier than their LoRA counterparts. Inspite of this, our universal subspace hypothesis remains validated.

To further support our universal subspace hypothesis, we extend our analysis beyond adapter models to standard full-rank weights. We exclude convolutional parameters from explicit consideration, as they are functionally equivalent to fully connected layers under certain conditions~\citep{ma2017equivalencefullyconnectedlayer}, and their learned representations (e.g., Gabor-like filters) have been studied, in limited scope, in prior work~\citep{Krizhevsky2012ImageNet}. Consequently, our analysis generalizes naturally to convolutional weights as well.

There are, however, practical differences between the subspace analysis of full-rank model weights and that of low-rank adapters. First, analyzing conventional weight matrices is significantly more computationally intensive because of their higher dimensionality. Second, the availability of a large number of independently and sufficiently well-trained models is more limited compared to LoRA models. Third, the classical weight models exhibit greater variance in model quality, since they must be trained from scratch, often without the benefit of a well-optimized initialization, unlike LoRA which builds upon a strong pretrained base.

As a result, the subspaces estimated from classical weights tend to be noisier, and the universality signal is less pronounced. Despite these challenges, we still observe consistent structure in the leading components, lending further empirical support to the universal subspace hypothesis.

\begin{figure}[h]
  \centering
  \fbox{\includegraphics[width=1.0\textwidth]{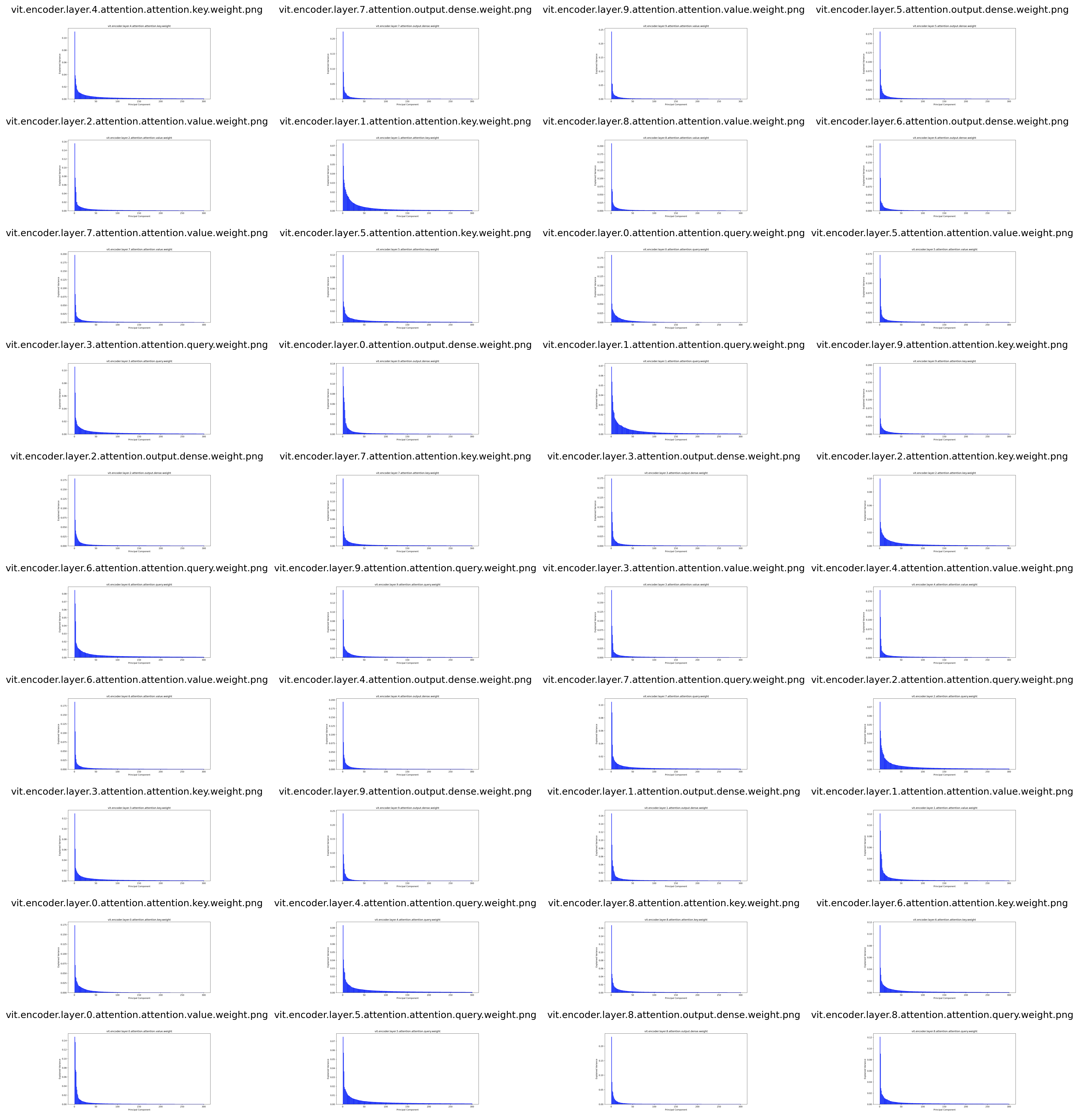}}
  \caption{Layerwise Eigenvalue Plots of 500 ViT models.}
  \label{fig:vit_ind}
\end{figure}
\paragraph{Universal ViT-base-patch16-224 experiment details} We collect $\sim$500 pretrained ViT models from HuggingFace, shown in \autoref{tab:vit_models}, spanning very diverse domains — many of which would be considered orthogonal to one another in terms of domain generalization. These models have been trained with varying losses, optimizers, and initializations. These models were used as-is, without curation or access to training data, to reflect real-world variability. \autoref{fig:vit_all} shows the summarized scree plot for all relevant layers of ViT (sans first and last layers due to differences in shape and tasks) for all $\sim500$ ViT models showing that the majority of variance is captured by the top 16 principal directions, revealing a highly compressible, shared subspace across layers. Only the top 100 components are visualized for clarity, although the available subspace is significantly larger, underlying the sparsity of this universal subspace. We observe this for layerwise analysis in \autoref{fig:vit_ind} as well. For the experimental results presented in \autoref{tab:vit_Res}, we randomly choose 4-5 IID and 4-5 OOD models from \autoref{tab:vit_models} for which evaluation dataset is available, and reconstruct these model weights by projecting them into our 16 component universal subspace. For the OOD case, we ensure that the models being evaluated are not present in the subset used for creating the universal subspace approximation. As seen from the results, our extremely sparse subspace model performs competitively compared to the fully trained versions. It is likely that with more careful choice of principal directions per layer would allow for at par or even better performance.
{\tiny  
\begin{longtable}{|p{0.4\textwidth}|p{0.4\textwidth}|}
\caption{Finetuned Models from HuggingFace used for the Universal Vision Transformer subspace extraction (vit-base-patch16-224)}
\label{tab:vit_models} \\
\hline
0.50-200Train-100Test-vit-base & 2025-01-21-16-13-04-vit-base-patch16-224 \\
\hline
2025-02-05-14-22-36-vit-base-patch16-224 & 21BAI1229 \\
\hline
Accomodation\_room\_classification & adam\_VitB-p16-224-1e-4-batch\_16\_epoch\_4\_classes\_24 \\
\hline
age\_face\_detection\_base & AIvisionGuard-v2 \\
\hline
alea & amns \\
\hline
AnimeCharacterClassifierMark1 & autotrain-48ci8-roib9 \\
\hline
autotrain-8oqr6-image0807-20 & autotrain-ap-pass-fail-v1 \\
\hline
autotrain-g2g80-iwcfm & autotrain-google-vit-13epoch \\
\hline
autotrain-ht4es-gbvmt & autotrain-image-classifier-cats-and-dogs \\
\hline
autotrain-pknu0-o76h9 & autotrain-s0sds-erede \\
\hline
autotrain-test-image-classification & autotrain-vit-base-patch16-224-fog-or-smog-classification \\
\hline
beauty-ornot & beer-classifier \\
\hline
bg-classif & bigger-chord-finetuned \\
\hline
brain-tumor-44 & ButterflyClasifModel \\
\hline
camera-type & Caracam \\
\hline
cards-vit-base-patch16-224-finetuned-v1 & carmodel \\
\hline
cats123 & cats-dogs-2024 \\
\hline
cats-dogs-classification & CheXpert-ViT-U-MultiClass \\
\hline
CheXpert-ViT-U-SelfTrained & chord-final-model \\
\hline
chord\_ViT-finetuned & cifar10-lt \\
\hline
city\_multiclass\_classification & clasificador\_masas \\
\hline
corals\_binary\_classification & custom \\
\hline
detect\_meme & dog-breeds-classification \\
\hline
dog-cat-demo-20240815 & dog-cats-model \\
\hline
dummy\_classification\_model & dvm-cars-vit-first-5k \\
\hline
ecg-image-multilabel-classification & emotion \\
\hline
EmotionAgeModel & emotion\_model \\
\hline
emotion-recognition & emotion\_recognition \\
\hline
emotion\_recognition\_results & emotion-vit \\
\hline
face\_age\_detection\_base\_v2 & face\_age\_detection\_base\_v3\_weighted \\
\hline
final-run & finetune-cats \\
\hline
fine-tuned & finetuned-amazon \\
\hline
fine-tuned-augmented & finetuned-bin \\
\hline
finetuned-cifar10 & finetuned-indian-food \\
\hline
fine-tuned-model & finetuned\_model \\
\hline
Fine-Tuned\_Model & Fine-Tuned\_Model2 \\
\hline
Fine-Tuned\_Model3 & Fine-Tuned\_Model3\_Transfer\_learning \\
\hline
finetune-vit-base-patch16-224 & finetune\_vit\_base\_patch16\_224\_1epoch \\
\hline
Flowers & food \\
\hline
food-101-finetuned-model & Freshness-Fruit\_Vegies \\
\hline
frost-vision-v2-google\_vit-base-patch16-224 & frost-vision-v2-google\_vit-base-patch16-224-v2024-11-09 \\
\hline
frost-vision-v2-google\_vit-base-patch16-224-v2024-11-11 & frost-vision-v2-google\_vit-base-patch16-224-v2024-11-14 \\
\hline
fruit\_classification & fruits-360-16-7 \\
\hline
ft\_stable\_diffusion & gender \\
\hline
giecom-vit-model-clasification-waste & google-vit-base-patch16-224-batch32-lr0.0005-standford-dogs \\
\hline
google-vit-base-patch16-224-batch32-lr0.005-standford-dogs & google-vit-base-patch16-224-batch32-lr5e-05-standford-dogs \\
\hline
google-vit-base-patch16-224-batch64-lr0.005-standford-dogs & google-vit-base-patch16-224-OrganicAndInorganicWaste-classification \\
\hline
google-vit-base-patch16-224-Waste-O-I-classification & hf\_vit\_format\_hap\_pretrained\_256\_128 \\
\hline
Human-Action-Recognition-VIT-Base-patch16-224 & human-actions \\
\hline
image-classification & image\_classification \\
\hline
image\_strawbery-peach\_classifier & isa-vit\_model \\
\hline
lixg\_food\_model001 & Maggi-Parle-G\_Classifier \\
\hline
mammals\_multiclass\_classification & MemeDetector \\
\hline
model & Model \\
\hline
model-vit-base-finetuned & MRI\_vit \\
\hline
my\_chest\_xray\_model & myclass \\
\hline
my\_classification & MyPetModel \\
\hline
out & outputs \\
\hline
PagesClassificationModel & physiotheraphy-E2 \\
\hline
plant\_disease\_detection-beans & pokemon\_classification \\
\hline
pokemon\_model & pokemon-vit \\
\hline
recaptcha & recycled\_waste\_classification \\
\hline
results & rmsprop\_VitB-p16-224-1e-4-batch\_16\_epoch\_4\_classes\_24 \\
\hline
rmsprop\_VitB-p16-224-2e-4-batch\_16\_epoch\_4\_classes\_24 & road-conditions \\
\hline
rose\_recognition & rotated2 \\
\hline
Ruster & S1\_M1\_R1\_vit\_42498800 \\
\hline
S1\_M1\_R1\_vit\_42509509 & S1\_M1\_R1\_ViT\_42616100 \\
\hline
S1\_M1\_R2\_vit\_42498972 & S1\_M1\_R2\_ViT\_42618476 \\
\hline
S1\_M1\_R3\_vit\_42499444 & S1\_M1\_R3\_ViT\_42618486 \\
\hline
S2\_M1\_R1\_vit\_42499480 & S2\_M1\_R1\_ViT\_42618522 \\
\hline
S2\_M1\_R2\_vit\_42499499 & S2\_M1\_R2\_ViT\_42618530 \\
\hline
S2\_M1\_R3\_vit\_42499514 & S2\_M1\_R3\_ViT\_42618549 \\
\hline
S5\_M1\_fold1\_vit\_42499955 & S5\_M1\_fold1\_ViT\_42618571 \\
\hline
S5\_M1\_fold2\_vit\_42499968 & S5\_M1\_fold2\_ViT\_42618583 \\
\hline
S5\_M1\_fold3\_vit\_42499983 & S5\_M1\_fold3\_ViT\_42618589 \\
\hline
S5\_M1\_fold4\_vit\_42499997 & S5\_M1\_fold4\_ViT\_42618593 \\
\hline
S5\_M1\_fold5\_vit\_42500027 & S5\_M1\_fold5\_ViT\_42621111 \\
\hline
Screenshots\_detection\_to\_classification & sign-lan-model \\
\hline
square\_run\_32\_batch & square\_run\_age\_gender \\
\hline
square\_run\_first\_vote\_full\_pic\_50 & square\_run\_first\_vote\_full\_pic\_50\_age\_gender \\
\hline
square\_run\_first\_vote\_full\_pic\_75 & square\_run\_first\_vote\_full\_pic\_75\_age\_gender \\
\hline
square\_run\_second\_vote & square\_run\_second\_vote\_full\_pic\_50 \\
\hline
square\_run\_second\_vote\_full\_pic\_50\_age\_gender & square\_run\_second\_vote\_full\_pic\_75 \\
\hline
square\_run\_second\_vote\_full\_pic\_75\_age\_gender & square\_run\_second\_vote\_full\_pic\_age\_gender \\
\hline
square\_run\_second\_vote\_full\_pic\_stratified & square\_run\_square\_run\_first\_vote\_full\_pic\_25 \\
\hline
square\_run\_square\_run\_first\_vote\_full\_pic\_25\_age & square\_run\_square\_run\_first\_vote\_full\_pic\_25\_age\_gender \\
\hline
square\_run\_square\_run\_first\_vote\_full\_pic\_25\_age\_gender\_double\_check & square\_run\_square\_run\_second\_vote\_full\_pic\_25 \\
\hline
square\_run\_square\_run\_second\_vote\_full\_pic\_25\_age\_gender & square\_run\_with\_16\_batch\_size \\
\hline
square\_run\_with\_actual\_16\_batch\_size & stool-condition-classification \\
\hline
swaddling-classifier & swin-tiny-patch4-window7-224-finetuned-eurosat-kornia \\
\hline
tarread & telidermai \\
\hline
test-cifar-10 & traffic-levels-image-classification \\
\hline
Train-Augmentation-vit-base & trainer\_output \\
\hline
Train-Test-Augmentation-V3D-vit-base & UL\_base\_classification \\
\hline
UL\_bedroom\_classification & UL\_exterior\_classification \\
\hline
UL\_interior\_classification & vehicle\_multiclass\_classification \\
\hline
ViT\_ASVspoof\_DF & vit-augmentation \\
\hline
vit-b16-plant\_village & vit\_base \\
\hline
vit-base-1e-4-15ep & vit-base-1e-4-20ep \\
\hline
vit-base-1e-4-randaug & vit-base-1stGen-Pokemon-Images \\
\hline
vit-base-25ep & Vit-Base-30VN \\
\hline
vit-base-3e-5-randaug & vit-base-5e-4 \\
\hline
vit-base-add-2-decay & vit-base-augment \\
\hline
vit-base-batch-32 & vit-base-beans \\
\hline
vit-base-brain-mri & vit-base-cat\_or\_dog \\
\hline
vit-base-change-arg & vit-base-cocoa \\
\hline
ViT-Base-Document-Classifier & vit-base-fashion \\
\hline
vit-base-finetuned-cephalometric & vit-base-food101 \\
\hline
vit-base-fruits-360 & vit-base-hate-meme \\
\hline
vit-base-nationality & vit-base-org-plot \\
\hline
vit-base-oxford-brain-tumor & vit-base-oxford-brain-tumor\_try\_stuff \\
\hline
vit-base-oxford-brain-tumor\_x-ray & vit-base-oxford-iiit-pets \\
\hline
vit-base-oxford-pets-krasuluk & vit-base-patch16-224 \\
\hline
vit-base-patch16-224-13\_model & vit-base-patch16-224-30-vit \\
\hline
vit-base-patch16-224-9models & vit-base-patch16-224-abhi1-finetuned \\
\hline
vit-base-patch16-224\_augmented-v2\_fft & vit-base-patch16-224\_augmented-v2\_tl \\
\hline
vit-base-patch16-224-blur\_vs\_clean & vit-base-patch16-224-brand \\
\hline
vit-base-patch16-224-classifier & vit-base-patch16-224-clothes-filter \\
\hline
vit-base-patch16-224-cl-v1 & vit-base-patch16-224-crochets-clothes-classification \\
\hline
vit-base-patch16-224-Diastar & vit-base-patch16-224-Diastarallclasses \\
\hline
vit-base-patch16-224-dmae-va-U & vit-base-patch16-224-dmae-va-U5-100-iN \\
\hline
vit-base-patch16-224-dmae-va-U5-10-45-5e-05 & vit-base-patch16-224-dmae-va-U5-20-45-5e-05 \\
\hline
vit-base-patch16-224-dmae-va-U5-40-45-5e-05 & vit-base-patch16-224-dmae-va-U5-42B \\
\hline
vit-base-patch16-224-dmae-va-U5-42C & vit-base-patch16-224-dmae-va-U5-42D \\
\hline
vit-base-patch16-224-ethos & vit-base-patch16-224-ethos-25 \\
\hline
vit-base-patch16-224-ethos-8 & vit-base-patch16-224-ethos-data \\
\hline
vit-base-patch16-224-ethosrealdata & vit-base-patch16-224-fatigue \\
\hline
vit-base-patch16-224-finalterm & vit-base-patch16-224-finetuned \\
\hline
vit-base-patch16-224-finetuned-barkley & vit-base-patch16-224-finetuned-brain-tumor-classification \\
\hline
vit-base-patch16-224-finetuned-Brain-Tumor-Classification & vit-base-patch16-224-finetuned-cassava-leaf-disease \\
\hline
vit-base-patch16-224-finetuned-cedar & vit-base-patch16-224-finetuned-cifar10 \\
\hline
vit-base-patch16-224-finetuned-combinedSpiders & vit-base-patch16-224-finetuned-context-classifier \\
\hline
vit-base-patch16-224-finetuned-covid\_ct\_set\_full & vit-base-patch16-224-finetuned-covid\_ct\_set\_resumed \\
\hline
vit-base-patch16-224-finetuned-crochets-clothes & vit-base-patch16-224-finetuned-dangerousSpiders \\
\hline
vit-base-patch16-224-finetuned-eurosat & vit-base-patch16-224-finetuned-feature-maps-v3 \\
\hline
vit-base-patch16-224-finetuned-feature-map-v2 & vit-base-patch16-224-finetuned-fibre \\
\hline
vit-base-patch16-224-finetuned-flower & vit-base-patch16-224-finetuned-flower-classify \\
\hline
vit-base-patch16-224-finetuned-flowers & vit-base-patch16-224-finetuned-food101 \\
\hline
vit-base-patch16-224-finetuned-food102 & vit-base-patch16-224-finetuned-foveated-features \\
\hline
vit-base-patch16-224-finetuned-foveated-features-v2 & vit-base-patch16-224-finetuned-galaxy10-decals \\
\hline
vit-base-patch16-224-finetuned-hateful-meme-restructured & vit-base-patch16-224-finetuned-hateful-meme-restructured-balanced \\
\hline
vit-base-patch16-224-finetuned-imagegpt & vit-base-patch16-224-finetuned-ind-17-imbalanced-aadhaarmask \\
\hline
vit-base-patch16-224-finetuned-ind-17-imbalanced-aadhaarmask-new-parameter & vit-base-patch16-224-finetuned-landscape-test \\
\hline
vit-base-patch16-224-finetuned-lora-oxford-pets & vit-base-patch16-224-finetuned-masked-hateful-meme-restructured \\
\hline
vit-base-patch16-224-finetuned-noh & vit-base-patch16-224-finetuned-original-images \\
\hline
vit-base-patch16-224-finetuned-pneumonia-detection & vit-base-patch16-224-finetuned-polyterrasse \\
\hline
vit-base-patch16-224-finetuned-skin & vit\_base\_patch16\_224-finetuned-SkinDisease \\
\hline
vit-base-patch16-224-finetuned-teeth\_dataset & vit-base-patch16-224-finetuned-trash-classifications-albumentations \\
\hline
vit-base-patch16-224-finetuned-turquoise & vit-base-patch16-224-finetuned-Visual-Emotional \\
\hline
vit-base-patch16-224-finetuned-vit & vit-base-patch16-224-finetune\_test \\
\hline
vit-base-patch16-224-food101-16-7 & vit-base-patch16-224-food101-24-12 \\
\hline
vit-base-patch16-224-for-pre\_evaluation & vit-base-patch16-224-fruits-360-16-7 \\
\hline
vit-base-patch16-224-high-vit & vit-base-patch16-224-jvadlamudi2 \\
\hline
vit-base-patch16-224-masaratti & vit-base-patch16-224-mascotas \\
\hline
vit-base-patch16-224-mascotas-DA & vit-base-patch16-224-MSC-dmae \\
\hline
vit-base-patch16-224-newly-trained & vit-base-patch16-224-oxford-pets-classification \\
\hline
vit-base-patch16-224-perros-y-gatos & vit-base-patch16-224-pure-ViT \\
\hline
vit-base-patch16-224-R1-10 & vit-base-patch16-224-R1-40 \\
\hline
vit-base-patch16-224-Rado\_5 & vit-base-patch16-224\_rice-disease-02 \\
\hline
vit-base-patch16-224\_rice-leaf-disease-augmented\_fft & vit-base-patch16-224\_rice-leaf-disease-augmented\_tl \\
\hline
vit-base-patch16-224\_rice-leaf-disease-augmented-v4\_fft & vit-base-patch16-224\_rice-leaf-disease-augmented-v4\_tl \\
\hline
vit-base-patch16-224\_rice-leaf-disease-augmented-v4\_v5\_fft & vit-base-patch16-224\_rice-leaf-disease-augmented-v4\_v5\_pft \\
\hline
vit-base-patch16-224-rotated-dungeons-v101 & vit-base-patch16-224-rotated-dungeons-v103 \\
\hline
vit-base-patch16-224-RU2-10 & vit-base-patch16-224-RU2-40 \\
\hline
vit-base-patch16-224-RU3-10 & vit-base-patch16-224-RU3-40 \\
\hline
vit-base-patch16-224-RU4-10 & vit-base-patch16-224-RU4-40 \\
\hline
vit-base-patch16-224-RU5-10 & vit-base-patch16-224-RU5-10-8 \\
\hline
vit-base-patch16-224-RU5-40 & vit-base-patch16-224-RU9-24 \\
\hline
vit-base-patch16-224-RX1-24 & vit-base-patch16-224-RX2-12 \\
\hline
vit-base-patch16-224-RXL1-24 & vit-base-patch16-224-type \\
\hline
vit-base-patch16-224-U6-10 & vit-base-patch16-224-U7-10 \\
\hline
vit-base-patch16-224-U8-10 & vit-base-patch16-224-U8-10b \\
\hline
vit-base-patch16-224-U8-10c & vit-base-patch16-224-U8-40 \\
\hline
vit-base-patch16-224-U8-40b & vit-base-patch16-224-U8-40c \\
\hline
vit-base-patch16-224-U8-40d & vit-base-patch16-224-ve-b-U10-12 \\
\hline
vit-base-patch16-224-ve-b-U10-24 & vit-base-patch16-224-ve-b-U10-40 \\
\hline
vit-base-patch16-224-ve-U10-12 & vit-base-patch16-224-ve-U10-24 \\
\hline
vit-base-patch16-224-ve-U11-12 & vit-base-patch16-224-ve-U11-b-24 \\
\hline
vit-base-patch16-224-ve-U11-b-40 & vit-base-patch16-224-ve-U11-b-80 \\
\hline
vit-base-patch16-224-ve-U12-b-24 & vit-base-patch16-224-ve-U12-b-80 \\
\hline
vit-base-patch16-224-ve-U13-b-120 & vit-base-patch16-224-ve-U13-b-24 \\
\hline
vit-base-patch16-224-ve-U13-b-80 & vit-base-patch16-224-ve-U13b-80R \\
\hline
vit-base-patch16-224-ve-U13b-80RX & vit-base-patch16-224-ve-U13b-80RX1 \\
\hline
vit-base-patch16-224-ve-U13b-80RX3 & vit-base-patch16-224-ve-U13b-R \\
\hline
vit-base-patch16-224-ve-U14-b-24 & vit-base-patch16-224-ve-U15-b-80 \\
\hline
vit-base-patch16-224-ve-U16-b-80 & vit-base-patch16-224-ve-Ub \\
\hline
vit-base-patch16-224-vit & vit-base-patch16-224-vit-base-patch16-224-vit-base-patch16-224-dogORnot \\
\hline
vit-base-pets & vit-base-PICAI \\
\hline
vit-base-seed-1e-4 & vit-base-seed-3e-4 \\
\hline
vit-base-travel-document-classification & vit-base-v1-eval-epoch-maxgrad-decay-cosine \\
\hline
vit-beans-classifier & vit-beta1-0.85 \\
\hline
vit-beta1-0.88 & vit-beta1-0.95 \\
\hline
vit-beta2-0.99 & vit-beta2-0.995 \\
\hline
vit-beta2-0.9995 & vit-bird \\
\hline
ViT\_bloodmnist & ViT\_bloodmnist\_std\_0 \\
\hline
ViT\_bloodmnist\_std\_15 & ViT\_bloodmnist\_std\_30 \\
\hline
ViT\_bloodmnist\_std\_45 & ViT\_bloodmnist\_std\_60 \\
\hline
ViT\_breastmnist & ViT\_breastmnist\_std\_0 \\
\hline
ViT\_breastmnist\_std\_15 & ViT\_breastmnist\_std\_30 \\
\hline
ViT\_breastmnist\_std\_45 & ViT\_breastmnist\_std\_60 \\
\hline
VIT-cats-vs-dogs & vit-cifar10-fine-tuned \\
\hline
vit-class-weight & vit-cxr4 \\
\hline
vit-demo & ViT\_dog\_food \\
\hline
vit-dropout-0.2 & vit-dropout-0.3 \\
\hline
vit-dropout-0.4 & vit-dropout-0.5 \\
\hline
vit-ds-processed & vit-emotion-model \\
\hline
vit-epsilon-1e-7 & vit-epsilon-1e-9 \\
\hline
vit-epsilon-5e-9 & vit-face-project-piyush \\
\hline
vit-fine-tune-classification-cats-vs-dogs & vit-finetuned-1 \\
\hline
vit-food-classification-chrisis2 & vit-geometric-shapes-base \\
\hline
vit-google-model-30-classes & vit\_google\_vehicle\_classification\_model \\
\hline
vit-historical-page & vit\_Liveness\_detection\_v1.0 \\
\hline
vit-lr-0.0001 & vit-lr-0.001 \\
\hline
vit-lr-0.01 & vit-lr-cosine-restarts \\
\hline
vit-lr-cosine-warm-restarts & vit-lr-cosine-warmup \\
\hline
vit-lr-exponential & vit-lr-inverse-sqrt \\
\hline
vit-lr-linear & vit-lr-poly \\
\hline
vit-lr-reduce-plateau & vit-lr-step \\
\hline
vit-mae-base-finetuned-eurosat & vit-molecul \\
\hline
vit-ori-dataset-exp & vit-plant-classification \\
\hline
vit-plantnet300k & vit-plants \\
\hline
vit-real-fake-classification-v1 & vit-real-fake-classification-v2 \\
\hline
vit-real-fake-classification-v3 & vit-real-fake-classification-v4 \\
\hline
vit-skin-demo-v1 & vit-skin-demo-v2 \\
\hline
vit-skin-demo-v3 & vit-skin-demo-v4 \\
\hline
vit-skin-demo-v5 & vit-spam \\
\hline
vit-sports-cls & vit-transfer-learning \\
\hline
vit\_transformer\_eye\_disease & vit\_tumor\_classifier \\
\hline
vit-vit & vit-vit-base-patch16-224-finetuned-chest-xray \\
\hline
vit-weight-decay-1e-2 & vit-weight-decay-1e-3 \\
\hline
vit-weight-decay-1e-4 & vit-weight-decay-1e-5 \\
\hline
wmc\_v2\_vit\_base\_wm811k\_cls\_contra\_learning\_0916 & wmc\_v2\_vit\_base\_wm811k\_cls\_contra\_learning\_0916\_9cls \\
\hline
wmc-wmk811-v0-vit-special\_map\_det\_0917 & WS800\_ViT\_42820348 \\
\hline
WS800\_ViT\_42895082 & xraynewww \\
\hline
yet-another-amber-mines & zdravJEM\_CV\_BERT \\
\hline
\end{longtable}}

\FloatBarrier
\paragraph{Universal LLaMA3-8B Experiment Details}  
To further stress-test our universal subspace hypothesis on classical weight matrices, we extract a shared subspace from approximately 50 finetuned LLaMA3 models, each with 8 billion parameters. These models were obtained from publicly available repositories on HuggingFace. Due to their scale, we do not apply any model selection or filtering, and instead include the entire available set.

As shown in \autoref{fig:llama_all}, which presents the aggregated scree plot across all layers and all 50 models, the principal variance is concentrated in the top few components—consistent with the emergence of a low-rank universal subspace. For reference, the plot displays only the top 300 components, which represent a small fraction of the full rank, highlighting the inherently low-dimensional structure.

The models included in this analysis span a diverse range of domains, including medical applications, multilingual dialogue systems, and general-purpose assistants, as listed in \autoref{tab:llama_data}. To the best of our knowledge, this is the first work to demonstrate that such a large and heterogeneous collection of high-capacity language models can be jointly represented within a single low-rank subspace.

The layerwise spectral analysis, shown in \autoref{fig:llama_ind}, corroborates this finding: across all layers, the majority of eigenvalues fall below a threshold of \( < 0.001 \), indicating that most directions in parameter space contribute negligibly to variation across models. The plots are cropped to show only the leading components due to the large number of total dimensions. We recommend zooming in for clearer visualization.

\begin{figure}[h]
  \centering
  \includegraphics[width=1.0\textwidth]{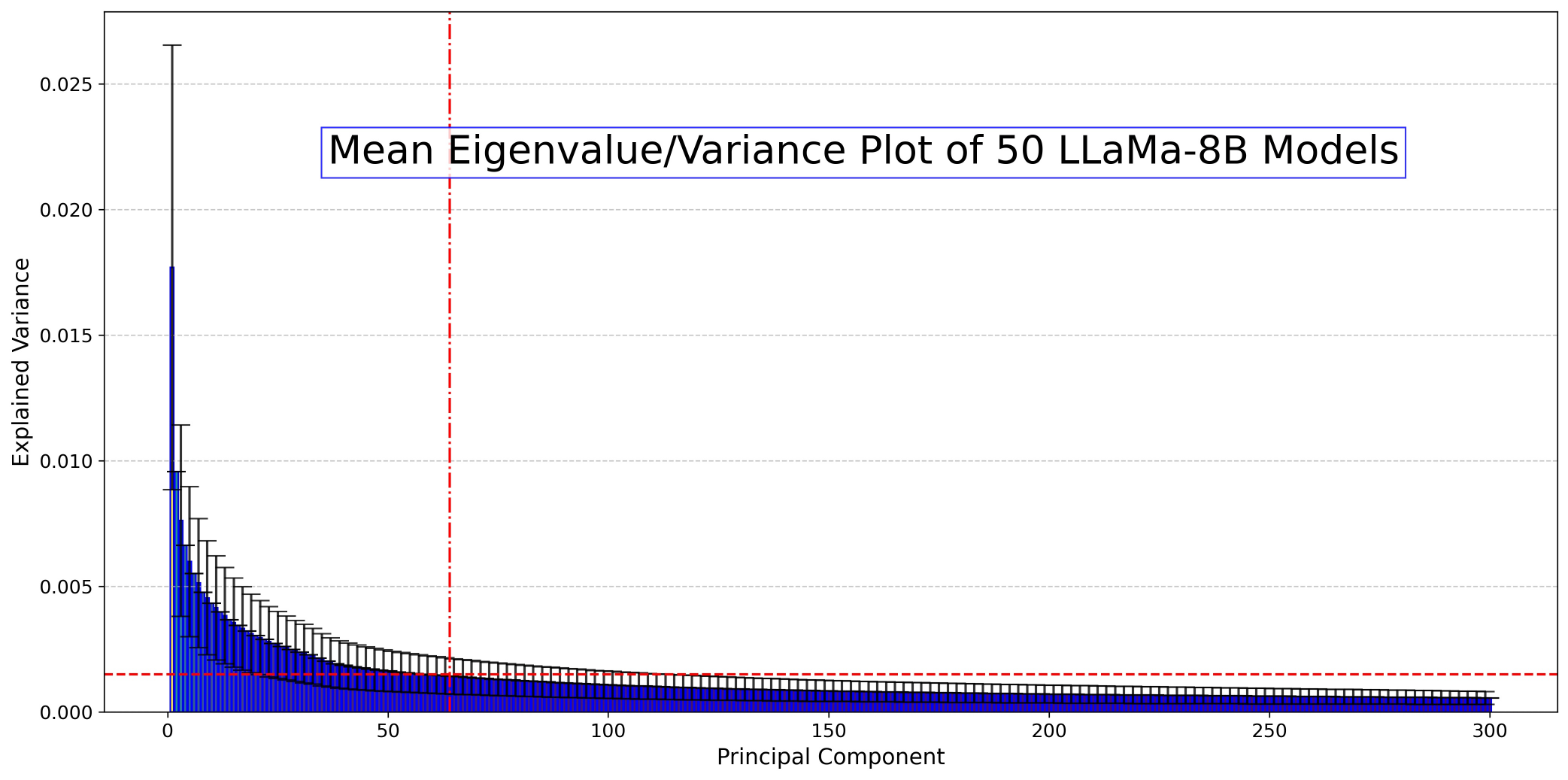}
  \caption{Spectral analysis of 50 LLaMA-3-8B model: Aggregated eigenvalue (scree) plot across $50$ LlaMa-8B models and all layers. The plot demonstrates that the majority of the variance is consistently captured by few top principal directions, indicating the presence of a shared low-dimensional universal subspace.}
  \label{fig:llama_all}
\end{figure}

\begin{figure}[htb]
  \centering
  \fbox{\includegraphics[width=0.9\textwidth]{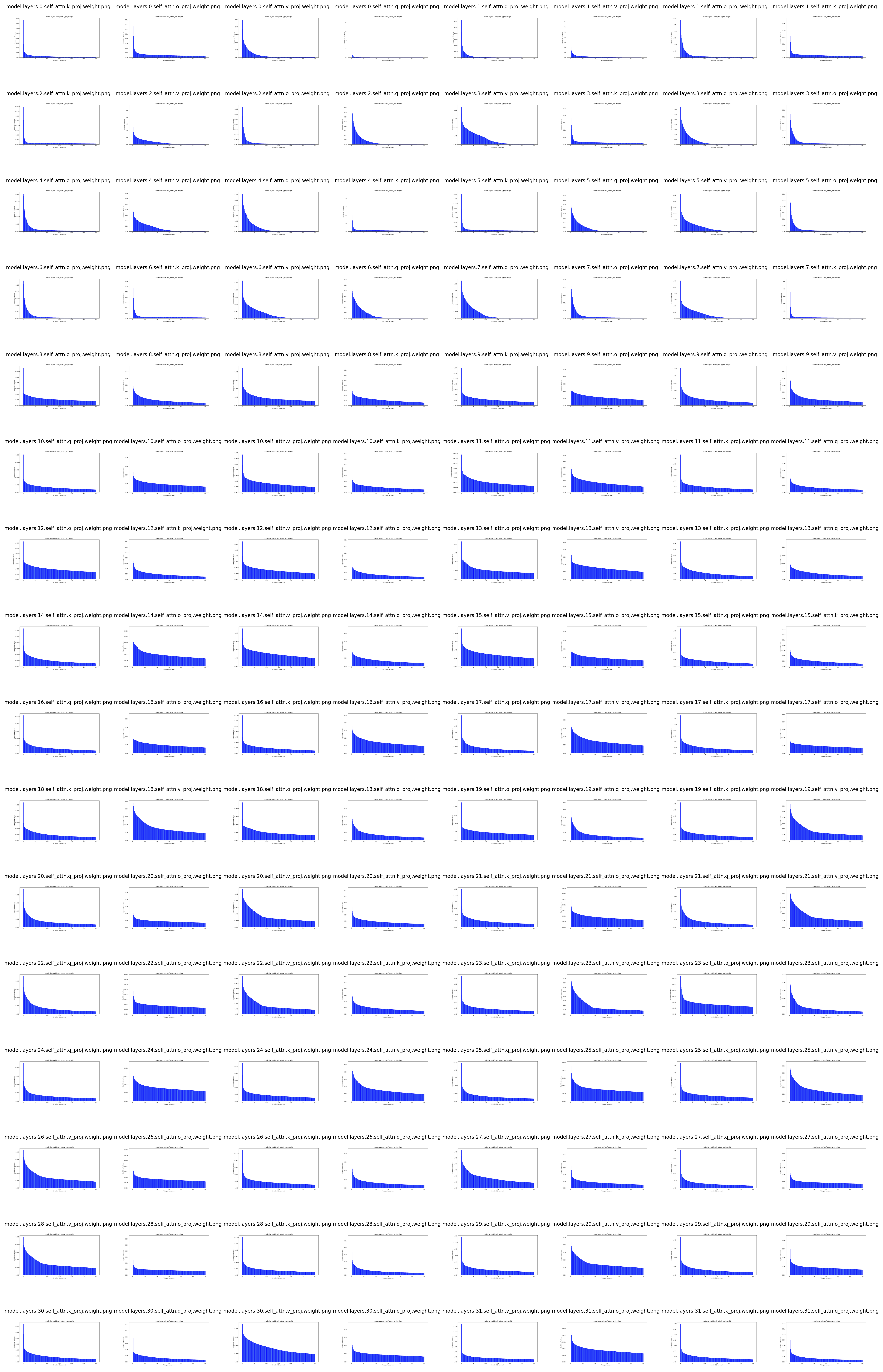}}
  \caption{Layerwise Scree Plots for 50 LLaMA-3-8B Models. For enhanced clarity, each subplot presents a truncated view of the total possible principal directions. These plots consistently demonstrate that the dominant information, as represented by explained variance, resides within a small number of leading principal directions for all models. Components beyond this initial set are characterized by eigenvalues approaching zero, signifying their redundancy for the universal subspace.}
  \label{fig:llama_ind}
\end{figure}
\begin{table}[h]
\caption{Models from HuggingFace used for the Universal LlaMa3-8B subspace extraction}
\label{tab:llama_data}
\resizebox{\textwidth}{!}{%
\begin{tabular}{|l|l|l|l|}
\hline
Meta-Llama-3-8B-Instruct-Jailbroken & Llama-3-13B-Instruct & large\_crafting\_sft\_success & suzume-llama-3-8B-multilingual \\
\hline
summary-llama3-8b-f16-full & Llama-3-13B-Instruct-v0.1 & Llama-3-8B-ProLong-64k-Base & LLaMAntino-3-ANITA-8B-Inst-DPO-ITA \\
\hline
ai-medical-model-32bit & filtered\_crafting\_train\_data\_shorter\_length & Llama-3-portuguese-Tom-cat-8b-instruct & Llama-3-MAAL-8B-Instruct-v0.1 \\
\hline
Human-Like-LLama3-8B-Instruct & LLaMA-3-8B-Instruct-TR-DPO & CabraLlama3-8b & chartgpt-llama3 \\
\hline
KoLlama-3-8B-Instruct & honeypot-llama3-8B & Llama-SEA-LION-v2-8B & TR \\
\hline
Llama3-8B-Instruct-Turkish-Finetuned & Llama-3-15B-Instruct-zeroed & Llama-3-8B-Instruct-TAR-Bio-v2 & Bio-Medical-Llama-3-8B \\
\hline
filtered\_construction\_train\_data & shisa-v1-llama3-8b & REFUEL-Llama-3-Armo-iter\_1 & llama3-instrucTrans-enko-8b \\
\hline
Llama-3-8B-Instruct-Ja & llama3-passthrough-chat & RoLlama3-8b-Instruct & Lloro-SQL \\
\hline
Summary\_L3\_1000steps\_1e7rate\_SFT2 & CyberSentinel & Meta-Llama-3-8B-Instruct-function-calling-json-mode & MARS \\
\hline
Llama-3-8B-Instruct-Finance-RAG & LLaMA3-Instruct-8B-FR-Spec & Llama-3-8B-Japanese-Instruct & Llama3-8B-Chinese-Chat \\
\hline
llama-3-chinese-8b-instruct-v2 & Athene-RM-8B & Llama-3-OffsetBias-RM-8B & large\_cooking\_sft\_success \\
\hline
suzume-llama-3-8B-japanese & llama-3-chinese-8b-instruct-v3 & Waktaverse-Llama-3-KO-8B-Instruct & llama-3-8b-gpt-4o-ru1.0 \\
\hline
Llama-3-Aplite-Instruct-4x8B-MoE & Llama-3-8B-Instruct-DPO-v0.3 &  &  \\
\hline
\end{tabular}
}
\end{table}

\paragraph{Universal Flan-T5 Experiment Details} We collected Flan-T5 models fine-tuned on individual datasets from the GLUE~\citep{glue} benchmark. We extract the joint subspace from these models and trends similar to those observed above are seen. This shows that across diverse datasets and tasks a low-rank subspace emerges.


{\tiny  
\begin{longtable}{|p{0.4\textwidth}|p{0.4\textwidth}|}
\caption{Finetuned Flan-T5 Models from HuggingFace used for the Universal Flan-T5 subspace extraction}
\label{tab:flan_models} \\
\hline
tanganke/flan-t5-base\_glue-cola & tanganke/flan-t5-base\_glue-mnli \\
\hline
tanganke/flan-t5-base\_glue-mrpc & tanganke/flan-t5-base\_glue-qnli \\
\hline
tanganke/flan-t5-base\_glue-rte & tanganke/flan-t5-base\_glue-qqp \\
\hline
tanganke/flan-t5-base\_glue-sst2 & tanganke/flan-t5-base\_glue-stsb \\
\hline
\end{longtable}}

\subsection{Ablating number of models and subspace effectiveness}
Although this is implicitly addressed through our large-scale experiments (500 ViTs, 500 Mistral-7B and 300 Stable Diffusion LoRAs, 50 LLaMA3-8B, 177 GPT-2s, Flan-T5, and ResNet50 models) in all Figures and Tables, which demonstrate consistent behavior at different scales. \cref{thm:twolevel} provides insights on the saturation dynamics where we see that the rate of convergence of the shared subspace to the true subspace is in the order \(O(1/T)\), where T is the number of tasks, indicating increasingly effective coverage as T increases. In practice, the minimum number of models per architecture needed to achieve saturation point depends on the quality of the trained models, the diversity of data they have been trained on, and on the architecture itself. Ablating these would require access to all the data for all the models, and very careful training on every training for each data, and then running permutation with all possible combinations of models. All of this is out of reach for most researchers simply due to time, data and compute constraints. We, however, do provide an initial ablation here. For LoRA models shown in \cref{tab:mistral_models}, we choose 9 random (OOD) tasks (39	,190	,280	,290	,391	,442	,1342	,1391	,1598) and extract the Universal Subspace from rest of the the tasks, sampled randomly for increasing number of models. The coefficients for OOD tasks are analytically reconstructed to effectively evaluate the universal subspace created from varying number of models. \cref{tab:ablate_nmodel} shows that the adequate principal components are quickly extracted, and increasing the number of models has diminishing returns.

\begin{table}
    \centering
    \caption{Lots of LoRAs (Mistral-7B) OOD evaluation per increasing number of models used to extract Universal Subspace}
    \label{tab:ablate_nmodel}
    \small
    \begin{tabular}{@{}lcc@{}}
        \toprule
        \textbf{Method} & \textbf{Model Number} & \textbf{Rouge-L Score} \\
        \midrule
        Normal Model & -  & 73.7 \\
        Universal model & 50  & 55.8 \\
        Universal model & 150  & 66.1 \\
        Universal model & 250  & 71.9 \\
        Universal model & 450  & 72.3 \\
        \bottomrule
    \end{tabular}
\end{table}

\section{Finding universal subspaces and applying them to future tasks}
\label{sec:newtask_apx}
In this section, we present two tasks, GLUE~\citep{glue} and Image Classification. For each experiment, the joint subspace is created using all other models in subset. For Image Classification, we use $k=4$ and train only 8 epochs using learning rate of 1e-4. Importantly, only the coefficients are trained for the experiment. It is important to note that our shared subspace model performs quite well despite using very few (4-5) models to extract the subspace. For GLUE, we use 16-32 components for our subspace, with learning rate of 4e-4, batch size of 64, and 30-80 epochs for each task. In addition, it is likely that our model might perform similarly or better if trained longer or with optimized hyperparameters.
\paragraph{Compute Resources} We conduct all our experiments using a single A5000 GPU, and a CPU with 8 workers. For the universal subspace extraction, all calculation can be done on the CPU. However, GPU would increase the speed of calculation as the layerwise subspace extraction can be parallelized.
\FloatBarrier
\section{Discussion and Broader Impact}
Our findings suggest that deep neural networks trained across diverse tasks and modalities systematically converge to shared, low-dimensional subspaces within their parameter space. The existence of such universal subspaces challenges conventional assumptions about the independence and diversity of model and task-specific finetuning trajectories. Instead, it highlights a powerful regularity in the way deep models encode task-specific knowledge - one that can be exploited for significantly improved training and deployment efficiency. By leveraging these subspaces, we demonstrate that models can be adapted to new tasks by learning only a small number of coefficients, rather than retraining or storing full sets of weights. This facilitates more robust multi-task learning, model merging, and scalable fine-tuning, with theoretical guarantees and empirical validation across multiple architectures.

The broader societal impact of this work is substantial. Our approach enables large-scale models to be reused and extended with dramatically reduced computational overhead, addressing both the financial and environmental costs associated with training and deploying deep learning systems. This contributes directly to the goals of sustainable and accessible AI. By lowering the hardware and energy requirements for adaptation and inference, we empower under-resourced researchers, institutions, and communities to build upon state-of-the-art models without needing extensive compute infrastructure. Furthermore, by supporting modular model design and data-free model merging, our work lays the foundation for more interpretable, maintainable, and equitable AI systems.

\end{document}